\theoremstyle{plain}
\newtheorem{theorem}{Theorem}[section]
\newtheorem{proposition}[theorem]{Proposition}
\theoremstyle{definition}
\theoremstyle{remark}
\newcommand{\tr}{\text{tr}}
\newcommand{\Var}{\text{Var}}
\def\num#1{%
        \raisebox{.9pt}{\textcircled{\raisebox{-.9pt}{#1}}}%
}
\definecolor{tabblue}{RGB}{76,156,201}
\newcommand{\longdash}[1][2em]{%
  \makebox[#1]{$\m@th\smash-\mkern-7mu\cleaders\hbox{$\mkern-2mu\smash-\mkern-2mu$}\hfill\mkern-7mu\smash-$}}
\newcommand{\omitskip}{\kern-\arraycolsep}
\newcommand{\cmark}{\ding{51}}%
\newcommand{\xmark}{\ding{55}}%
\definecolor{lightgraysquare}{rgb}{0.7,0.7,0.7}
\definecolor{dkgreen}{rgb}{0,0.6,0}
\definecolor{dkred}{rgb}{0.6,0.0,0}
\definecolor{gray}{rgb}{0.5,0.5,0.5}
\definecolor{mauve}{rgb}{0.58,0,0.82}
\definecolor{lightgray}{HTML}{EEEEEE}
\tiny\color{gray},
\newcommand{\boldtheta}{{\boldsymbol{\theta}}}
\newcommand{\boldepsilon}{\boldsymbol{\epsilon}}
\newcommand{\boldxi}{\boldsymbol{\xi}}
\newcommand{\bolds}{\boldsymbol{s}}
\newcommand{\boldx}{\boldsymbol{x}}
\newcommand{\boldc}{\mathbf{c}}
\newcommand{\boldA}{\mathbf{A}}
\newcommand{\boldu}{\mathbf{u}}
\newcommand{\boldg}{\boldsymbol{g}}
\newcommand{\boldv}{\boldsymbol{v}}
\newcommand{\boldI}{\mathbf{I}}
\newcommand{\boldzero}{\boldsymbol{0}}
\newcommand{\boldone}{\boldsymbol{1}}
\newcommand{\hatg}{\hat{\boldg}}
\newcommand{\ges}{\hat{\boldg}^{\text{ES}}}
\newcommand{\gpes}{\hat{\boldg}^\text{PES}}
\newcommand{\gessingle}{\hat{\boldg}^\text{ES-Single}}
\newcommand{\gesgen}{\hat{\boldg}^\text{ES-Gen}}
\newcommand{\gesanti}{\hat{\boldg}^{\text{ES-A}}}
\def\gpesanti{%
  \@ifnextchar^%
    {\@gpesanti}
    {\@gpesanti^{}}%
}
\def\@gpesanti^#1{%
  % \hat{\boldg}^{{\text{PES-A}}^{#1}}%
  \hat{\boldg}^{\text{PES-A} #1}%
}
\def\gesgen{%
  \@ifnextchar^%
    {\@gesgen}
    {\@gesgen^{}}%
}
\def\@gesgen^#1{%
  % \hat{\boldg}^{{\text{PES-A}}^{#1}}%
  \hat{\boldg}^{\text{ES-Gen} #1}%
}
\icmltitlerunning{ES-Single}
\begin{document}

\twocolumn[
\icmltitle{Low-Variance Gradient Estimation in \\Unrolled Computation Graphs with ES-Single}

\begin{icmlauthorlist}
\icmlauthor{Paul Vicol}{google}
\icmlauthor{Zico Kolter}{cmu,bosch}
\icmlauthor{Kevin Swersky}{google}
\end{icmlauthorlist}

\icmlaffiliation{google}{Google Brain}
\icmlaffiliation{cmu}{Carnegie Mellon University}
\icmlaffiliation{bosch}{Bosch Center for AI}

\icmlcorrespondingauthor{Paul Vicol}{paulvicol@google.com}
\icmlkeywords{evolution strategies, ES-Single, persistent evolution strategies, unrolled computation graphs, gradient estimation, low-variance}
\vskip 0.3in
]

\printAffiliationsAndNotice{}  % leave blank if no need to mention equal contribution
% \printAffiliationsAndNotice{\icmlEqualContribution} % otherwise use the standard text.

\begin{abstract}
We propose an evolution strategies-based algorithm for estimating gradients in unrolled computation graphs, called ES-Single. Similarly to the recently-proposed Persistent Evolution Strategies (PES), ES-Single is unbiased, and overcomes chaos arising from recursive function applications by smoothing the meta-loss landscape. ES-Single samples a single perturbation per particle, that is kept fixed over the course of an inner problem (e.g., perturbations are not re-sampled for each partial unroll). Compared to PES, ES-Single is simpler to implement and has lower variance: the variance of ES-Single is constant with respect to the number of truncated unrolls, removing a key barrier in applying ES to long inner problems using short truncations. We show that ES-Single is unbiased for quadratic inner problems, and demonstrate empirically that its variance can be substantially lower than that of PES. ES-Single consistently outperforms PES on a variety of tasks, including a synthetic benchmark task, hyperparameter optimization, training recurrent neural networks, and training learned optimizers.
\end{abstract}

\section{Introduction}

Many problems in machine learning involve computing gradients through unrolled computation graphs, including bilevel optimization (such as hyperparameter optimization~\cite{domke2012generic,maclaurin2015gradient,franceschi2017forward,shaban2019truncated} and meta-learning~\cite{bertinetto2018meta,finn2018learning,finn2018probabilistic}), RNN training~\cite{merity2017regularizing}, reinforcement learning~\cite{salimans2017evolution,mania2018simple}, and training learned optimizers~\cite{metz2019understanding,metz2018meta,metz2020tasks,metz2020using,wichrowska2017learned,andrychowicz2016learning,li2016learning,li2017learning}.
In each of these tasks, we wish to learn parameters that govern the evolution of a dynamical system, such that the states produced by the system satisfy some objective function.
For example, in hyperparameter optimization, we aim to tune hyperparameters (e.g., the learning rate or dropout coefficient), such that a model trained using these hyperparameters achieves low loss---in this case, the hyperparameters govern the evolution of the model parameters, that can be interpreted as the states of the dynamical system.
Classic approaches to computing gradients through unrolled computation graphs include reverse-mode~\cite{werbos1990backpropagation} and forward-mode~\cite{williams1990efficient,franceschi2017forward,tallec2017unbiased,menick2021practical} gradient accumulation.
\footnote{We note that another work, developed in parallel and independently, proposes a very similar method~\cite{li2023noise}.}

\begin{figure}
    \centering
    \includegraphics[width=0.85\linewidth]{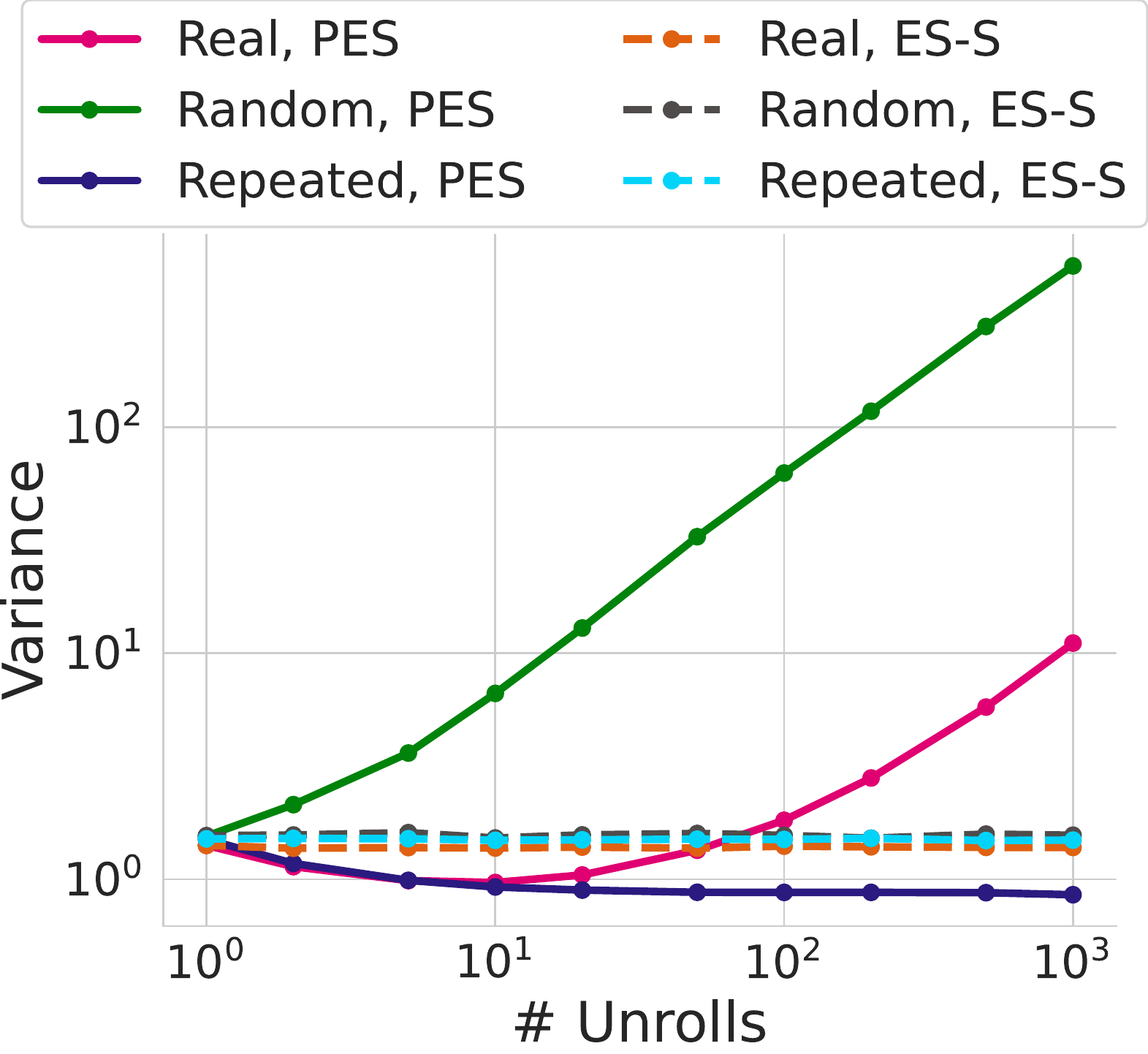}
    \vspace{-0.4cm}
    \caption{Comparing empirical variance measurements for PES (solid curves) and ES-Single (ES-S, dashed lines) on a small LSTM training task.
    Unlike PES, under all conditions the variance of ES-Single is constant as the number of partial unrolls increases.
    }
    \label{fig:variance}
    \vspace{-0.5cm}
\end{figure}

However, gradient-based methods face a fundamental obstacle: the loss landscape resulting from long unrolls is often chaotic and can exhibit near discontinuities~\cite{metz2019understanding,parmas2018pipps,parmas2019unified}, rendering gradients unsuitable~\cite{metz2021gradients}.
One approach to overcome this chaos is to consider Gaussian smoothing of the outer loss surface; the gradient of such a smoothed objective can be computed using evolution strategies (ES)~\cite{rechenberg1973}.
Applying vanilla ES to the full unrolled inner problem yields a useful gradient estimate, but leads to slow outer optimization; in contrast, applying ES to truncated unrolls leads to truncation bias, similarly to truncated BPTT.

\citet{vicol2021unbiased} proposed an ES-based algorithm called Persistent Evolution Strategies (PES), that yields unbiased gradient estimates from truncated unrolls, speeding up meta-optimization by allowing for more frequent outer parameter updates.
PES has a number of desirable characteristics, including unbiasedness and Gaussian smoothing of the outer loss landscape.
However, its variance increases with the number of truncated unrolls per full inner problem, potentially making it impractical to use short truncations for long-horizon problems (for example, truncations of length $K=1$ for problems where $T \geq 1000$).

In this paper, we propose an unbiased gradient estimator for unrolled computation graphs, called ES-Single, that re-uses the same outer parameter perturbations along each step of an unrolled trajectory.
ES-Single has constant variance with respect to the number of partial unrolls per inner problem.
Due to its low variance, ES-Single outperforms PES on a wide range of synthetic and real-world tasks.
In addition, ES-Single is simpler to implement than PES---as it does not require maintaining a perturbation accumulator per particle---and computationally cheaper---as it only samples perturbations at the start of each inner problem, rather than for each truncated unroll.
\paragraph{Contributions.}
\vspace{-0.3cm}
\begin{itemize}
    \item We propose an algorithm for ES-based, unbiased gradient estimation in unrolled computation graphs, called ES-Single. We motivate ES-Single by discussing its relationship to full-unroll ES and PES.
    \item We show that ES-Single can have substantially lower variance than PES, overcoming a key barrier for use in long-horizon inner problems, especially when using short truncated unrolls.
    \item We evaluate ES-Single on a diverse set of tasks, from synthetic problems designed to test unbiasedness, to hyperparameter optimization, RNN training, and meta-training learned optimizers. We found that ES-Single outperformed PES across all tasks we investigated.
\end{itemize}
\vspace{-0.1cm}
We provide JAX code for ES-Single in Appendix~\ref{app:code}, and a \href{https://colab.research.google.com/drive/1fgSzwaIXfJKbYTntEFfNbc2UuTXwEw0A?usp=sharing}{Colab notebook implementation here}.
\vspace{-0.1cm}
\begin{figure*}[t]
    \includegraphics[width=\linewidth]{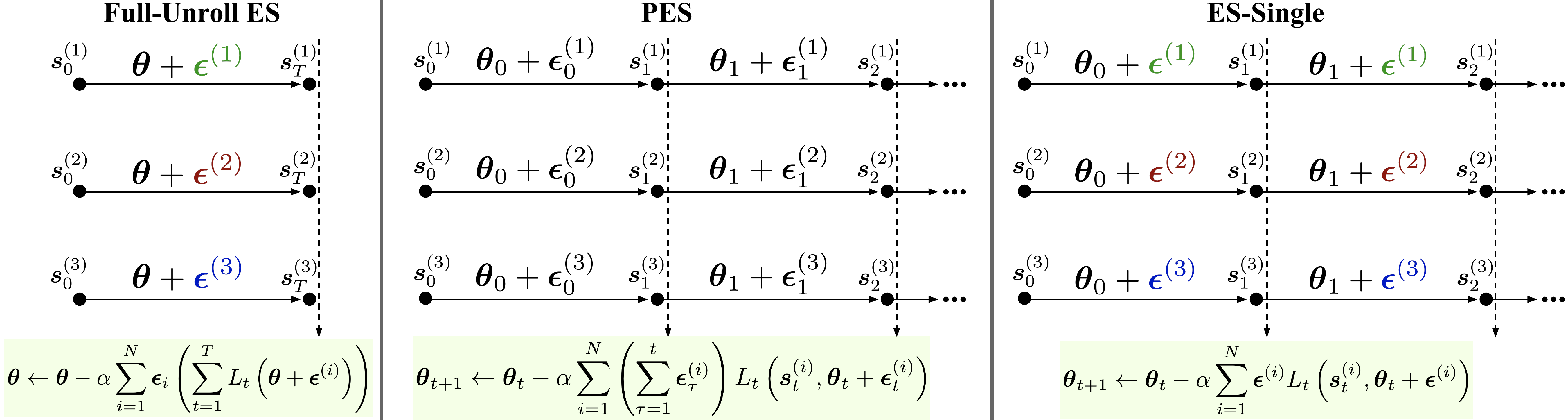}
    \vspace{-0.6cm}
    \caption{\textbf{Comparison of the computation graphs of full-unroll ES (left), PES (middle), and ES-Single (right).} Full-Unroll ES samples perturbation $\boldepsilon^{(i)}$ for particle $i$, and runs a full unroll from state $\bolds_0^{(i)}$ to $\bolds_T^{(i)}$ using perturbed parameters $\boldtheta + \boldepsilon^{(i)}$.
    Both PES and ES-Single split the computation graph into a sequence of partial unrolls, but differ in how perturbations are sampled and how intermediate results are aggregated to update the outer parameters online: PES samples a new perturbation $\boldepsilon^{(i)}_t$ for particle $i$ in each unroll $t$, and sums the perturbations experienced by each particle up to the current point in the inner problem, $\sum_{\tau=1}^t \boldepsilon^{(i)}_\tau$; in contrast, ES-Single samples a single perturbation $\boldepsilon^{(i)}$ per particle at the start of each inner problem---keeping it fixed for the duration of the problem---and does not sum perturbations over time.
    ES-Single can be interpreted as inserting \textit{breakpoints into the full-unroll ES computation}, at which the intermediate losses are aggregated to form a gradient estimate used to update the outer parameters.
    The computation graph for vanilla truncated ES is provided in Appendix~\ref{app:truncated-es}.}
    \label{fig:es-single-diagrams}
    \vspace{-0.2cm}
\end{figure*}
\section{Background}
\label{sec:background}
We follow the problem setup of~\citet{vicol2021unbiased}, considering an unrolled computation graph with state $\bolds_t$ at time $t$, updated via a function $f$ parameterized by $\boldtheta$:
\begin{align}
    \bolds_t = f(\bolds_{t-1}, \boldx_t ; \boldtheta)
\end{align}
where $\boldx_t$ is an optional input at each step (e.g., data).
Many common tasks in machine learning are instances of this problem: for example, when training an RNN, $\bolds_t$ is the hidden state and $\boldtheta$ are the RNN parameters, while for hyperparameter optimization, $\bolds_t$ are the parameters of a neural network being optimized and $\boldtheta$ are hyperparameters such as the learning rate and momentum.
The objective function for optimizing $\boldtheta$ is the sum of per-timestep losses $L_t(\bolds_t ; \boldtheta)$:
\begin{align}
    L(\boldtheta) = \sum_{t=1}^T L_t(\bolds_t ; \boldtheta)
\end{align}
Appendix~\ref{app:notation} summarizes the notation used in this paper.

\paragraph{Chaos and Smoothing.}
Unrolling computation graphs can give rise to chaotic dynamics, which pose fundamental challenges for gradient-based methods~\cite{parmas2019unified}.
Chaos frequently arises in meta-optimization (for example hyperparameter optimization and training learned optimizers), due to nonlinear inner loss surfaces which have many local minima---small changes to the outer parameters may lead to different local minima, that have different meta-loss values.
To overcome chaos, one may consider optimizing a Gaussian-smoothed outer loss, $\tilde{L}(\boldtheta) = \mathbb{E}_{\tilde{\boldtheta} \sim \mathcal{N}(\boldtheta, \sigma^2 \boldI)} \left[ L(\tilde{\boldtheta}) \right]$.
Common approaches for computing the gradient of the smoothed objective $\tilde{L}(\boldtheta)$ include evolution strategies and the reparameterization gradient~\cite{ruiz2016generalized}.

\vspace{-0.2cm}
\paragraph{Vanilla Evolution Strategies.}
Evolution strategies~\cite{rechenberg1973,nesterov2017random} is a method for zeroth-order gradient estimation, that computes a stochastic finite-difference estimate of the gradient as follows:
\begin{align*}
    \ges
    &=
    \frac{1}{\sigma^2} \mathbb{E}_{\boldepsilon \sim \mathcal{N}(\boldzero, \sigma^2 \boldI)} \left[ \boldepsilon L(\boldtheta + \boldepsilon) \right] \\
    &\approx
    \frac{1}{\sigma^2 N} \sum_{i=1}^N \boldepsilon^{(i)} L(\boldtheta + \boldepsilon^{(i)})
\end{align*}
where $N$ is the number of Monte Carlo samples (also called \textit{particles}) used to estimate the expectation.
Antithetic sampling~\citep{mcbook} is a widely-used technique to reduce the variance of ES, that works by sampling pairs of positive and negative perturbations, $\ges = \mathbb{E}_{\boldepsilon \sim \mathcal{N}(\boldzero, \sigma^2 \boldI)} \left[ \boldepsilon (L(\boldtheta + \boldepsilon) - L(\boldtheta - \boldepsilon)) \right]$.
In practice, one typically uses a Monte Carlo estimate as follows:
\begin{equation}
    \label{eq:es-antithetic}
    \gesanti = \frac{1}{N \sigma^2} \sum_{i=1}^{N/2} \boldepsilon^{(i)} (L(\boldtheta + \boldepsilon^{(i)}) - L(\boldtheta - \boldepsilon^{(i)}))
\end{equation}
where $N$ is even, and $\boldepsilon^{(i)} \sim \mathcal{N}(\boldzero, \sigma^2 \boldI)$.
Unfortunately, applying vanilla ES to long inner problems leads to slow updates, while using partial unrolls leads to truncation bias~\cite{metz2019understanding}.

\paragraph{Persistent Evolution Strategies (PES).}
PES~\cite{vicol2021unbiased} is an ES-based approach for unbiased gradient estimation using partial unrolls of the inner problem.
The PES gradient estimator is defined as follows, where $\boldtheta_t$ denotes the application of the shared parameters $\boldtheta$ at step $t$ and the loss $L_t$ is written explicitly as a function of all applications of $\boldtheta$ up to the current time, rather than as a function of the state $\bolds_t$ that implicitly depends on past $\boldtheta$'s:
\vspace{-0.3cm}
\begin{align*}
    \gpes
    =
    \frac{1}{\sigma^2} \mathbb{E}_{\boldepsilon} \left[ \sum_{t=1}^T \left( \sum_{\tau=1}^t \boldepsilon_{\tau} \right) L_t(\boldtheta_1 + \boldepsilon_1, \dots, \boldtheta_t + \boldepsilon_t) \right]
\end{align*}
%\vspace{-0.2cm}
Here, the expectation is over an $T \times P$ matrix of per-timestep perturbations.
Intuitively, PES maintains a collection of particles, and applies a different outer parameter perturbation for each partial unroll of the inner problem.
The particles are not reset after each partial unroll, and the perturbations experienced by each particle are accumulated over the course of an inner problem.
The particle states and accumulators reset at the start of a new inner problem.
\citet{vicol2021unbiased} showed that the variance of PES depends on the covariance between gradients of each loss term $L_t$ with respect to per-timestep parameters $\boldtheta_{\tau}$.
They analyzed several scenarios with different covariance assumptions, and found that in a real-world scenario, the variance increases as the number of unrolls per inner problem increases (Figure~\ref{fig:variance}); thus, while PES works well for tasks with an intermediate number of unrolls (e.g., 10-100 unrolls), it struggles with longer tasks due to variance.

\begin{figure*}[t]
\vspace{-0.3cm}
\begin{minipage}[t]{0.48\textwidth}
\begin{algorithm}[H]
  \caption{Truncated Evolution Strategies (ES) applied to partial unrolls of a computation graph.}
  \label{alg:es-again}
\begin{algorithmic}
    \State \textbf{Input:} $\bolds_0$, initial state
    \State \hspace{2.7em} $K$, truncation length for partial unrolls
    \State \hspace{2.7em} $N$, number of particles
    \State \hspace{2.9em} $\sigma$, standard deviation of perturbations
    \State \hspace{2.9em} $\alpha$, learning rate for outer optimization
    \State Initialize $\bolds = \bolds_0$ ${\color{white}\bolds^{(i)} = \bolds_0}$
    \While {inner problem not finished}
        \State $\ges \gets \boldzero$
        \For{$i=1,\dots, N$}
            \State $\boldepsilon^{(i)} = 
                \left\{\begin{array}{lcl}
                	\text{draw from } \mathcal{N}(0, \sigma^2 \boldI) &  & i \text{ odd} \\
                	-\boldepsilon^{(i-1)} &  & i \text{ even}
                \end{array}\right.$
            \State $\hat{L}_K^{(i)} \gets \text{unroll}(\bolds, \boldtheta + \boldepsilon^{(i)}, K)$
            \State $\ges \gets \ges + \boldepsilon^{(i)} \hat{L}_K^{(i)}$
        \EndFor
        \State $\ges \gets \frac{1}{N \sigma^2} \ges$
        \State $\bolds \gets \text{unroll}(\bolds, \boldtheta, K)$
        \State $\boldtheta \gets \boldtheta - \alpha \ges$
    \EndWhile
\end{algorithmic}
\end{algorithm}
\end{minipage}
\hfill
\begin{minipage}[t]{0.48\textwidth}
\begin{algorithm}[H]
  \caption{ES with a single perturbation per particle re-applied in each truncated unroll (ES-Single).}
  \label{alg:es-single}
\begin{algorithmic}
    \State \textbf{Input:} $\bolds_0$, initial state
    \State \hspace{2.7em} $K$, truncation length for partial unrolls
    \State \hspace{2.7em} $N$, number of particles
    \State \hspace{2.9em} $\sigma$, standard deviation of perturbations
    \State \hspace{2.9em} $\alpha$, learning rate for outer optimization
    \State Initialize ${\color{dkred}\bolds^{(i)}} = \bolds_0$ for {\color{dkred} $i \in \{1, \dots, N\}$}
    \For{${\color{dkred}i=1,\dots,N}$}
        \State ${\color{dkred}\boldepsilon^{(i)} = 
                    \left\{\begin{array}{lcl}
                    	\text{draw from } \mathcal{N}(0, \sigma^2 \boldI) &  & i \text{ odd} \\
                    	-\boldepsilon^{(i-1)} &  & i \text{ even}
                    \end{array}\right.}$
    \EndFor
    \hspace{-0.5cm}\While {inner problem not finished}
        \State $\gessingle \gets \boldzero$
        \For{$i=1,\dots, N$}
            \State {\color{dkred} $\bolds^{(i)}$}, $\hat{L}_K^{(i)} \gets \text{unroll}({\color{dkred} \bolds^{(i)}}, \boldtheta + \boldepsilon^{(i)}, K)$
            \State $\gessingle \gets \gessingle + {\color{dkred} \boldepsilon^{(i)}} \hat{L}_K^{(i)}$
        \EndFor
        \State $\gessingle \gets \frac{1}{N \sigma^2} \gessingle$
        \State {\color{white}$s \gets \text{unroll}(\bolds, \theta, K)$}
        \State $\boldtheta \gets \boldtheta - \alpha \gessingle$
    \EndWhile
\end{algorithmic}
\end{algorithm}
\end{minipage}
\vspace{-0.2cm}
\caption{\textbf{A comparison of the vanilla truncated ES and ES-Single gradient estimators}, applied to partial unrolls of a computation graph.
The conditional statement for $\boldepsilon^{(i)}$ is used to implement antithetic sampling.
Differences between the two algorithms are {\color{dkred} highlighted in red.}
While ES samples different perturbations for each particle in each partial unroll, ES-Single samples one perturbation per particle before the inner problem starts, and re-applies the same perturbation in each partial unroll comprising the inner problem.
}
\label{fig:both-algorithms}
\vspace{-0.2cm}
\end{figure*}
\vspace{-0.2cm}
\section{ES-Single}

We propose an algorithm for ES-based gradient estimation in unrolled computation graphs, that is simpler to implement than PES, has low variance, and performs well on a variety of tasks.
To introduce this algorithm, we first revisit full-unroll ES, which computes $\frac{1}{\sigma^2} \mathbb{E}_{\boldepsilon}[ \boldepsilon L(\boldtheta + \boldepsilon) ] \approx \frac{1}{N \sigma^2} \sum_{i=1}^N \boldepsilon_i L(\boldtheta + \boldepsilon_i)$.
% This estimator is inspired by full-unroll ES: 
In full-unroll ES, we initialize $N$ particles, sample an outer parameter perturbation $\boldepsilon_i$ for each particle, unroll the full inner problem using the perturbed outer parameters $\boldtheta + \boldepsilon_i$, and aggregate the results to form the gradient estimate.
Full-unroll ES yields useful gradient estimates, but is impractical due to the high latency between parameter updates (which is especially problematic for tasks such as hyperparameter optimization, where the inner problem typically has length $T > 10^3$).
However, one can consider splitting the computation graph into a series of truncated unrolls, and using the intermediate results obtained after each unroll to update the outer parameters more frequently.
The ES-Single algorithm inserts breakpoints in the inner optimization, at which the intermediate results (e.g., the losses from the current unroll) are aggregated to form a gradient estimate that is used to update the outer parameters.
Mathematically, the gradient estimator for ES-Single is equivalent to the full-unroll ES gradient (shown here using antithetic sampling),
\begin{equation}
\gessingle = \frac{1}{\sigma^2} \mathbb{E}_{\boldepsilon} \left[ \boldepsilon (L(\boldtheta + \boldepsilon) - L(\boldtheta - \boldepsilon)) \right],
\end{equation} where $\boldepsilon \sim \mathcal{N}(\boldzero, \sigma^2 \boldI)$.
However, ES-Single differs from full ES \textit{algorithmically}: full ES treats the inner problem as a black box, ignoring its iterative nature; in contrast, ES-Single treats it as a \textit{gray-box}, which leverages this structure by constructing gradient estimates from intermediate progress, and updating the outer parameters online.

ES-Single samples perturbations $\boldepsilon^{(i)}$ for each particle once, at the start of an inner problem, which are then kept fixed for the entirety of the inner problem---the same perturbations are applied to the outer parameters at each partial unroll.
Because ES-Single is mathematically equivalent to full-unroll ES, it is unbiased by construction:
\begin{tcolorbox}[colback=tabblue!15,boxrule=0pt,colframe=white,coltext=black,arc=2pt,outer arc=0pt,valign=center]%,valign=center]
\begin{proposition}[ES-Single is unbiased]
Assume that $L(\boldtheta)$ is quadratic and $\nabla_{\boldtheta} L(\boldtheta)$ exists. Then, the ES-Single gradient estimator using antithetic sampling is unbiased, that is, $\text{bias}(\hat{\boldg}^{\text{ES-Single}}) = \mathbb{E}_{\boldepsilon} \left[ \hat{\boldg}^{\text{ES-Single}} \right] - \nabla_{\boldtheta} L(\boldtheta) = 0$.
\end{proposition}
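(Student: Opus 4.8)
The plan is to reduce the statement to the classical fact that antithetic evolution strategies recovers the exact gradient of a quadratic, and then to invoke the algorithmic equivalence between ES-Single and full-unroll antithetic ES that was already argued above (the aggregated estimator produced by ES-Single is identically $\gesanti$ applied to $L(\boldtheta) = \sum_t L_t$). Once that equivalence is granted, the bias computation is short.

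First I would fix a generic quadratic parametrization $L(\boldtheta) = \tfrac12 \boldtheta^\top \boldA \boldtheta + \boldb^\top \boldtheta + c$, taking $\boldA$ symmetric without loss of generality, so that $\nabla_{\boldtheta} L(\boldtheta) = \boldA \boldtheta + \boldb$. The key observation is the expansion of the antithetic difference $L(\boldtheta+\boldepsilon) - L(\boldtheta-\boldepsilon)$: the constant term $c$ and the term quadratic in $\boldepsilon$, namely $\tfrac12 \boldepsilon^\top \boldA \boldepsilon$, appear with the same sign in $L(\boldtheta+\boldepsilon)$ and in $L(\boldtheta-\boldepsilon)$ and therefore cancel, while the term linear in $\boldepsilon$ doubles. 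This leaves exactly $L(\boldtheta+\boldepsilon) - L(\boldtheta-\boldepsilon) = 2(\boldA\boldtheta+\boldb)^\top\boldepsilon = 2\,\nabla_{\boldtheta}L(\boldtheta)^\top\boldepsilon$, with no residual curvature term. This cancellation of the even-order part is the whole point of antithetic sampling and is precisely what the quadratic hypothesis buys.

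Next I would multiply by $\boldepsilon$ and take the expectation over $\boldepsilon \sim \mathcal{N}(\boldzero, \sigma^2 \boldI)$. Since $\nabla_{\boldtheta}L(\boldtheta)^\top\boldepsilon$ is a scalar, $\boldepsilon\,(\nabla_{\boldtheta}L(\boldtheta)^\top\boldepsilon) = \boldepsilon\boldepsilon^\top\,\nabla_{\boldtheta}L(\boldtheta)$, and $\mathbb{E}_{\boldepsilon}[\boldepsilon\boldepsilon^\top] = \sigma^2\boldI$, so $\mathbb{E}_{\boldepsilon}[\boldepsilon(L(\boldtheta+\boldepsilon) - L(\boldtheta-\boldepsilon))] = 2\sigma^2\,\nabla_{\boldtheta}L(\boldtheta)$. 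Applying the normalization in the definition of $\gessingle$ then gives $\mathbb{E}_{\boldepsilon}[\gessingle] = \nabla_{\boldtheta}L(\boldtheta)$, i.e. zero bias.

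The main obstacle I anticipate is not this algebra but making the reduction step airtight: one must verify that summing the online per-truncation contributions $\boldepsilon^{(i)}\hat L_K^{(i)}$ over the breakpoints really reconstructs $\boldepsilon^{(i)}L(\boldtheta + \boldepsilon^{(i)})$, which relies on the state being carried over between truncations so the perturbed trajectory is globally consistent, and on $L = \sum_t L_t$ being additive across timesteps; only the same fixed $\boldepsilon^{(i)}$ appearing in every truncation makes this exact. It is also worth flagging (e.g., as a remark) that the quadratic hypothesis is essential: for a general smooth $L$ the even-order Taylor terms no longer cancel, and ES-Single is then unbiased for the Gaussian-smoothed objective $\tilde L(\boldtheta)$ rather than for $\nabla_{\boldtheta}L(\boldtheta)$ itself, with the two coinciding here because smoothing a quadratic only shifts it by the constant $\tfrac{\sigma^2}{2}\Tr(\boldA)$.
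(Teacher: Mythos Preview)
Your proposal is correct and follows essentially the same approach as the paper: expand the antithetic difference for a quadratic to obtain $2\boldepsilon^\top\nabla_{\boldtheta}L(\boldtheta)$ after the even-order cancellation, then use $\mathbb{E}_{\boldepsilon}[\boldepsilon\boldepsilon^\top]=\sigma^2\boldI$ to conclude unbiasedness. Your explicit $(\boldA,\boldb,c)$ parametrization is equivalent to the paper's second-order Taylor expansion, and your extra care about the reduction step (that summing the per-truncation contributions reconstructs the full-unroll antithetic estimate) and the closing remark on why the quadratic hypothesis is needed are nice additions that the paper's proof leaves implicit.
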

\begin{proof}
The proof is provided in Appendix~\ref{app:unbiased-proof}.
\end{proof}
\end{tcolorbox}
The main difference between ES-Single and truncated ES is that ES-Single maintains separate states for each particle throughout the full inner problem (that are updated in parallel in each partial unroll), rather than collapsing the particles to update a single mean state $\bolds$ after each truncated unroll.
Also, ES-Single differs from PES in two key ways: 1) it uses the same perturbations over all partial unrolls of an inner problem, rather than sampling new perturbations for each partial unroll; and 2) it does not accumulate the perturbations over time, as done in PES.
Thus, ES-Single is simple to implement, and may be slightly cheaper per iteration if the cost of sampling perturbations is high (e.g., for high-dimensional parameters).

\paragraph{Stochastic Computation Graph.}
\begin{figure}[H]
    \hspace{0.8cm} \textbf{ES-Single} \hspace{3.4cm} \textbf{PES} \\[1em]
    \includegraphics[width=0.45\linewidth]{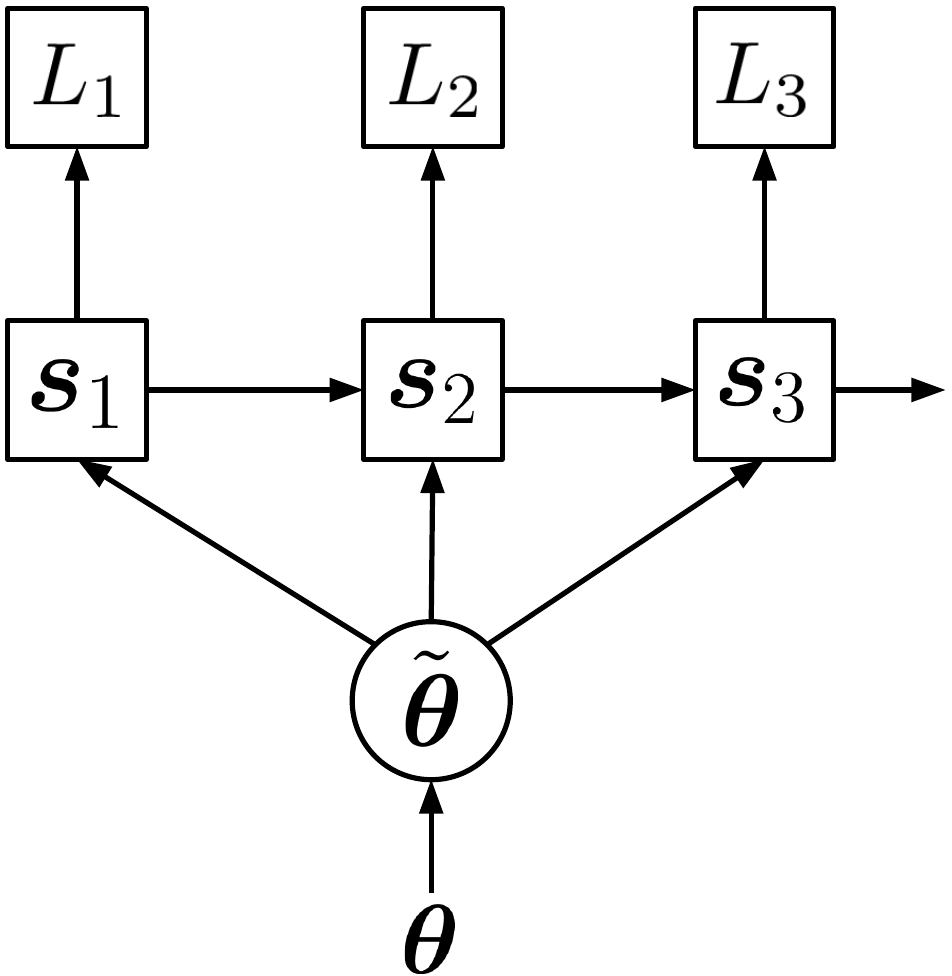}
    \hfill
    \includegraphics[width=0.45\linewidth]{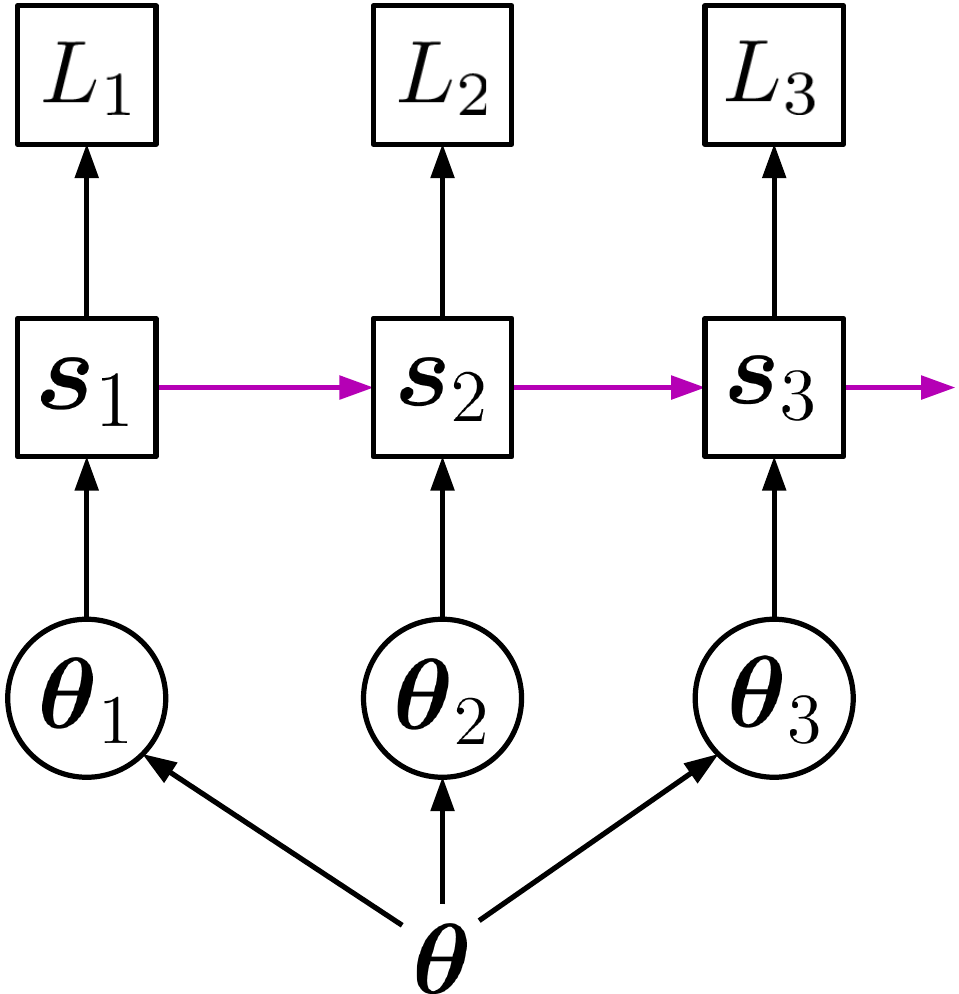}
    \vspace{-0.2cm}
    \caption{Stochastic computation graphs for ES-Single and PES, using the notation from~\citet{schulman2015gradient}. Vanilla truncated ES can be seen as removing the {\color{purple}recurrent connections} from PES.}
    \label{fig:stochastic-computation-graph}
    \vspace{-0.5cm}
\end{figure}

Figure~\ref{fig:stochastic-computation-graph} compares the stochastic computation graphs for ES-Single to those of PES and ES.
In these diagrams, squares represent deterministic nodes, which are functions of their parents; circles represent stochastic nodes (e.g., $\tilde{\boldtheta}$) which are distributed conditionally on their parents; and nodes not in squares or circles (e.g., $\boldtheta$) represent inputs.
ES-Single samples a single perturbed outer parameter $\tilde{\boldtheta} \sim \mathcal{N}(\boldtheta, \sigma^2 \boldI)$ that influences all the states $\bolds_t$ in the unroll, such that all losses $L_t$ are downstream of the node $\tilde{\boldtheta}$.
In contrast, PES perturbs the outer parameters independently for each partial unroll, $\boldtheta_t \sim \mathcal{N}(\boldtheta, \sigma^2 \boldI)$; in this case, the losses downstream of node $\boldtheta_t$ are $\{ L_\tau \}_{\tau=t}^T$.
In Appendix~\ref{app:stochasticcomp}, we provide derivations of each gradient estimator, leveraging Theorem 1 from~\citet{schulman2015gradient}, which gives a generic formula for an unbiased estimator of such graphs.
\vspace{-0.4cm}
\paragraph{Generalization of ES-Single and PES.}
One can also consider a generalization of both ES-Single and PES, that decouples the interval at which we update the perturbation accumulator from the interval at which we update the outer parameters.
In particular, one can introduce another hyperparameter, $\Omega$, that specifies the meta-update interval, while $K$ denotes the interval at which new perturbations are sampled and at which the perturbation accumulator is updated.
Many algorithms of interest can be obtained as special cases, by setting $K$ and $\Omega$ appropriately. Let $T$ be the length of a full inner problem.
Then, 1) if $K = \Omega = T$, we recover full-unroll ES; 2) if $K = \Omega$ and $K < T$, we recover PES; 3) if $K=T$ and $\Omega < T$, we recover ES-Single; and 4) if $K, \Omega < T$ and $\Omega < K$, we obtain a new estimator whose properties lie between the others.
The stochastic computation graph for this generalization, and the derivation of the resulting unbiased estimator, are provided in Appendix~\ref{app:generalization-es-pes}.

\begin{figure*}[t]
    \centering
    \begin{subfigure}[t]{0.36\linewidth}
        \includegraphics[width=\linewidth]{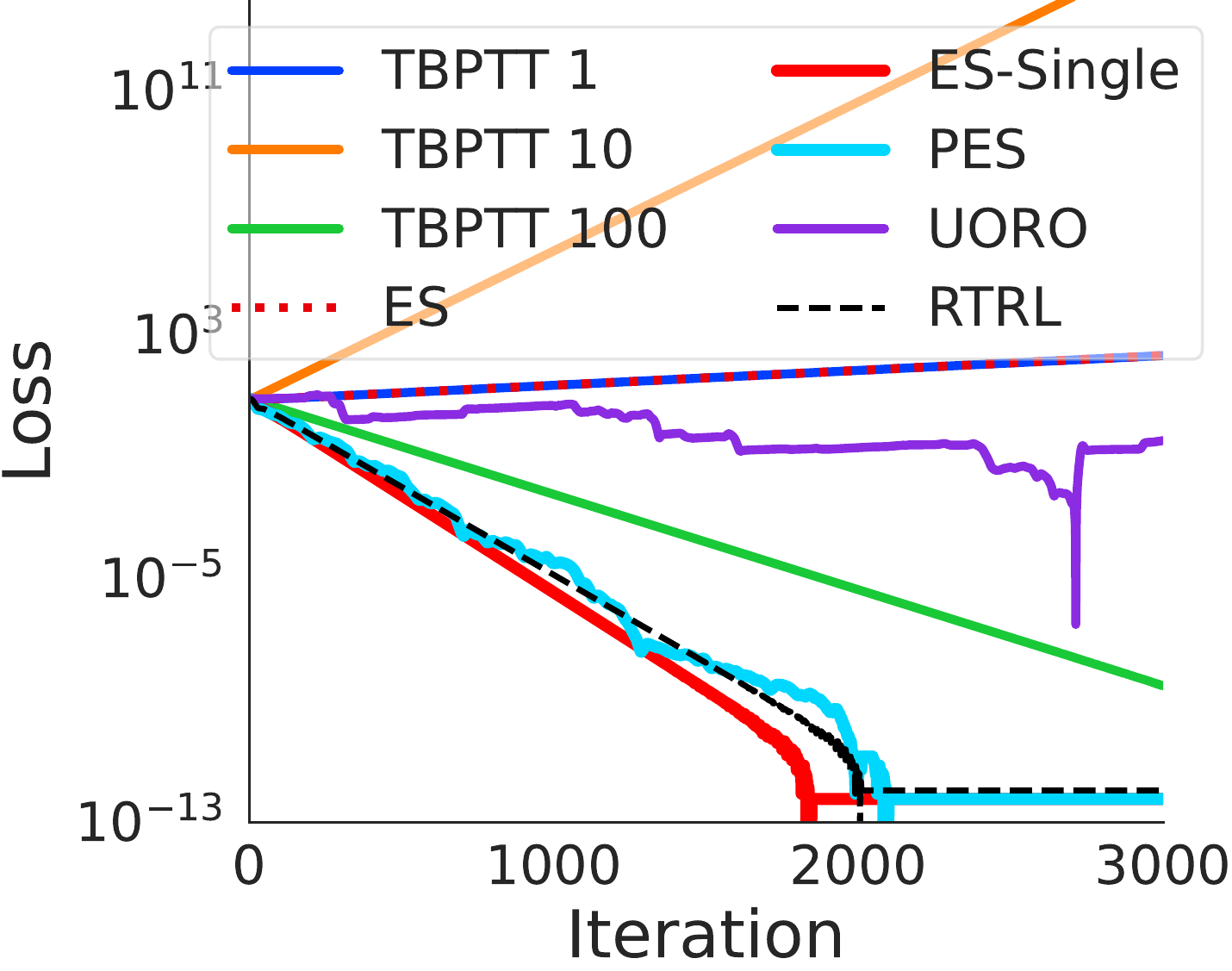}
        \caption{Comparing ES-Single to TBPTT, vanilla truncated ES, PES, UORO, and RTRL.}
        \label{fig:es-single-influence-comparison}
    \end{subfigure}
    \qquad
    \begin{subfigure}[t]{0.36\linewidth}
        \includegraphics[width=\linewidth]{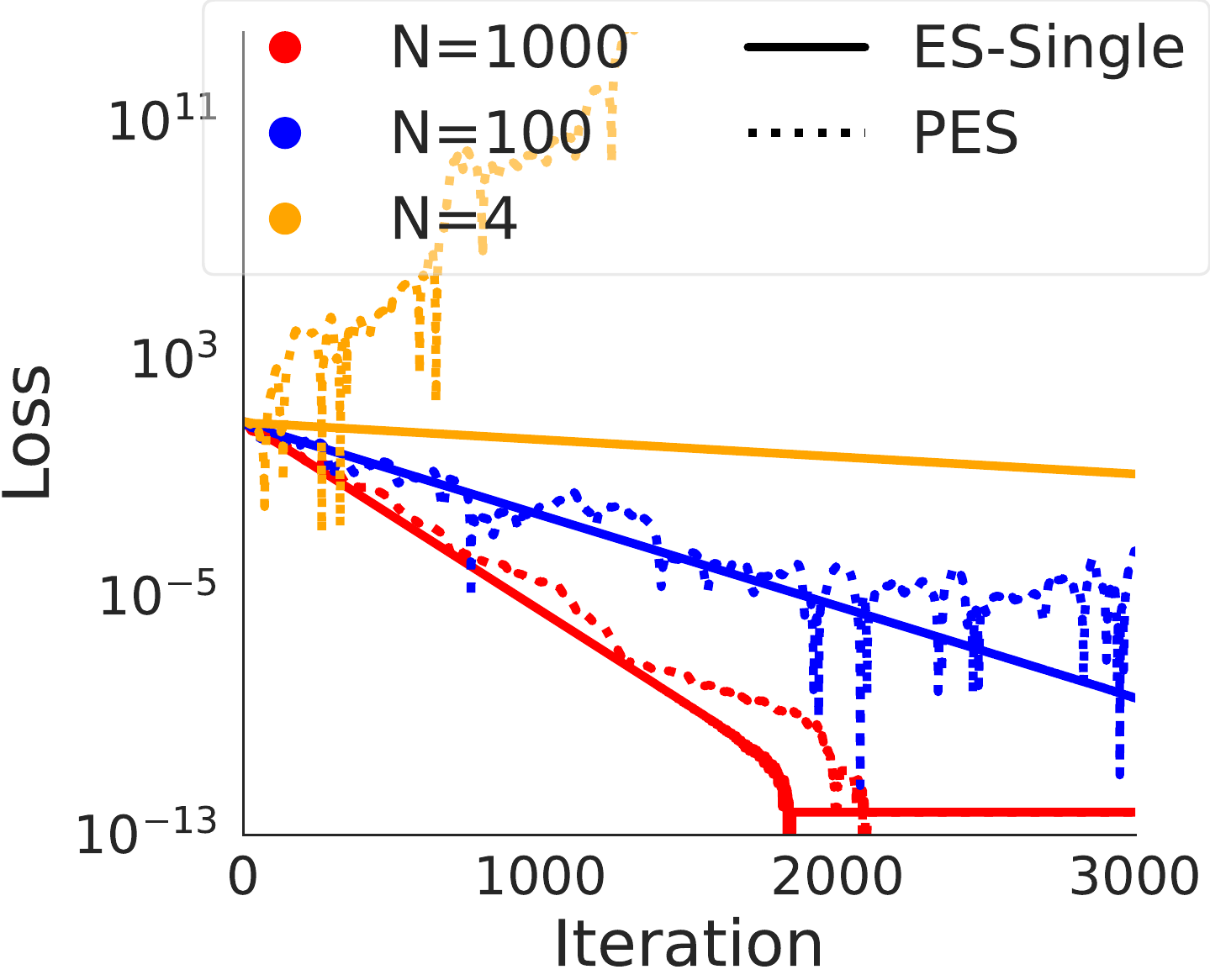}
        \caption{Ablation over the number of particles $N$ for PES and ES-Single.}
        \label{fig:es-single-influence-ablation}
    \end{subfigure}
    \caption{Evaluating ES-Single on the synthetic influence balancing task from ~\citet{tallec2017unbiased}.
    Note that TBPTT with truncation lengths 10 and 100 moves in the wrong direction, and truncated ES exactly matches the behavior of TBPTT with $K=1$.
    }
    \label{fig:es-single-influence}
    \vspace{-0.4cm}
\end{figure*}

\vspace{-0.1cm}
\subsection{Variance}
\label{sec:variance}
\vspace{-0.1cm}
In contrast to PES, the variance of the ES-Single estimator does not depend on the number of partial unrolls per inner problem.
We measured the empirical variance on the same task used by~\citet{vicol2021unbiased}:
we consider a tiny LSTM trained on the character-level Penn TreeBank dataset~\cite{marcus1993building}.
The full inner problem consists of a sequence of length $T=1000$, which we split into truncated unrolls of lengths $K \in \{ 1, 2, 5, 10, 20, 50, 100, 200, 500, 1000 \}$.
When measuring the variance of each estimator, we keep the parameters $\boldtheta$ fixed---that is, we do not update $\boldtheta$ after each partial unroll---and
accumulate the gradient for the full problem by summing the estimates over the truncated unrolls. This allows us to avoid any hysteresis effects.
We considered three different scenarios: 1) a sequence consisting of random characters, such that the gradients at each truncated unroll are i.i.d.; 2) a sequence consisting of a single repeated character, such that the gradients from each unroll are identical; and 3) a sequence of real data from the PTB dataset.
The results are shown in Figure~\ref{fig:variance}: we see that ES-Single has similar variance for all three scenarios, and in each case the variance is constant with respect to the number of unrolls, in contrast to PES.
Thus, ES-Single has substantially lower variance, especially when the inner problem is split into many unrolls; however, PES does have slightly lower variance for intermediate numbers of unrolls (e.g., 10-100 unrolls per inner problem).
Formally, ES-Single has the same variance characteristics as full-unroll ES.
\vspace{-0.2cm}
\begin{tcolorbox}[colback=tabblue!15,boxrule=0pt,colframe=white,coltext=black,arc=2pt,outer arc=0pt,valign=center]%,valign=center]
\begin{proposition}[ES-Single Variance]
The total variance of ES-Single using antithetic sampling is $\tr(\Var(\hat{\boldg}^{\text{ES-Single}})) = (P + 1) \| \nabla_{\boldtheta} L(\boldtheta) \|^2$, where $P$ is the dimensionality of $\boldtheta$.
\end{proposition}
\begin{proof}
The proof is provided in Appendix~\ref{app:variance-proof}.
\end{proof}
\end{tcolorbox}
\vspace{-0.2cm}
The total variance normalized by the squared gradient norm is $O(P)$, growing linearly in the number of outer parameters.
In contrast, the variance of PES includes terms in $T$, the number of unrolls per inner problem~\cite{vicol2021unbiased}; depending on the correlation between gradients at each partial unroll, its variance either decreases slightly with increasing $T$, or increases linearly with $T$.
In the realistic scenario using a true sequence from the PTB dataset, the variance of PES initially decreases slightly as the number of inner unrolls increases, after which it increases linearly.

\vspace{-0.1cm}
\subsection{Hysteresis}
\label{sec:hysteresis}
\vspace{-0.1cm}
Any method that makes updates to the outer parameters online during optimization of an inner problem will suffer from \textit{hysteresis}, including RTRL and its approximations (UORO, KF-RTRL, OK), PES, and ES-Single.
The impact of hysteresis on final performance is problem-dependent; one approach to help mitigate the effects of hysteresis is to use breakstep (as opposed to lockstep) training, described in Appendix~\ref{app:lockstep-breakstep}.

\vspace{-0.1cm}
\section{Experiments}
\label{sec:experiments}
\vspace{-0.1cm}
We evaluated ES-Single on several tasks from~\citet{vicol2021unbiased}, which include both toy problems and real-world tasks.
First, we show empirically that ES-Single is unbiased, via an influence balancing task that is designed such that truncated methods fail; then, we use ES-Single to optimize hyperparameters, to tune several mixed continuous and discrete hyperparameters for a FashionMNIST training task.
Finally, we consider two high-dimensional problems: 1) training an LSTM to copy sequences of increasing length (on which truncated methods fail); and 2) meta-training a learned optimizer.
Both of these tasks have thousands of outer parameters, which allows us to evaluate the scalability of ES-Single to real-world settings.
Overall, we show that ES-Single is unbiased, and consistently outperforms PES on all tasks, achieving lower meta-loss values in fewer meta-optimization steps.
Experimental details and additional results are provided in Appendix~\ref{app:exp-details}.

\vspace{-0.1cm}
\subsection{Synthetic Influence Balancing Task}
\vspace{-0.1cm}
\begin{figure}[h]
    \centering
    \includegraphics[width=0.95\linewidth]{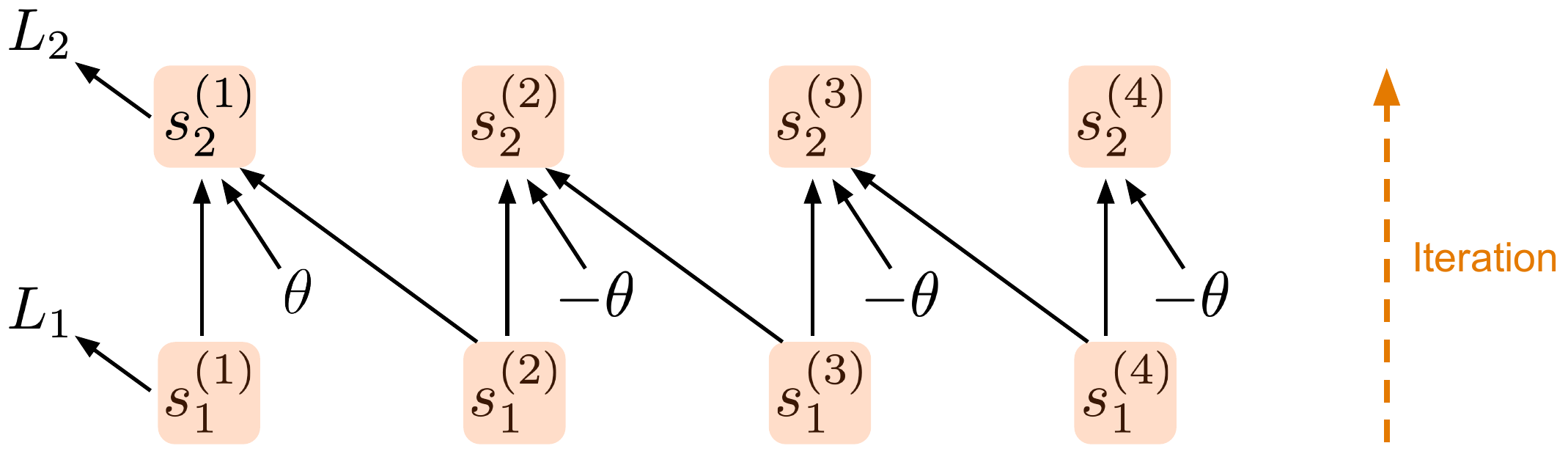}
    \vspace{-0.2cm}
    \caption{Illustration of the influence balancing task.}
    \label{fig:influence}
    \vspace{-0.2cm}
\end{figure}
First, we revisit the influence balancing task, originally introduced by ~\citet{tallec2017unbiased} and used in PES~\cite{vicol2021unbiased}.
This is a synthetic task with a scalar parameter $\theta \in \mathbb{R}$, designed such that $\theta$ has a negative influence in the short term but a positive influence in the long term.
We consider a linear dynamical system:
\begin{align}
    \bolds_{t+1} = \boldA \bolds_t + (\underbrace{\theta, \dots, \theta}_{p\text{ positive}}, \underbrace{-\theta, \dots, -\theta}_{n-p\text{ negative}})^\top
\end{align}
where $\boldA$ is an $n \times n$ matrix with $\boldA_{i,i} = 0.5$, $\boldA_{i,i+1}=0.5$, and 0 everywhere else.
The loss $L_t$ computes the squared error on the first index in the state vector $\bolds_t$; see Appendix~\ref{app:exp-details} for details.
This task is shown diagrammatically in Figure~\ref{fig:influence}.
As shown in Figure~\ref{fig:es-single-influence-comparison}, ES-Single outperforms PES and RTRL, and yields a smoother loss curve.
Note that for this task, the inner problem is infinite; thus, the perturbation accumulator for PES is never reset.
Because the variance of PES increases with the number of unrolls, PES requires a large number of particles ($N=1000$) to perform well.
If the number of particles is decreased, optimization becomes unstable, as shown in Figure~\ref{fig:es-single-influence-ablation}.
In contrast, because the variance of ES-Single does not depend on the number of unrolls, it can perform well on this task with substantially fewer particles, even $N=4$.

\subsection{Hyperparameter Optimization}
\label{sec:hyperparameter-optimization}

\paragraph{MNIST LR Schedule.}
Here, we used ES-Single to meta-learn a learning rate (LR) schedule used to train an MLP on MNIST.
Based on ~\cite{wu2018understanding}, we used a two-hidden-layer MLP with 100 units per layer, and tuned LR schedule parameterized by $\alpha_t = \frac{\theta_0}{\left( 1 + \frac{t}{Q} \right)^{\theta_1}}$, where $\theta_0$ is the initial LR, $\theta_1$ is the LR decay factor, and $Q=5000$ is a constant.
The results are shown in Figure~\ref{fig:mnist-mlp}.
We found that ES-Single performed similarly to PES, but had more stable convergence near the optimum, while PES at times drifted away from the optimum due to its high variance.
\begin{figure}[H]
    \vspace{-0.2cm}
    \centering
    \includegraphics[width=0.8\linewidth]{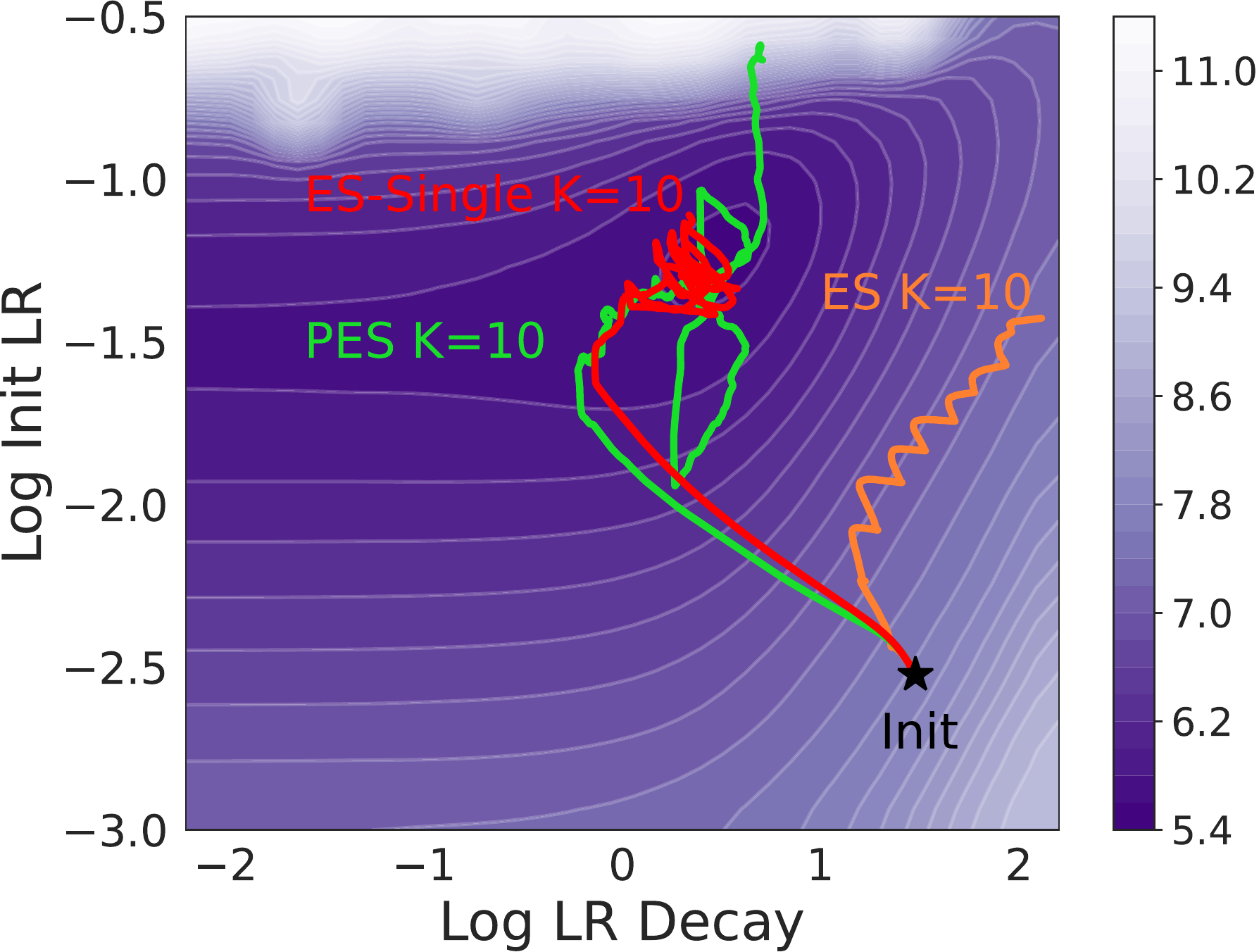}
    \vspace{-0.3cm}
    \caption{Meta-optimization of a learning rate schedule for an MNIST MLP, using truncation length $K=10$. Darker regions are better. Further experiments are provided in Appendix~\ref{app:exp-details}.}
    \label{fig:mnist-mlp}
    \vspace{-0.4cm}
\end{figure}

\paragraph{Telescoping Sums.}
If the desired meta-objective is the \textit{final} loss $L_T$ rather than the sum of losses $\sum_{t=0}^T$, then this can be handled gracefully in our framework by defining $p_t = L_t - L_{t-1}$, where we define $L_{-1} \equiv 0$ for notational simplicity.
Then, we can consider the sum of $p_t$, which yields a telescoping sum:
\begin{align}
    \sum_{t=0}^T p_t = (\cancel{L_0} - L_{-1})
    + \cdots +
    (L_T - \cancel{L_{T-1}}) = L_T
\end{align}
\begin{figure*}[t]
    \centering
    \begin{subfigure}[t]{0.38\linewidth}
        \includegraphics[width=\linewidth]{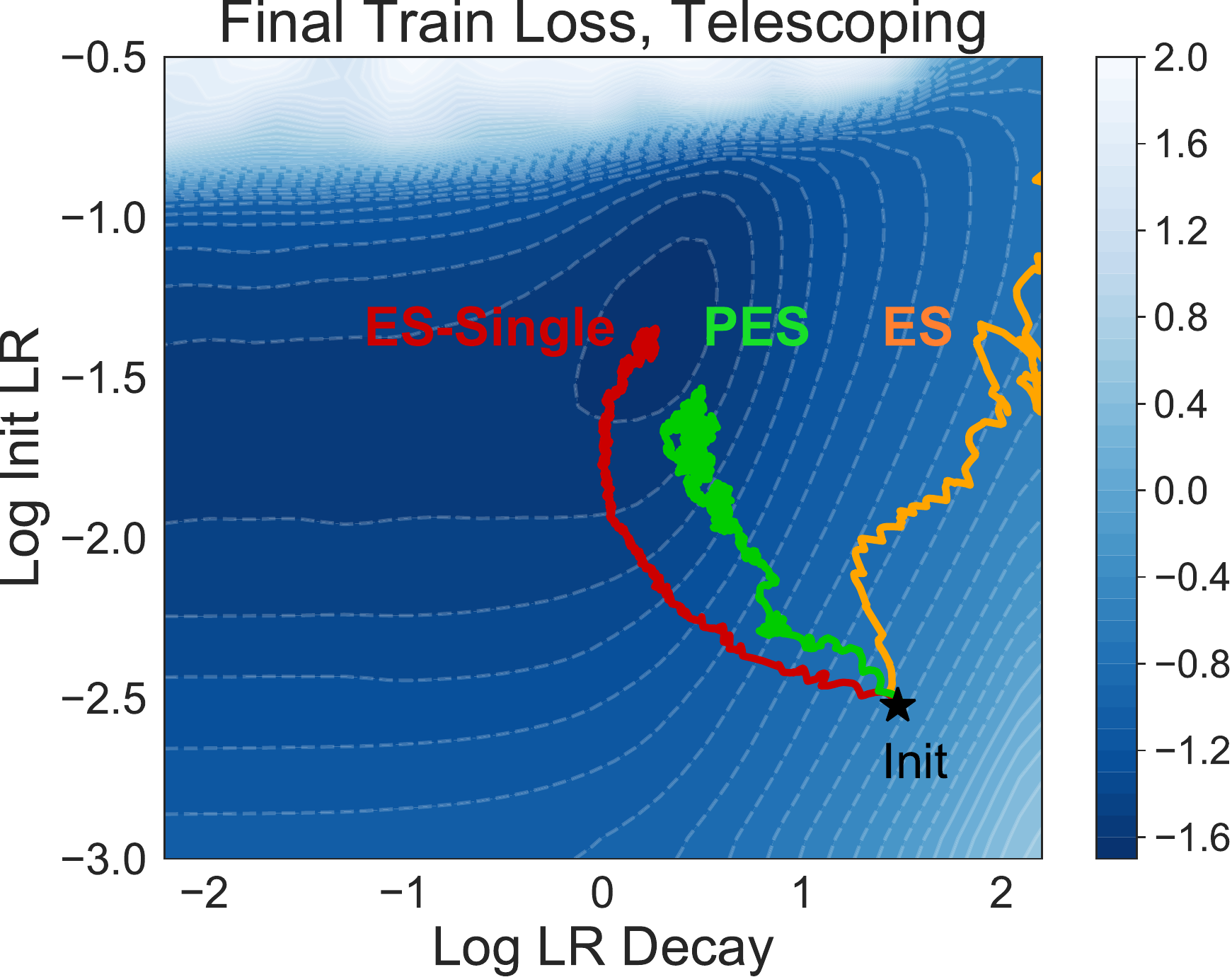}
        \caption{Meta-optimization  using telescoping sums.}
        \label{fig:telescoping-meta-trajectories}
    \end{subfigure}
    \qquad
    \begin{subfigure}[t]{0.38\linewidth}
        \includegraphics[width=\linewidth]{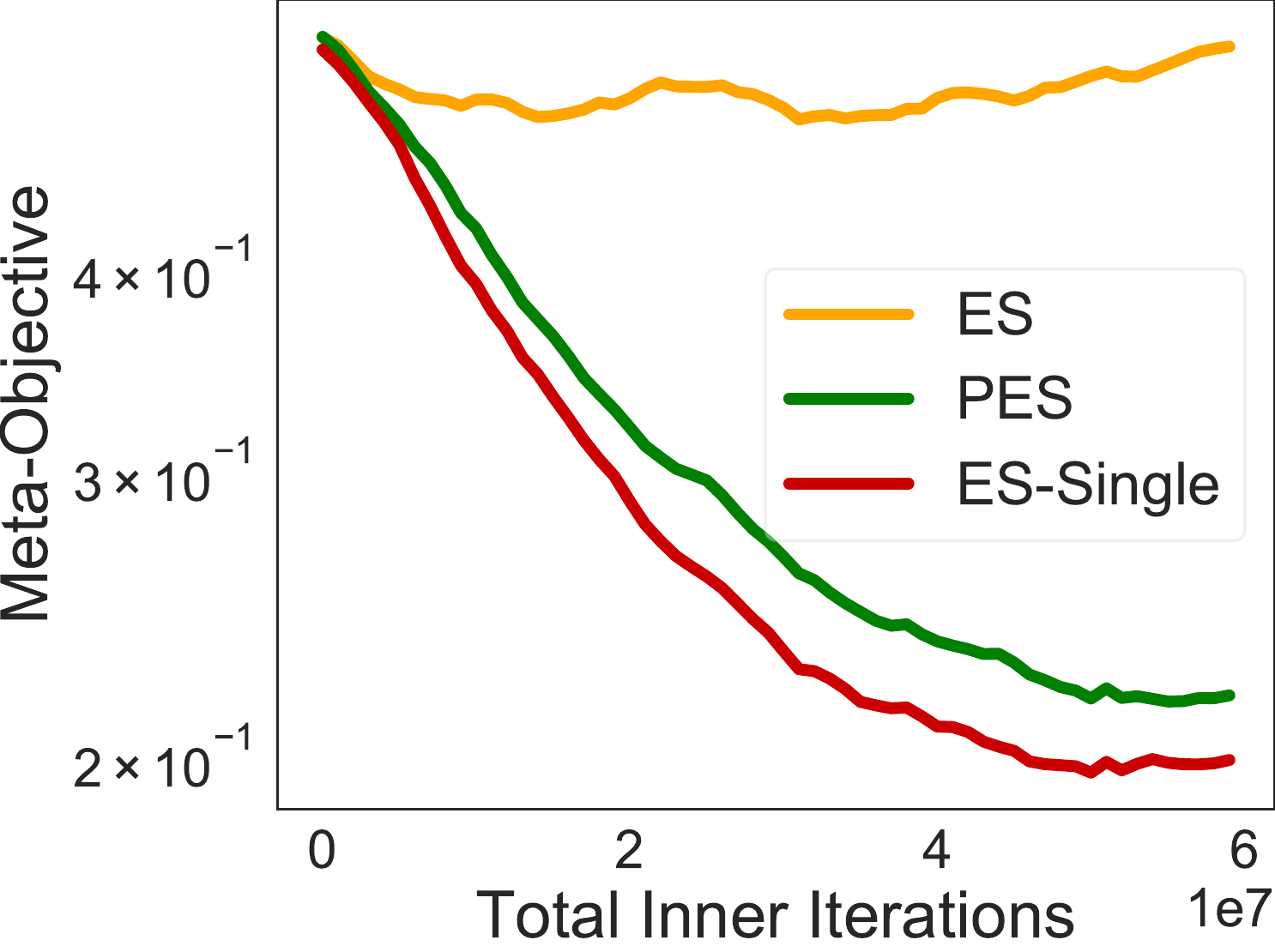}
        \caption{Meta-losses obtained by ES, PES, and ES-Single.}
        \label{fig:telescoping-meta-loss}
    \end{subfigure}
    \vspace{-0.2cm}
    \caption{Meta-optimizing a learning rate schedule for an MLP on FashionMNIST, using a telescoping sum to target the final training loss.}
    \label{fig:telescoping}
    \vspace{-0.2cm}
\end{figure*}
Figure~\ref{fig:telescoping} compares vanilla truncated ES, PES, and ES-Single on a task that tunes the learning rate and decay factor for training an MLP on FashionMNIST, targeting the final training loss.
We see that the meta-optimization trajectory of ES-Single was significantly smoother than that of PES, more closely followed the meta-loss contours, and had better stability near the optimum (Figure~\ref{fig:telescoping-meta-trajectories}).
As shown in Figure~\ref{fig:telescoping-meta-loss}, ES-Single converged more rapidly to the optimal meta-objective value than PES.

\paragraph{Tuning Many Hyperparameters.}
Here, we applied ES-Single to tune many hyperparameters simultaneously, to train a 5-hidden-layer MLP on FashionMNIST.
The meta-objective is the sum of validation losses over the inner problem.
We tuned 29 hyperparameters, including separate learning rates and momentum coefficients per parameter block (e.g., for each weight matrix and bias vector in the MLP), and the number of hidden units per layer (which is a discrete hyperparameter that takes values in the range 10-100).
We compared ES-Single to random search, vanilla ES, and PES.
The results are shown in Figure~\ref{fig:tuning-many-hparams}: we found that ES-Single substantially outperformed these baselines, and achieved lower meta-loss in fewer iterations compared to PES.
\begin{figure}[H]
    \centering
    \includegraphics[width=0.85\linewidth]{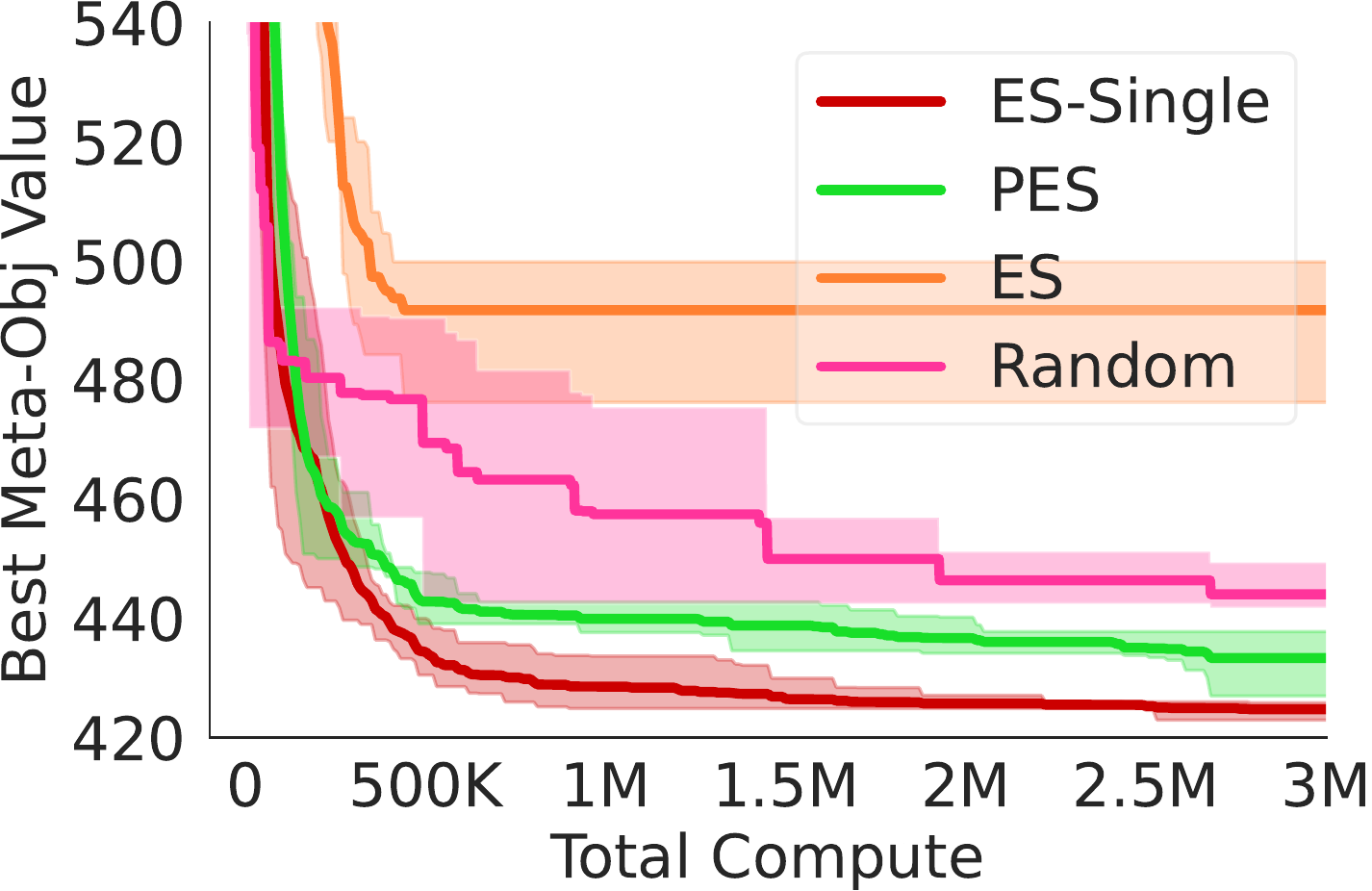}
    \vspace{-0.2cm}
    \caption{Tuning many hyperparameters for an MLP on FashionMNIST, targeting the sum of validation losses as the meta-objective.}
    \label{fig:tuning-many-hparams}
    \vspace{-0.3cm}
\end{figure}

\subsection{LSTM Copy Task}
Next, we used ES-Single to train an LSTM on the copy task introduced by~\citet{mujika2018approximating}, where the model must read a binary string of length $T$, and output the same string.
The challenge lies in learning long-term dependencies as $T$ increases.
Following~\citet{mujika2018approximating}, we use a curriculum starting with $T=1$, and increasing $T$ by 1 each time the exponential moving average of the cross-entropy loss (e.g., bits-per-character) drops below the threshold 0.15.
To ensure that the model does not overfit to a particular sequence length, we sample $T$ uniformly from $\{ T-5, \dots, T \}$ (or $T=1$ if the sampled value is negative).
We train a 1-layer LSTM with hidden state size 100, that has 42804 parameters, which we learn via ES-based methods, evaluating scalability.
PES and ES-Single were run using truncations of length $K=1$ for fully-online learning; for vanilla truncated ES, we used truncation lengths $K \in \{ 25, 50 \}$.
In Figure~\ref{fig:lstm-copy-task}, we show the maximum length $T$ that is successfully copied over the course of training using ES, PES, and ES-Single.
As expected, ES plateaus, as it intrinsically cannot model dependencies across longer horizons than its truncation length.
Both PES and ES-Single outperform ES, but ES-Single substantially outperforms PES, with $T$ increasing faster and reaching higher maximum values.
\begin{figure}[H]
    \vspace{-0.2cm}
    \centering
    \includegraphics[width=0.8\linewidth]{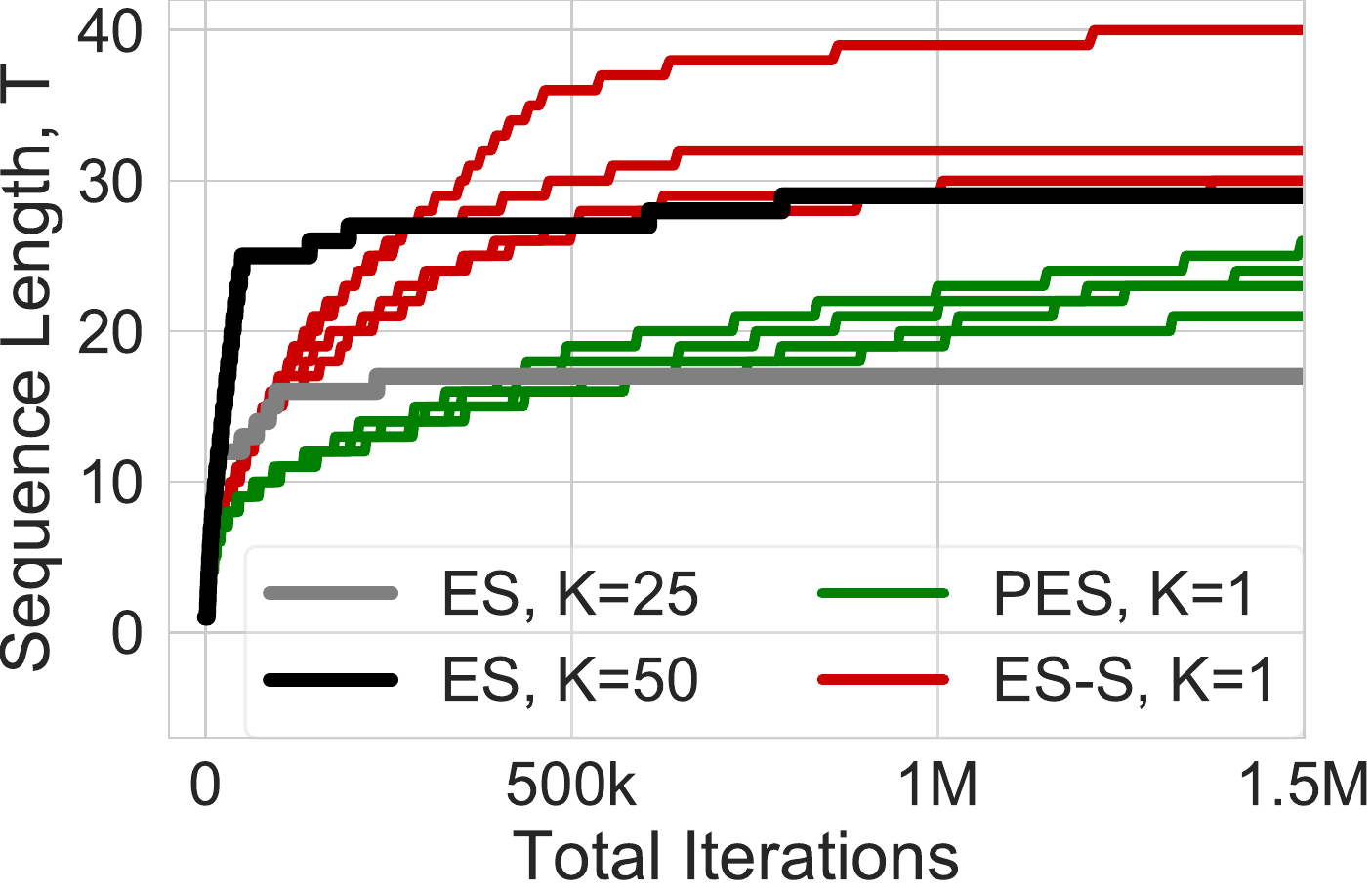}
    \vspace{-0.2cm}
    \caption{Maximum sequence length $T$ that is successfully copied in the copy task from~\citet{mujika2018approximating}.
    Curves of the same color use different random seeds.
    For the ES baselines (gray and black curves), we show only the best result to reduce clutter.
    Here, the x-axis represents the number of tokens ingested by each approach (e.g., data-time rather than compute).
    }
    \label{fig:lstm-copy-task}
    \vspace{-0.2cm}
\end{figure}

\vspace{-0.3cm}
\subsection{Learned Optimizer Training}
\label{sec:lopt}
Here, we used ES-Single to meta-optimize a learned optimizer using the LOLv2 architecture introduced by~\citet{metz2018meta}.
This optimizer is meta-trained to optimize a 2-hidden-layer MLP with 128 hidden units per layer, on FashionMNIST for $T=5000$ steps, using truncated unrolls of length $K=10$.
As the meta-objective, we targeted the mean training loss over the inner optimization trajectory.
In Figure~\ref{fig:learned-optimizer}, we show the meta-objective values obtained over the course of meta-training, using vanilla truncated ES, PES, and ES-Single.
ES fails due to truncation bias, while PES performs poorly due to high variance; in contrast, ES-Single performs much better on this long-horizon task with short truncations.
\begin{figure}[H]
    \centering
    \includegraphics[width=0.8\linewidth]{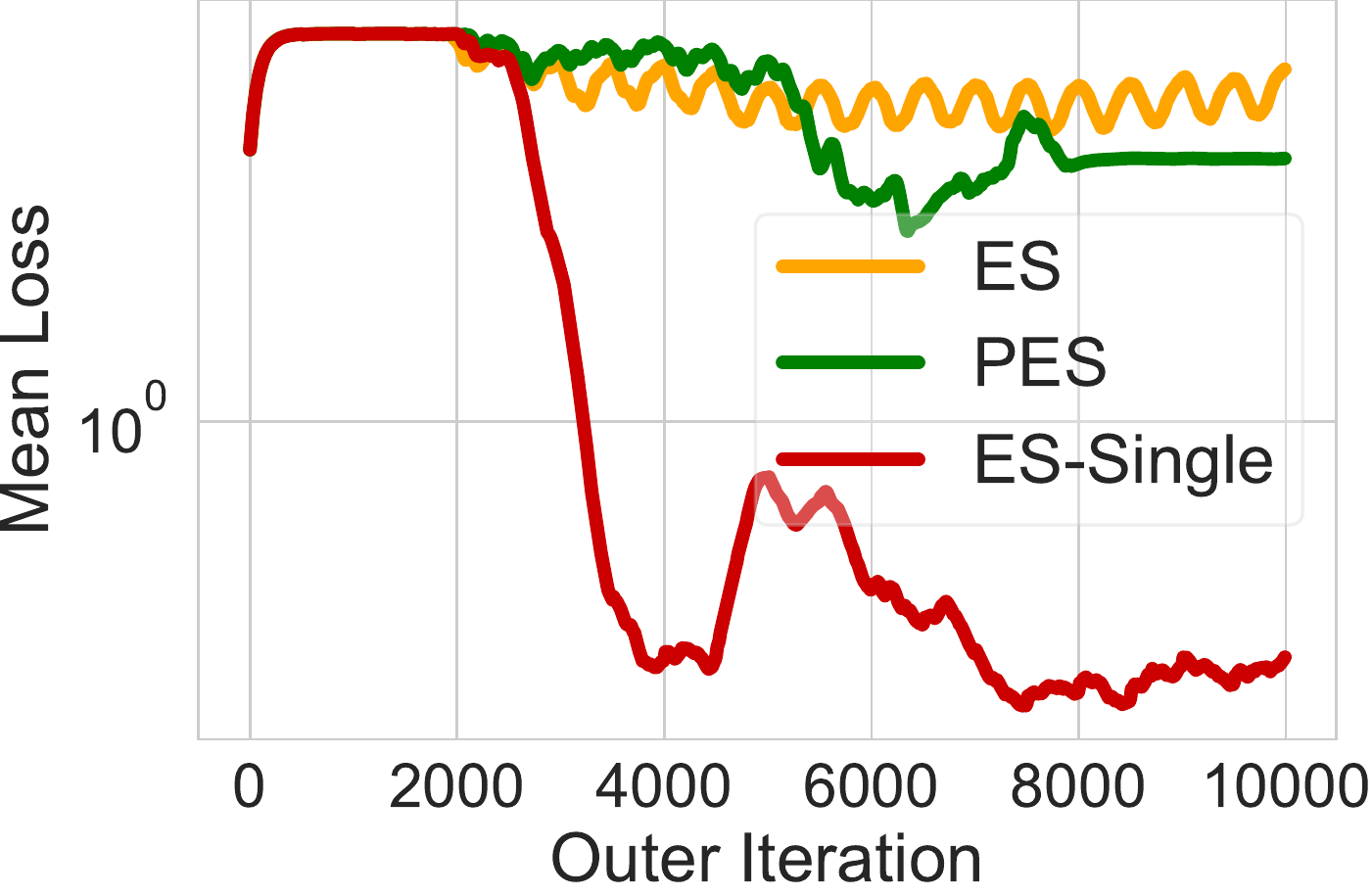}
    \vspace{-0.2cm}
    \caption{Meta-training a learned optimizer, targeting a two-layer MLP on FashionMNIST.}
    \label{fig:learned-optimizer}
    \vspace{-0.3cm}
\end{figure}

\section{Related Work}
\label{sec:related-work}
\vspace{-0.2cm}
We provide extended related work in Appendix~\ref{app:extended-related-work}.
\vspace{-0.3cm}
\paragraph{Gradient-Based Approaches.}
There are two families of gradient-based methods for computing gradients through unrolled computation, based on reverse-mode (e.g., backpropagation through time, BPTT) or forward-mode gradient accumulation (e.g., real-time recurrent learning, RTRL).
Backpropagating through full unrolled sequences is expensive, with compute and memory cost that scales linearly in the unroll length.
Gradient checkpointing~\cite{chen2016training} reduces the memory requirement to $O(\sqrt{T})$, at the cost of additional compute.
Truncated BPTT (TBPTT) operates on shorter sub-sequences of length $K \ll T$, substantially reducing cost, but introducing truncation bias that can lead to sub-optimal solutions~\cite{wu2018understanding}.
ART-BP~\cite{tallec2017unbiasing} uses randomly sampled truncation lengths, and introduces reweighting factors during backprop based on the sequence length to yield an unbiased gradient estimate of the total loss.

RTRL performs forward-mode gradient accumulation, by maintaining the recurrent Jacobian via the following update: $\frac{d \bolds_t}{d \boldtheta} = \frac{\partial \bolds_t}{\partial \bolds_{t-1}} \frac{d \bolds_{t-1}}{d \boldtheta} + \frac{\partial \bolds_t}{\partial \boldtheta}$.
RTRL allows for fully online learning of the outer parameters (e.g., with outer updates taken every $K=1$ steps), but is intractable for high-dimensional problems, as the recurrent Jacobian $\frac{d \bolds_t}{d \boldtheta}$ is $P \times P$ and thus too large to store in memory.
Several cheaper approximations to RTRL have been proposed, including: Unbiased Online Recurrent Optimization (UORO)~\cite{tallec2017unbiased}, maintains a rank-1 estimate of the recurrent Jacobian; KF -RTRL~\cite{mujika2018approximating} proposes a Kronecker factorization of the Jacobian, and the Optimal Kronecker Sum Approximation (OK)~\cite{benzing2019optimal} provides a lower-variance extension of KF-RTRL.
Unfortunately, these methods cannot optimize over chaotic loss landscapes, and are either high-variance, difficult to implement, or are only applicable to a restricted class of models (e.g., specific RNN architectures).
\citet{silver2021learning} propose a method for unbiased gradient estimation based on directional derivatives; being a gradient-based approach, this method requires a differentiable objective function.
\vspace{-0.3cm}
\paragraph{Chaos.}
Unrolled dynamical systems can lead to chaotic loss landscapes, for example in rigid-body physics, graphics, model-based control~\cite{parmas2018pipps}, fluid simulation~\cite{ni2017sensitivity,kochkov2021machine}, climate modeling~\cite{lea2000sensitivity,kohl2002adjoint}, and simulation of weather~\cite{bischof1996sensitivity} or nuclear fusion~\cite{mcgreivy2021optimized}. \citet{metz2021gradients} discuss this in depth, showing that while analytic gradients may be available in such systems, they are not necessarily useful due to high variance. In particular, the reparameterization gradient estimator~\cite{kingma2013auto} may have orders of magnitude larger variance than black-box ES estimates~\cite{parmas2018pipps,parmas2019unified,metz2019understanding,schwefel1977evolutionsstrategien,wierstra2014natural} or variational optimization~\cite{staines2012variational}.
\citet{metz2021gradients} provide an overview of scenarios in which chaos arises, and a taxonomy of approaches to either prevent chaos from arising (e.g., switching to a better-behaved system) or to optimize in the presence of chaos (e.g., using smoothing-based approaches, as we do here).
The high-level outline for ES-Single was first proposed in~\citet{vicol2023bilevel}.

\vspace{-0.3cm}
\section{Conclusion}
\label{sec:conclusion}
\vspace{-0.2cm}
We introduced an unbiased gradient estimator for unrolled computation graphs, called ES-Single. ES-Single inserts breakpoints into the computation graph for a full unroll, at which intermediate results are aggregated and used to form an ES-based gradient estimate, which is applied to update the outer parameters.
Crucially, compared to vanilla truncated ES and PES, ES-Single samples outer parameter perturbations once at the start of each inner problem, and re-applies the same perturbations in each partial unroll.
ES-Single is simpler to implement than PES, and has constant variance with respect to the number of partial unrolls per inner problem; this leads to substantially lower variance than PES in practice, and makes ES-Single well-suited for long-horizon tasks with short truncations.
We evaluated ES-Single on a diverse set of tasks, including a synthetic task to test for unbiasedness, hyperparameter optimization, RNN training, and training of learned optimizers.
On all tasks, it outperformed ES and PES.

\clearpage

\bibliography{references}
\bibliographystyle{icml2023}

%%%%%%%%%%
% APPENDIX
%%%%%%%%%%
\newpage

\appendix
\onecolumn

\section*{Appendix}

This appendix is structured as follows:
\begin{itemize}
    \item In Section~\ref{app:notation}, we provide an overview of the notation used in this paper.
    \item In Section~\ref{app:extended-related-work}, we provide extended related work.
    \item In Section~\ref{app:exp-details}, we provide experimental details and additional results.
    \item In Section~\ref{app:proofs}, we provide proofs of all statements in the main text.
    \item In Section~\ref{app:stochasticcomp}, we present derivations of the ES-Single and vanilla ES gradient estimators using the framework of stochastic computation graphs.
    \item In Section~\ref{app:generalization-es-pes}, we derive a generalization of both ES-Single and PES. We provide its stochastic computation graph and resulting algorithm.
    \item In Section~\ref{app:bias-variance-generalized}, we derive the variance of a generalized estimator that combines a single perturbation (kept fixed over the course of an inner problem)---as in ES-Single---with independent perturbations sampled in each partial unroll---as in PES.
    \item In Section~\ref{app:code}, we provide a JAX implementation of ES-Single.
\end{itemize}

\clearpage

\section{Notation}
\label{app:notation}

Table~\ref{table:notation} summarizes the notation used in this paper.
\begin{table}[htbp]
\centering
\begin{tabular}{cc}
\toprule
\textbf{Symbol} & \textbf{Meaning} \\
\midrule
 ES          & Evolution strategies \\[3pt]
 PES         & Persistent evolution strategies \\[3pt]
 ES-Single   & Evolution strategies with a single perturbation re-used across unrolls \\[3pt]
 (T)BPTT   & (Truncated) backpropagation through time \\[3pt]
 RTRL      & Real time recurrent learning \\[3pt]
 UORO      & Unbiased online recurrent optimization \\[3pt]
 $T$        & The total sequence length / total unroll length of the inner problem \\[3pt]
 $K$          & The truncation length for subsequences / partial unrolls \\[3pt]
 $S$          & The dimensionality of the state of the unrolled system, $\text{dim}(\bolds)$  \\[3pt]
 $P$          & The dimensionality of the parameters of the unrolled system, $\text{dim}(\boldtheta)$  \\[3pt]
 $\boldtheta$ & The parameters of the unrolled system                    \\[3pt]
 $\boldtheta_t$ & The parameters of the unrolled system at time $t$, where $\boldtheta_t = \boldtheta, \forall t$   \\[3pt]
 $\bolds_t$        & The state of the unrolled system at time $t$             \\[3pt]
 $\boldx_t$        & The (optional) external input to the unrolled system at time $t$    \\[3pt]
 $f$            & The update function that evolves the unrolled system   \\[3pt]
 $N$          & The number of particles for ES and PES                   \\[3pt]
 $\sigma^2$   & The variance of the ES/PES perturbations                 \\[3pt]
 $\boldepsilon_t$ & A perturbation applied to the parameters $\boldtheta$ at timestep $t$\\[3pt]
%  $\boldepsilon$ & A matrix whose rows are perturbations at each timestep, $\boldepsilon = (\boldepsilon_1, \dots, \boldepsilon_T)^\top$ \\[3pt]
 $\boldxi_t$    & The sum of PES perturbations up to time $t$, $\boldxi_t = \boldepsilon_1 + \cdots + \boldepsilon_t$ \\[3pt]
%  $\boldg$        & The ES or PES gradient estimate (depending on context)     \\[3pt]
$\Theta$  & A matrix whose rows are per-timestep parameters $\boldtheta_1, \dots, \boldtheta_T$ \\[3pt]
 $L_t(\Theta)$   & The loss at timestep $t$, $L_t(\Theta) = L_t(\boldtheta_1, \dots, \boldtheta_t)$ \\[3pt]
 $L(\boldtheta)$, $L(\Theta)$   & The total loss, $L(\boldtheta) = L(\Theta) = \sum_{t=1}^T L_t(\Theta) = \sum_{t=1}^T L_t(\boldtheta_1, \dots, \boldtheta_t)$ \\[3pt]
 % $L(\boldtheta)$   & The total loss, $L(\boldtheta) = \sum_{t=1}^T L_t(\bolds_t, \boldtheta)$ \\[3pt]
 $\boldg_t$        & The true gradient at step $t$: $\nabla_{\boldtheta} L_t(\boldtheta)$     \\[3pt]
 $\ges$        & The vanilla ES gradient estimate (with Monte-Carlo sampling)     \\[3pt]
 $\gesanti$        & The vanilla ES gradient estimate, using antithetic sampling     \\[3pt]
 $\gpes$       & The PES gradient estimate (with Monte-Carlo sampling)     \\[3pt]
 $\gessingle$  & The ES-Single gradient estimate (with Monte Carlo sampling) \\[3pt]
%  $\gpesanti$   & The antithetic PES gradient estimate (with Monte-Carlo sampling)     \\[3pt]
 $\alpha$   & The learning rate for the parameters $\boldtheta$          \\[3pt]
 \multirow{3}{*}{$\text{unroll}(\bolds, \boldtheta, K)$} & A function that unrolls the system for $K$ steps \\
            & starting with state $\bolds$, using parameters $\boldtheta$. \\
            & Returns the updated state and loss resulting from the unroll \\
\bottomrule
\end{tabular}
\caption{\small \textbf{Table of notation, defining the terms we use in this paper.}}
\label{table:notation}
\end{table}

\clearpage

\section{Extended Related Work}
\label{app:extended-related-work}

\begin{figure}[H]
    \centering
    \includegraphics[width=\linewidth]{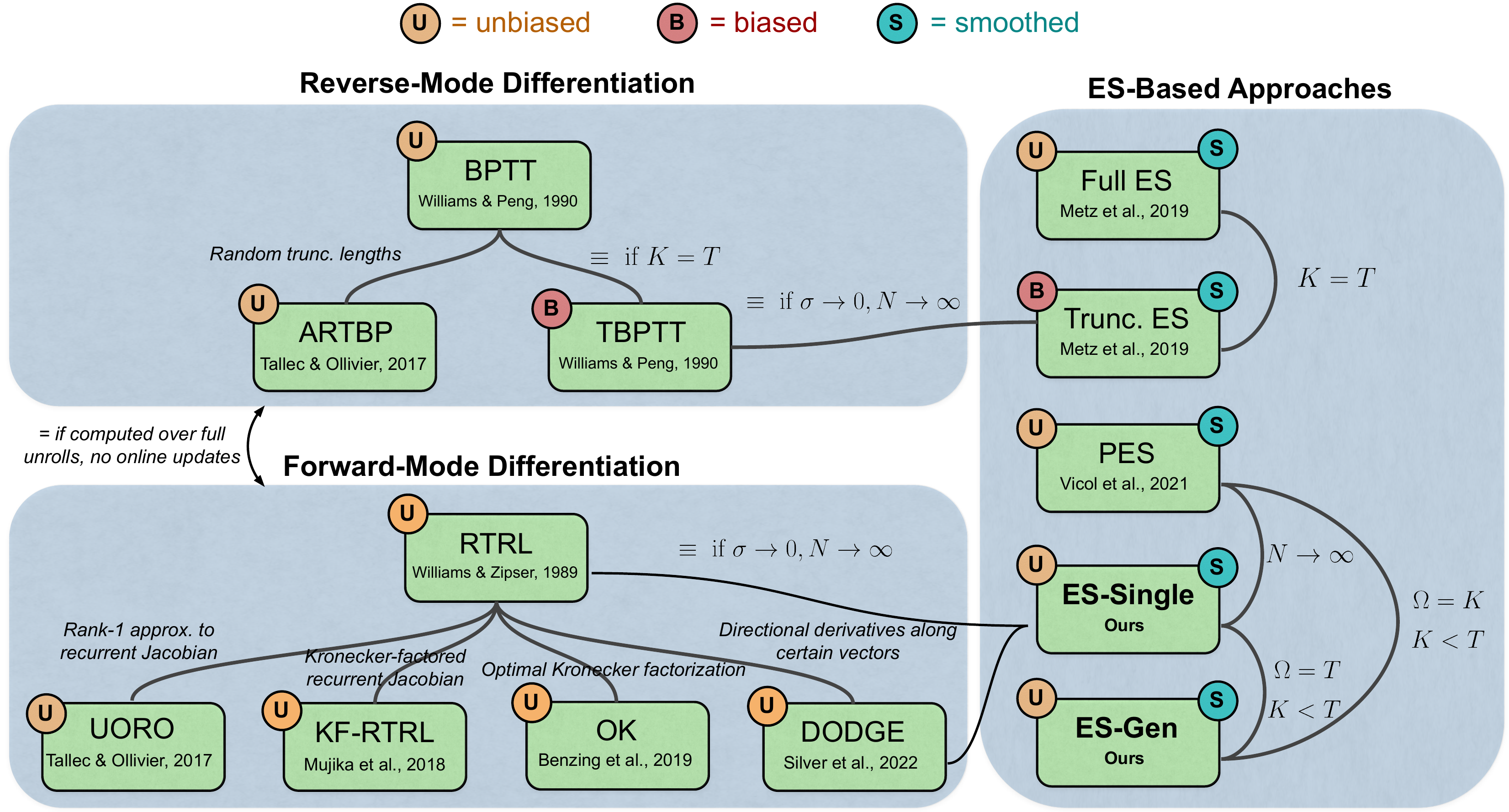}
    \vspace{-0.6cm}
    \caption{Connections between approaches for computing gradients in unrolled computation graphs, focusing on three categories of methods: 1) forward-mode differentiation, which includes RTRL~\cite{williams1989learning} and its approximations (UORO~\cite{tallec2017unbiased}, KF-RTRL~\cite{mujika2018approximating}, OK~\cite{benzing2019optimal}, DODGE~\cite{silver2021learning}); 2) reverse-mode differentiation, which includes backpropagation through time (BPTT), truncated BPTT, and ARTBP~\cite{tallec2017unbiasing}; and 3) evolution strategies (ES)-based approaches, which include full-unroll and truncated ES~\cite{metz2019understanding}, PES~\cite{vicol2021unbiased}, and the generalization we introduce in Section~\ref{app:generalization-es-pes}, which has as special cases PES and ES-Single.}
    \label{fig:es-connections}
\end{figure}

\paragraph{Approaches for Gradient Estimation.}
Figure~\ref{fig:es-connections} illustrates connections between forward-mode, reverse-mode, and evolution strategies-based approaches to gradient estimation in unrolled computation graphs.

\paragraph{Black-Box, Gray-Box, and Gradient-Based Approaches.}
Black-box approaches to meta-optimization include random search~\cite{bergstra2011algorithms}, Bayesian optimization~\cite{snoek2012practical,snoek2015scalable}, and full-unroll ES~\cite{metz2019understanding}.
Gray-box approaches make use of the iterative nature of the inner problem, to make faster progress than black-box methods; such approaches include Freeze-Thaw Bayesian optimization~\cite{swersky2014freeze}, Hyperband~\cite{li2017hyperband}, Successive Halving~\cite{jamieson2016non}, Population-Based Training~\cite{jaderberg2017population}, PES~\cite{vicol2021unbiased}, and ES-Single.
Gradient-based approaches either: 1) differentiate through inner unrolls~\cite{domke2012generic,maclaurin2015gradient,shaban2019truncated}; 2) leverage implicit differentiation~\cite{larsen1996design,bengio2000gradient,foo2008efficient,pedregosa2016hyperparameter,luketina2016scalable,vicol2022implicit,lorraine2020optimizing,blondel2021efficient}; or 3) leverage hypernetworks~\cite{lorraine2018stochastic,mackay2019self}.
There have also been attempts to use forward-mode gradient accumulation for hyperparameter optimization~\cite{franceschi2017forward}, which is only tractable when the hyperparameter dimensionality is very small (e.g., $< 10$).
Most gradient-based approaches perform online, joint optimization over the model parameters and hyperparameters; a notable exception is~\citet{micaelli2020non}, that performs offline updates after each full inner optimization run.
Black-box approaches typically do not scale well beyond $\sim 10$ hyperparameters. While gradient-based approaches are highly scalable, they often suffer from truncation bias, and are typically not applicable to discrete or stochastic hyperparameters (e.g., architectural hyperparameters such as the number of units per layer, or dropout rates).
ES-Single is applicable to a broad range of hyperparameters, including continuous, discrete, or stochastic (e.g., dropout~\cite{srivastava2014dropout}) hyperparameters. In addition, it can target non-differentiable meta-objectives, such as accuracy rather than loss.

\paragraph{Compute and Memory Cost.}
Table~\ref{table:computation-comparison} is an extension of Table 1 from~\citet{vicol2021unbiased}, including an additional row for ES-Single.
The compute cost of ES-Single is identical to that of PES.
Similarly to PES, ES-Single maintains the states of $N$ particles, with memory cost $NS$.
However, ES-Single does not need to store perturbation accumulators.
If the perturbations used by each particle (over the course of all unrolls in an inner problem) are sampled once at the start of the inner problem and stored in memory, then this would require $NP$ memory (similarly to the perturbation accumulators).
But the perturbations do not need to be stored this way, as they can be re-sampled using the same random seed in each partial unroll.
Thus, depending on the implementation, ES-Single has memory cost less than or equal to PES.

\begin{table}[H]
\centering
\label{table:comparison}
\caption{\textbf{Comparison of approaches for learning parameters in unrolled computation graphs}. $S$ is the size of the system state (e.g. the RNN hidden state dimension, or in the case of hyperparameter optimization the inner-problem's weight dimensionality and potentially the optimizer state; $P$ is the dimensionality of $\boldtheta$; $T$ is the total number of steps in a sequence/unroll; $K$ is the truncation length; and $N$ is the number of samples (also called \textit{particles}) used for the reparameterization gradient and in ES-based algorithms; $F$ and $B$ are the costs of a forward and backward pass, respectively; terms in {\color{dkred} red} denote computation/memory that can be split across parallel workers.
}
\footnotesize
\resizebox{\textwidth}{!}{%
\begin{tabular}{@{}cccccccc@{}}
\toprule
\thead{\textbf{Method}}             & \thead{\textbf{Compute}}    & \thead{\textbf{Memory}}    & \thead{\textbf{Parallel}}      & \thead{\textbf{Unbiased}} & \thead{\textbf{Optimize}\\\textbf{ Non-Diff.}} & \thead{\textbf{Smoothed}} \\ \midrule
BPTT{\footnotesize~\cite{rumelhart1985learning}}           & $T(F+B)$               & $TS$           & \xmark  & \cmark & \xmark & \xmark  \\
% BPTT Checkpoint{\footnotesize~\cite{chen2016training}}     & $T \log(T)n^2$         & $TS\log(T)n$               & \xmark  & \cmark & \xmark & \xmark  \\
TBPTT{\footnotesize~\cite{williams1990efficient}}          & $K(F+B)$               & $KS$                       & \xmark  & \xmark & \xmark & \xmark  \\
ARTBP{\footnotesize~\cite{tallec2017unbiasing}}            & $K(F+B)$               & $KS$                       & \xmark  & \cmark & \xmark & \xmark  \\
%RTRL{\footnotesize~\cite{williams1989learning}}            & $S^2 P$                & $\text{max}\{ SP, S^2 \}$  & \xmark  & \cmark & \xmark & \xmark  \\
RTRL{\footnotesize~\cite{williams1989learning}}            & $PS^2 + S(F+B)$                & $SP + S^2$  & \xmark  & \cmark & \xmark & \xmark  \\
% UORO{\footnotesize~\cite{tallec2017unbiased}}              & $PS + (F + B)$                  & $S + P$                      & \xmark  & \cmark & \xmark & \xmark  \\
UORO{\footnotesize~\cite{tallec2017unbiased}}              & $F + B + S^2 + P$                  & $S + P$                      & \xmark  & \cmark & \xmark & \xmark  \\
% KF-RTRL{\footnotesize~\cite{mujika2018approximating}}      & $r n^3$                & $r n^2$                    & \xmark  & \cmark & \xmark & \xmark  \\
% OK{\footnotesize~\cite{benzing2019optimal}}                & $r n^3$                & $r n^2$                    & \xmark  & \cmark & \xmark & \xmark  \\
Reparam.{\footnotesize~\cite{metz2019understanding}}       & ${\color{dkred}N} T(F+B)$               & ${\color{dkred}N} TS$                       & \cmark  & \cmark & \xmark & \cmark \\
ES{\footnotesize~\cite{rechenberg1973}}             & ${\color{dkred}N}TF$                  & ${\color{dkred}N}S$     & \cmark  & \cmark & \cmark & \cmark  \\
Trunc. ES{\footnotesize~\cite{metz2019understanding}}      & ${\color{dkred}N}KF$                  & ${\color{dkred}N}S$     & \cmark  & \xmark & \cmark & \cmark  \\
PES~\cite{vicol2021unbiased} & ${\color{dkred}N}$$KF$                  & ${\color{dkred}N}(S+P)$ & \cmark  & \cmark & \cmark & \cmark  \\
% PES + Analytic~\cite{vicol2021unbiased}                                                  & ${\color{dkred}N}$$KF + K(F+B)$                   & ${\color{dkred}N}(S+P) + (K+1)S$  & \cmark  & \cmark & \xmark & \cmark  \\
\midrule
\textbf{ES-Single (Ours)}  & ${\color{dkred}N}$$KF$                   & ${\color{dkred}N}(S+P)$  & \cmark  & \cmark & \cmark & \cmark  \\
\bottomrule
\end{tabular}}
\vspace{-0.6cm}
\label{table:computation-comparison}
\end{table}

\section{Experimental Details and Additional Results}
\label{app:exp-details}

In this section, we provide experimental details and additional results comparing ES-Single to truncated ES and PES.
For all approaches (vanilla ES, PES, and ES-Single), we use antithetic sampling.

\subsection{Truncated ES}
\label{app:truncated-es}

Figure~\ref{fig:truncated-es-graph} shows the computation graph for vanilla truncated ES, to illustrate how it differs from full-unroll ES, PES, and ES-Single as shown in Figure~\ref{fig:es-single-diagrams}.

\begin{figure}[H]
    \centering
    \includegraphics[width=0.45\linewidth]{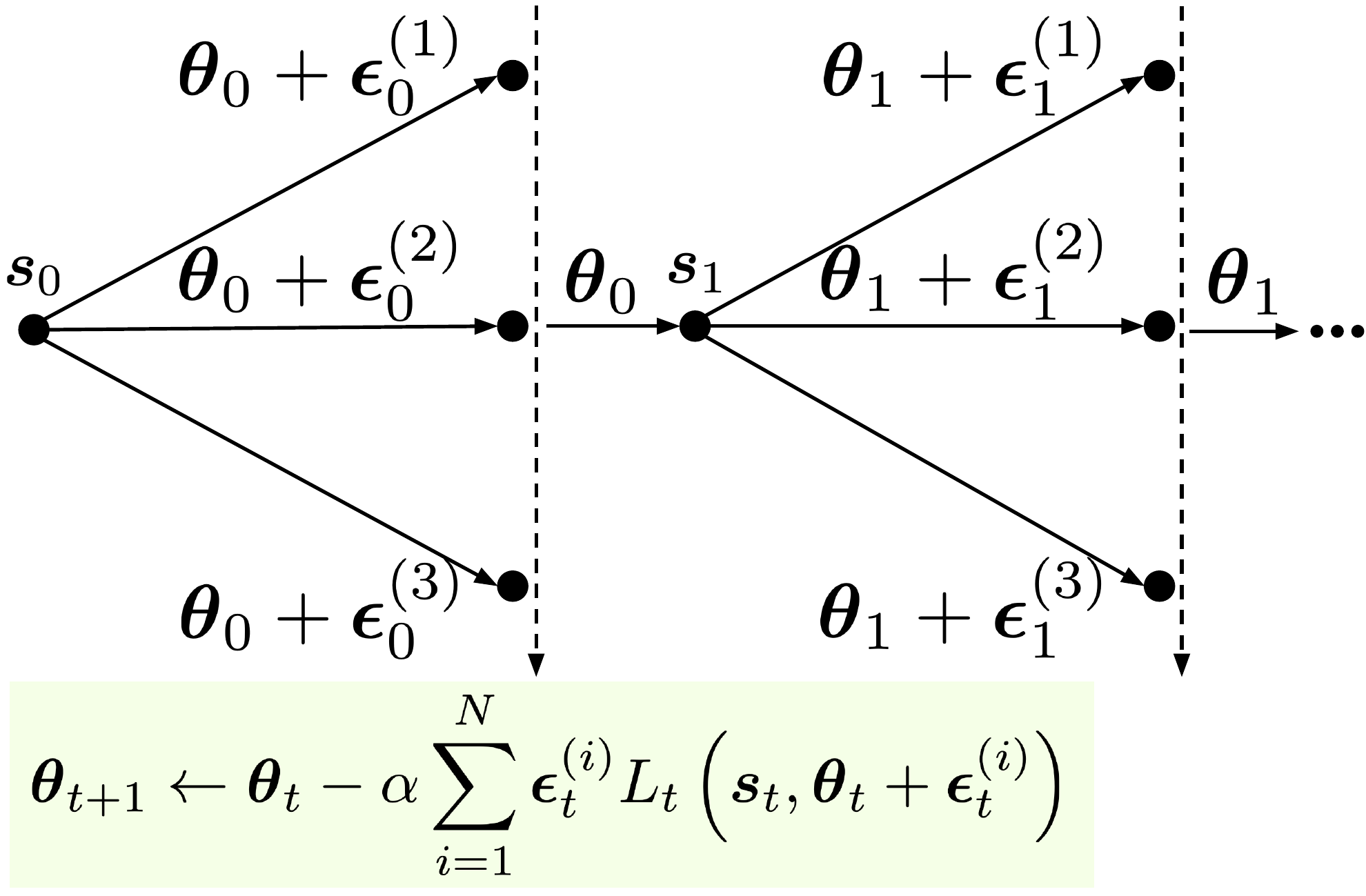}
    \caption{Computation graph for vanilla truncated ES.
    Note that truncated ES may be applied in two different ways.
    In the first approach, a single state $\bolds_t$ is maintained at time $t$, which serves as the common initialization for evaluating $N$ outer parameter perturbations $\{\boldtheta_t + \boldepsilon^{(i)}_t\}_{i=1}^N$. The losses obtained from these partial unrolls are aggregated to form a gradient estimate used to update $\boldtheta_t \to \boldtheta_{t+1}$. After each partial unroll, the states resulting from the $N$ perturbations are discarded, and the single state $\bolds_t$ is unrolled using the mean parameters $\boldtheta_t$, yielding the new initialization $\bolds_{t+1}$ for the subsequent unroll.
    In the second approach, separate states are maintained for each particle over the course of meta-optimization, and different random perturbations are used to unroll those states in each partial unroll. Both approaches suffer from truncation bias.
    }
    \label{fig:truncated-es-graph}
\end{figure}

\paragraph{Hyperparameter Optimization for UCI Regression.}
\begin{figure}[H]
    \centering
    \begin{subfigure}[t]{0.4\linewidth}
        \includegraphics[width=\linewidth]{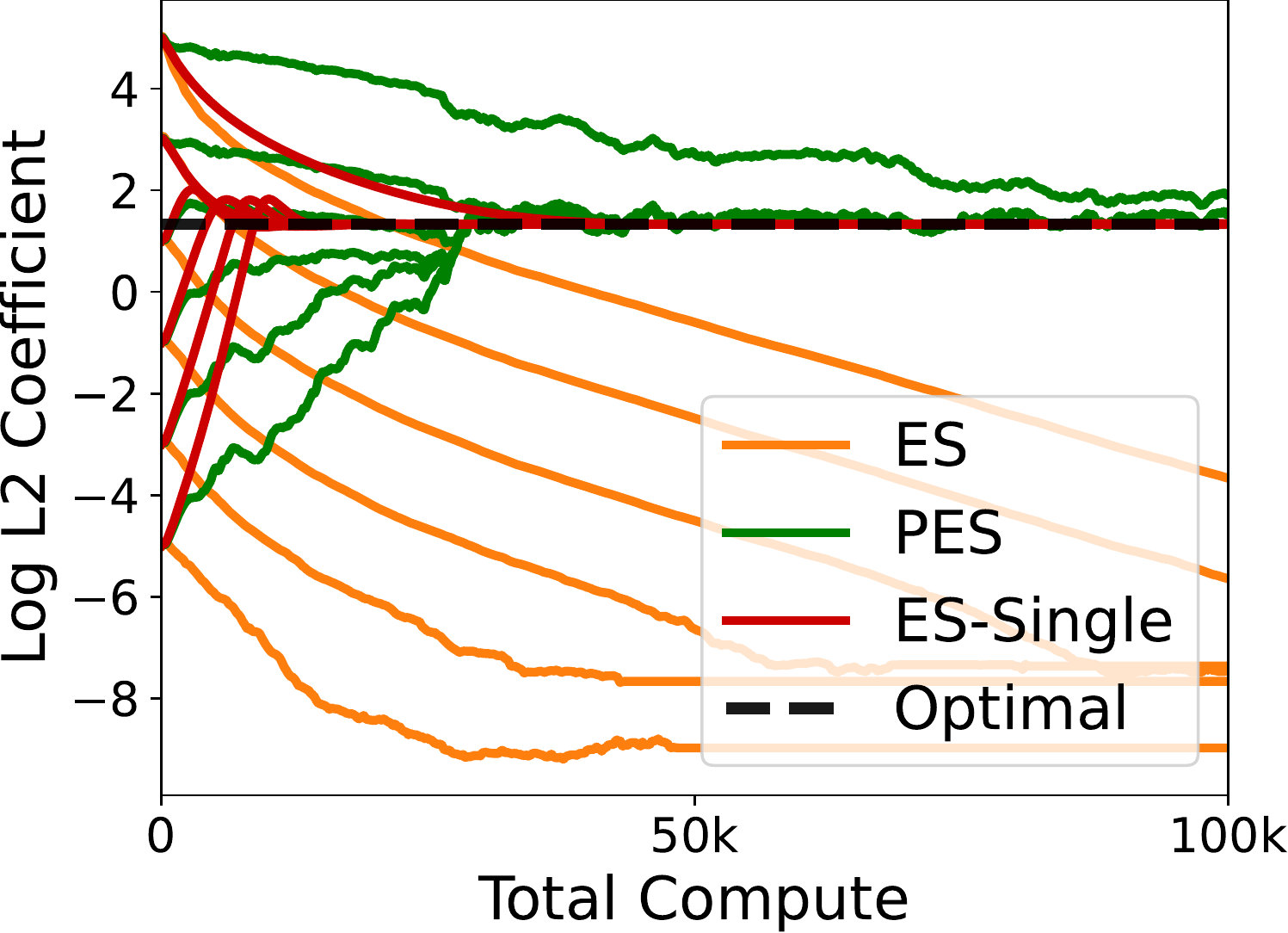}
        \caption{Meta-optimization trajectories for ES, PES, and ES-Single, starting from different initializations, $\{-5, -3, -1, 1, 3, 5 \}$ in log-space.}
        \label{fig:uci-meta-trajectories}
    \end{subfigure}
    \qquad
    \begin{subfigure}[t]{0.4\linewidth}
        \includegraphics[width=\linewidth]{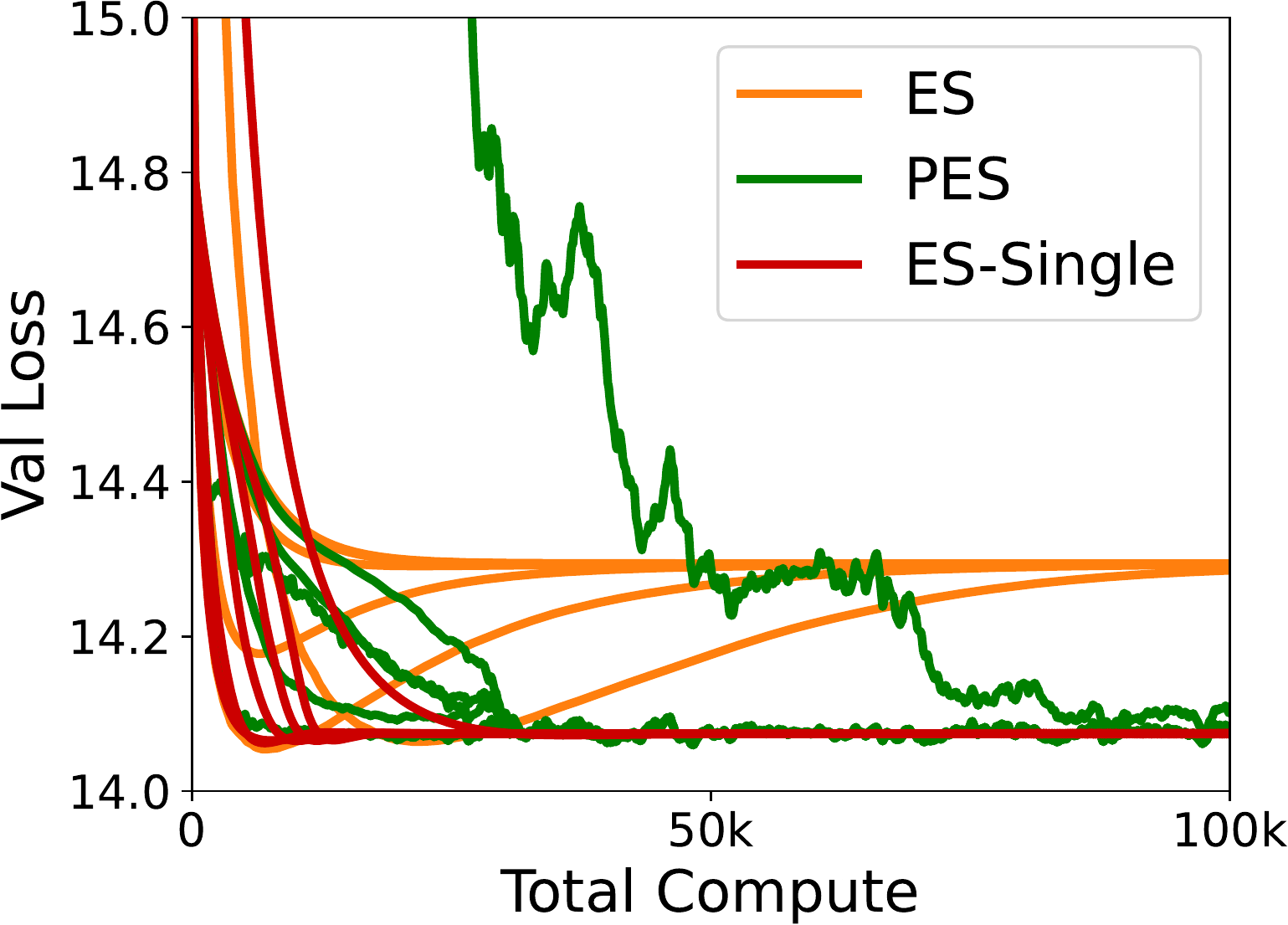}
        \caption{Validation losses attained by each method over the course of meta-optimization.}
        \label{fig:uci-val-loss}
    \end{subfigure}
    \caption{Comparing meta-optimization trajectories and validation losses obtained by ES, PES, and ES-Single when tuning a global $L_2$ regularization coefficient for linear regression on the UCI Yacht dataset.}
    \label{fig:uci}
\end{figure}
We also revisited the UCI linear regression task used in~\citet{vicol2021unbiased}, which demonstrates that truncation bias can also affect regularization hyperparameters (not only optimization hyperparameters).
In this task, we tune a global $L_2$ regularization coefficient for linear regression on the UCI Yacht dataset~\cite{asuncion2007uci}; the training set for this dataset is small, and thus strong regularization is necessary to obtain good validation performance.
In Figure~\ref{fig:uci-meta-trajectories}, we plot the optimal log $L_2$ coefficient obtained via a fine-grained grid search (dashed black line), and compare the meta-optimization trajectories of ES, PES, and ES-Single.
In Figure~\ref{fig:uci-val-loss}, we show the corresponding validation losses attained by each method.
In this task, the inner problem has an infinite horizon; it is never reset.
We found that ES-Single converged to the optimal $L_2$ value more rapidly and stably than PES.
All methods used Adam with learning rate 0.003 for outer optimization, $\sigma=0.01$, and $N=4$ particles.

\paragraph{Influence Balancing.}

Written out, the dynamical system update $\bolds_{t+1} = \boldA \bolds_{t} + \boldtheta$ is:
\begin{align}
    \begin{bmatrix}
      s_{t+1}^{(1)} \\
      s_{t+1}^{(2)} \\
      \vdots \\
      s_{t+1}^{(n)}
    \end{bmatrix}
    =
    \begin{bmatrix}
    \frac{1}{2} & \frac{1}{2} & 0 & \cdots & 0 \\
    0 & \frac{1}{2} & \frac{1}{2} & \cdots & 0 \\
    \vdots & \vdots & \vdots & \ddots & \vdots \\
    0 & 0 & 0 & \cdots & \frac{1}{2}
    \end{bmatrix}
    \begin{bmatrix}
    s_t^{(1)} \\
    s_t^{(2)} \\
    \vdots \\
    s_t^{(n)}
    \end{bmatrix}
    +
    \begin{bmatrix} 
    \theta \\
    \vdots \\
    \theta \\
    -\theta \\
    \vdots \\
    -\theta
    \end{bmatrix}
\end{align}
The loss $L_t$ computes the squared error on the first index in the state vector $\bolds_t$:
\begin{align}
    \mathcal{L}(\boldtheta)
    =
    \sum_{t=1}^T L_t(\boldtheta)
    =
    \sum_{t=1}^T \frac{1}{2} \left( s_t^{(1)} - 1 \right)^2
\end{align}
In our experiments, we used a state $\bolds_t$ of dimension $n=23$, and used $p=10$ positive copies of the scalar parameter $\theta$ concatenated with $n-p=13$ negative copies.
We initialized the state to a vector of ones, $\bolds_0 = \boldone$, and we inittialized $\theta = 0.5$.

\paragraph{Toy 2D Regression.}

Here, we evaluated ES-Single on a synthetic 2D task introduced by~\citet{vicol2021unbiased}, which aims to learn a linearly-decaying learning rate schedule for a regression problem.
The inner problem is designed to have a single global optimum but many local optima, such that small changes in the learning rate schedule can lead to convergence to different local minima; this yields a chaotic meta-loss landscape, and makes the task challenging for gradient-based outer optimizers.
The learning rate at iteration $t$ is parameterized by $\alpha_t = (1 - \frac{t}{T}) e^{\theta_0} + \frac{t}{T} e^{\theta_1}$.

The inner problem involves optimizing parameters $\boldx = (x_0, x_1)$ to minimize the following objective function:
\begin{align}
f(x_0, x_1) = \sqrt{x_0^2 + 5} - \sqrt{5} + \sin^2(x_1) \exp(-5 x_0^2) + 0.25 | x_1 - 100 |
\end{align}
We used total inner problem length $T=100$ and truncations of length $K=10$. For all ES-based methods, we used $N=100$ particles. For vanilla truncated ES, we used perturbation scale $\sigma=1$, while for PES and ES-Single, we used perturbation scale $\sigma=0.3$. For all methods, we performed outer optimization using Adam with learning rate 0.01.
The results are shown in Figure~\ref{fig:toy-regression}.
We found that ES-Single performed similarly to PES, both finding the optimal region of the meta-loss landscape, while the gradient-based methods (TBPTT, UORO, and RTRL) failed due to chaos in the meta-loss, and while ES failed due to truncation bias.

\begin{figure}[H]
    \centering
    \begin{subfigure}[t]{0.45\linewidth}
        \includegraphics[width=\linewidth]{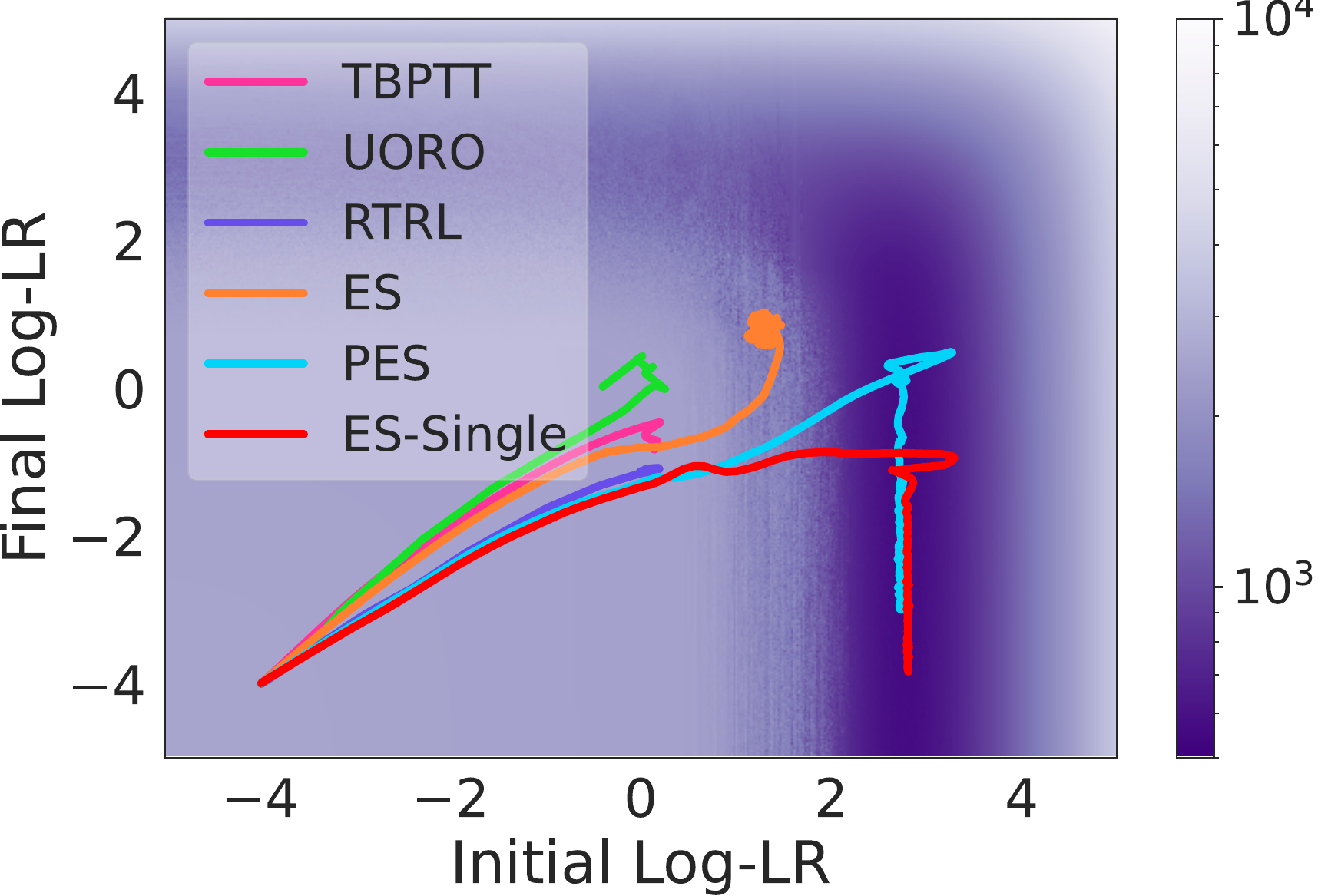}
        \caption{Meta-optimization trajectories.}
        \label{fig:toy-meta-opt}
    \end{subfigure}
    \hfill
    \begin{subfigure}[t]{0.45\linewidth}
        \includegraphics[width=\linewidth]{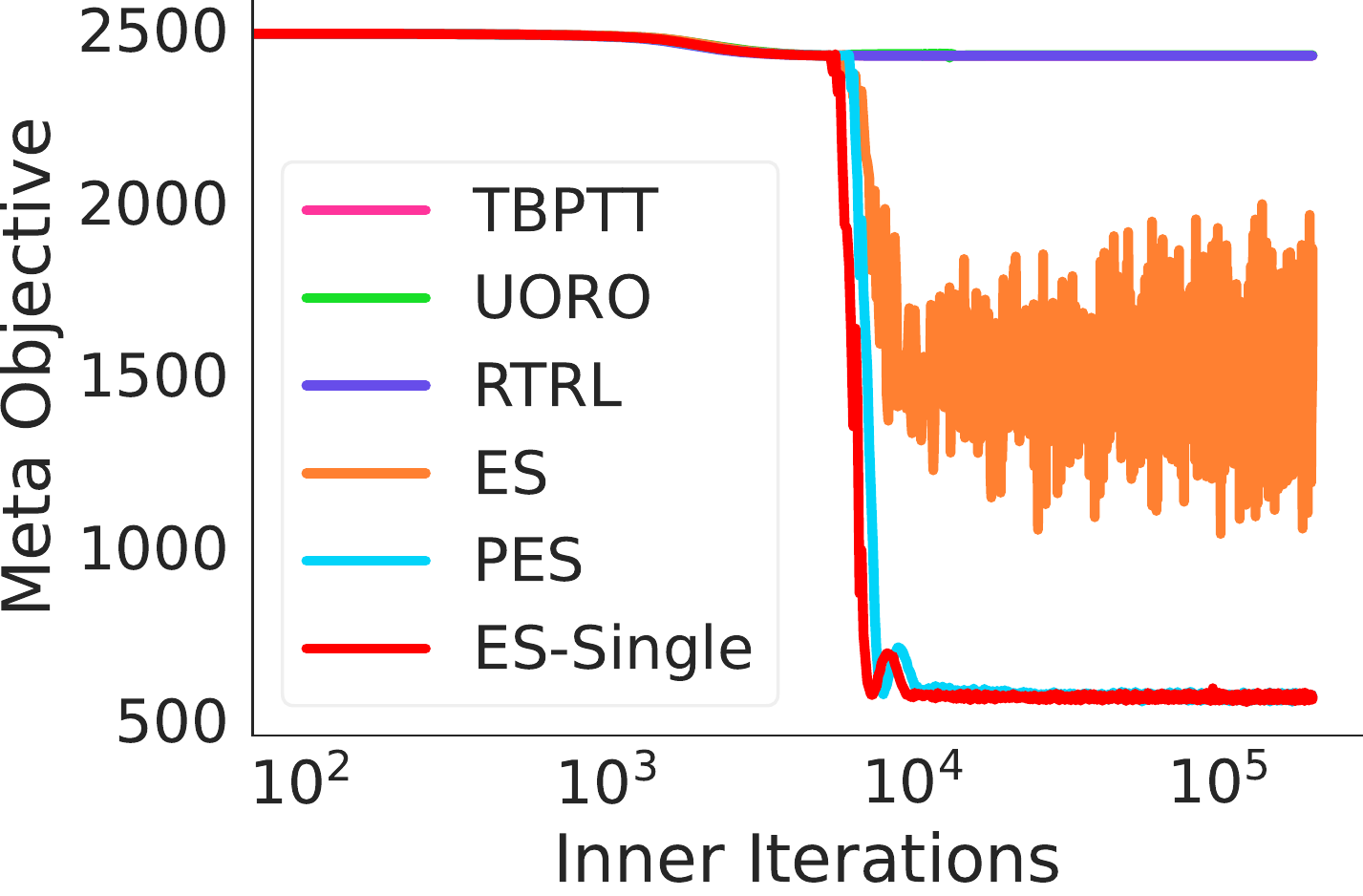}
        \caption{Meta-objective values attained by each algorithm.}
        \label{fig:toy-meta-loss}
    \end{subfigure}
    \caption{Toy regression problem, heatmap with meta-optimization trajectories overlaid, and meta-objective values over the course of training. Darker regions represent lower meta-objective values.}
    \label{fig:toy-regression}
\end{figure}

\paragraph{LSTM Copy Task.}

We train on minibatches of size 32, and feed the ES gradient estimates into Adam with default parameters $\beta_1 = 0.9, \beta_2=0.999$;
for each method, we performed a grid search over learning rates $\alpha \in \{0.01, 0.001, 0.0001\}$ and perturbation scales $\sigma \in \{ 0.1, 0.01, 0.001, 0.0001 \}$, choosing the best values based on final training performance.
We used $N=1000$ particles for each method.

As an additional result, in Figure~\ref{fig:lstm-copy-task-tbptt}, we compare PES and ES-Single to truncated backpropagation through time (denoted by TBP in the legend).
Similarly to truncated ES in Figure~\ref{fig:lstm-copy-task}, TBP also plateaus for each truncation length, as it is intrinsically limited with respect to the horizon that it can memorize.
\begin{figure}[H]
    \vspace{-0.2cm}
    \centering
    \includegraphics[width=0.4\linewidth]{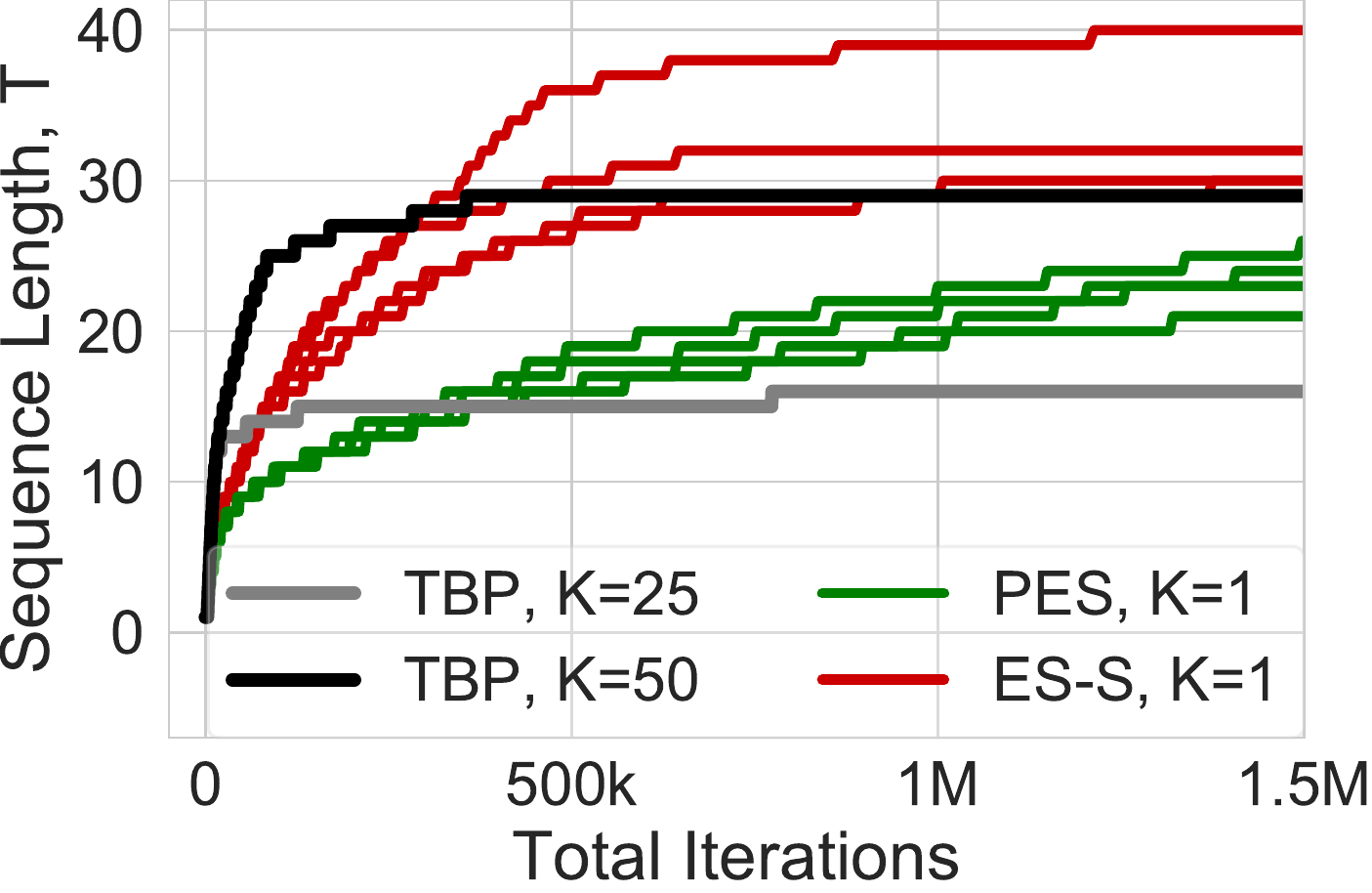}
    \vspace{-0.2cm}
    \caption{Maximum sequence length $T$ that is successfully copied in the copy task from~\citet{mujika2018approximating}.
    Curves of the same color use different random seeds.
    For the truncated backprop through time (TBP) baselines (gray and black curves), we show only the best result to reduce clutter.
    Here, the x-axis represents the number of tokens ingested by each approach (e.g., data-time rather than compute).
    }
    \label{fig:lstm-copy-task-tbptt}
\end{figure}

\paragraph{Meta-Learning MNIST LR Schedule.}
We used a two-hidden-layer MLP with 100 hidden units per layer and ReLU activations.
The learning rate schedule we meta-learn is applied to SGD with momentum, using a fixed momentum coefficient of 0.9.
The total inner problem length is $T=5000$, which is split into 500 partial unrolls of length $K=10$.
We used $N=1000$ particles and $\sigma=0.1$ for each estimator, and we used Adam with learning rate 1e-2 for outer optimization. 

When using PES, there is a trade-off between stability and convergence speed; using a large learning rate may yield fast progress, but may lead to unstable convergence, where meta-optimization diverges away from the optimum, as shown in Figure~\ref{fig:mnist-lr-large-lr}.
Reducing the learning rate may avoid such unstable behavior, but leads to much slower progress compared to ES-Single, as shown in Figure~\ref{fig:mnist-small-lr}.
In this experiment, ES-Single was more stable using larger learning rates than PES.

\begin{figure}[H]
    \centering
    \includegraphics[width=0.4\linewidth]{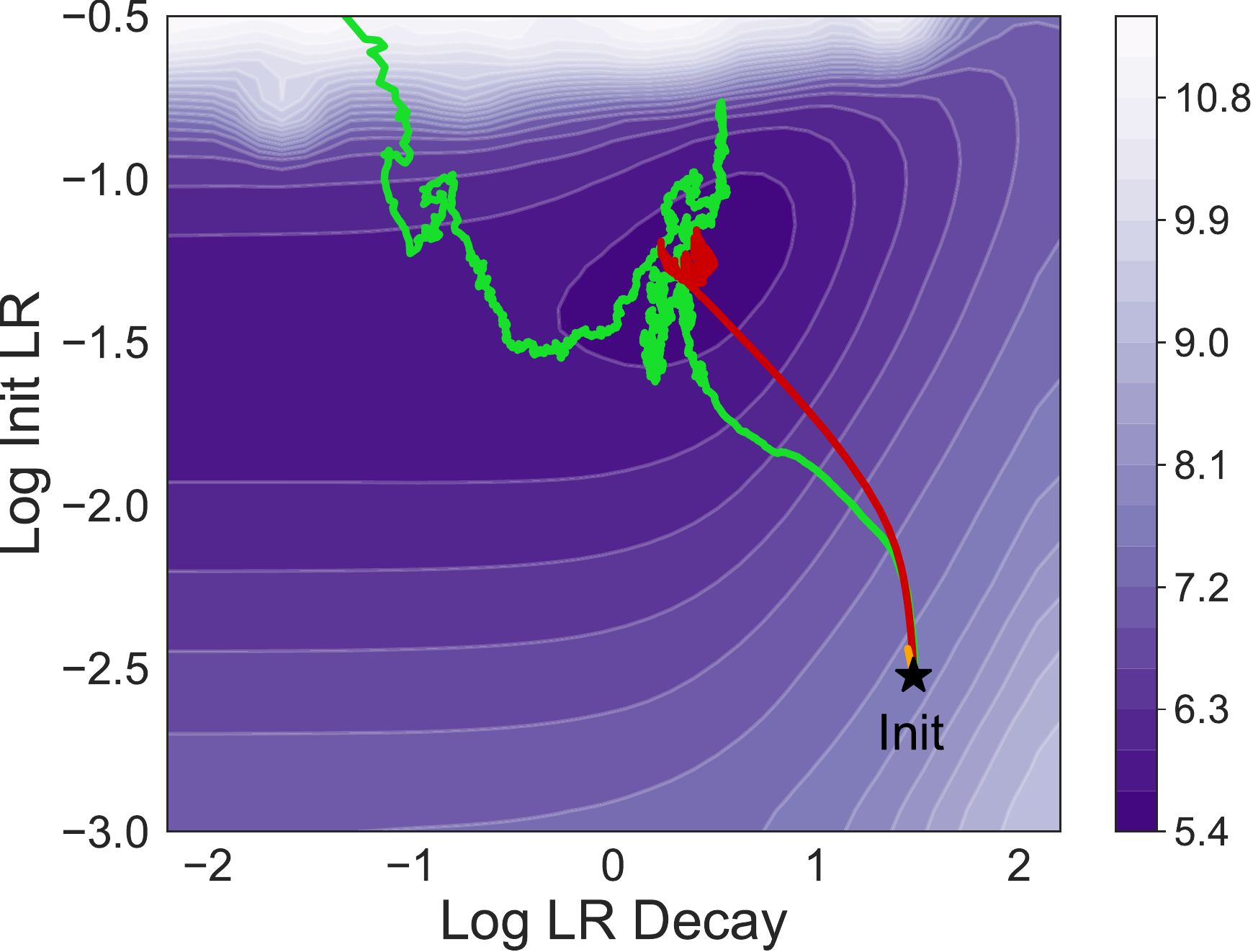}
    \caption{Meta-optimization trajectories of PES and ES-Single using truncations of length $K=1$ on an inner problem of length $T=5000$. With outer learning rate 1e-2, ES-Single performs well and converges stably to the optimum, while PES explodes due to variance. The red curve denotes ES-Single, while the green curve denotes PES.}
    \label{fig:mnist-lr-large-lr}
\end{figure}

\begin{figure}[H]
    \centering
    \includegraphics[width=0.4\linewidth]{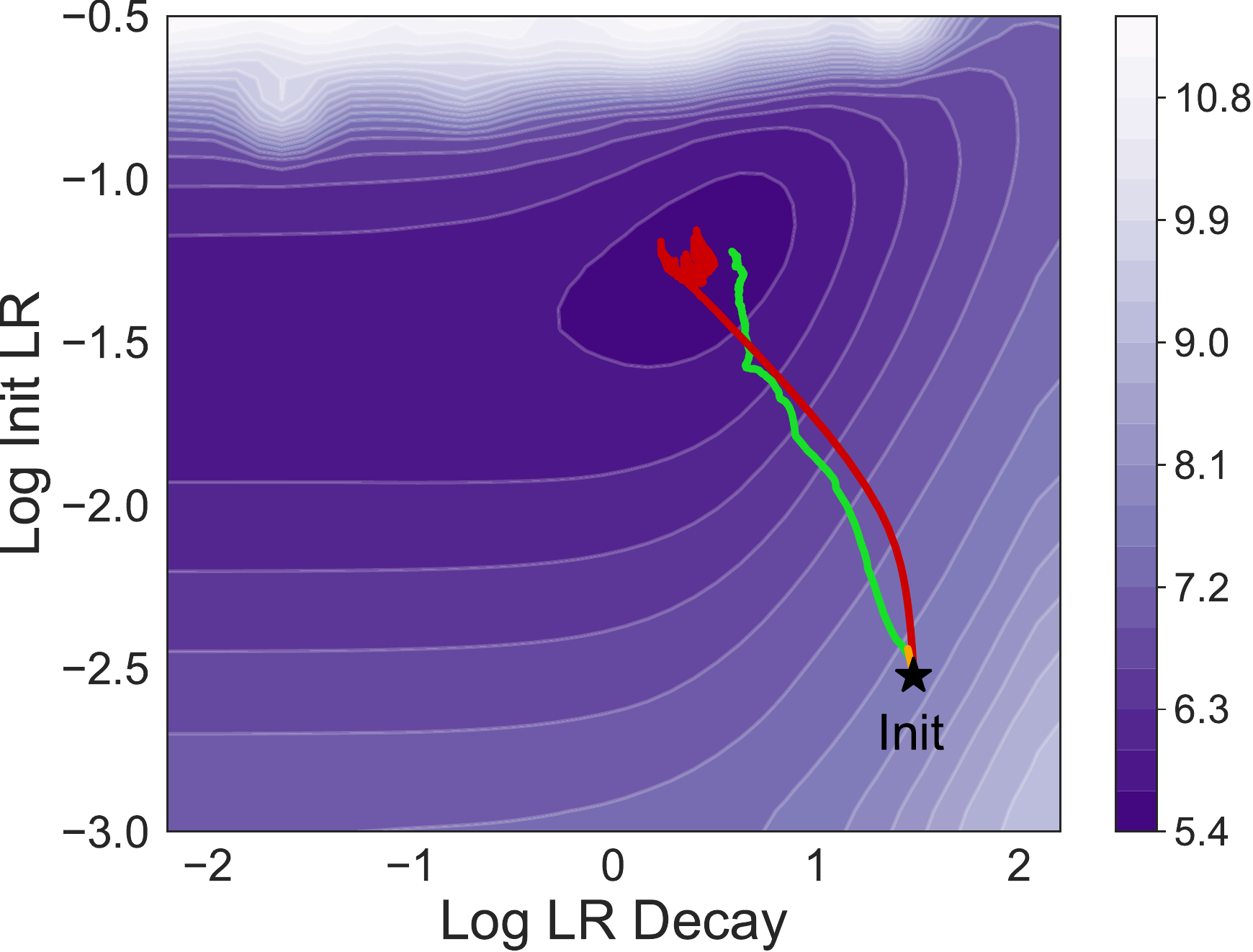}
    \quad
    \includegraphics[width=0.4\linewidth]{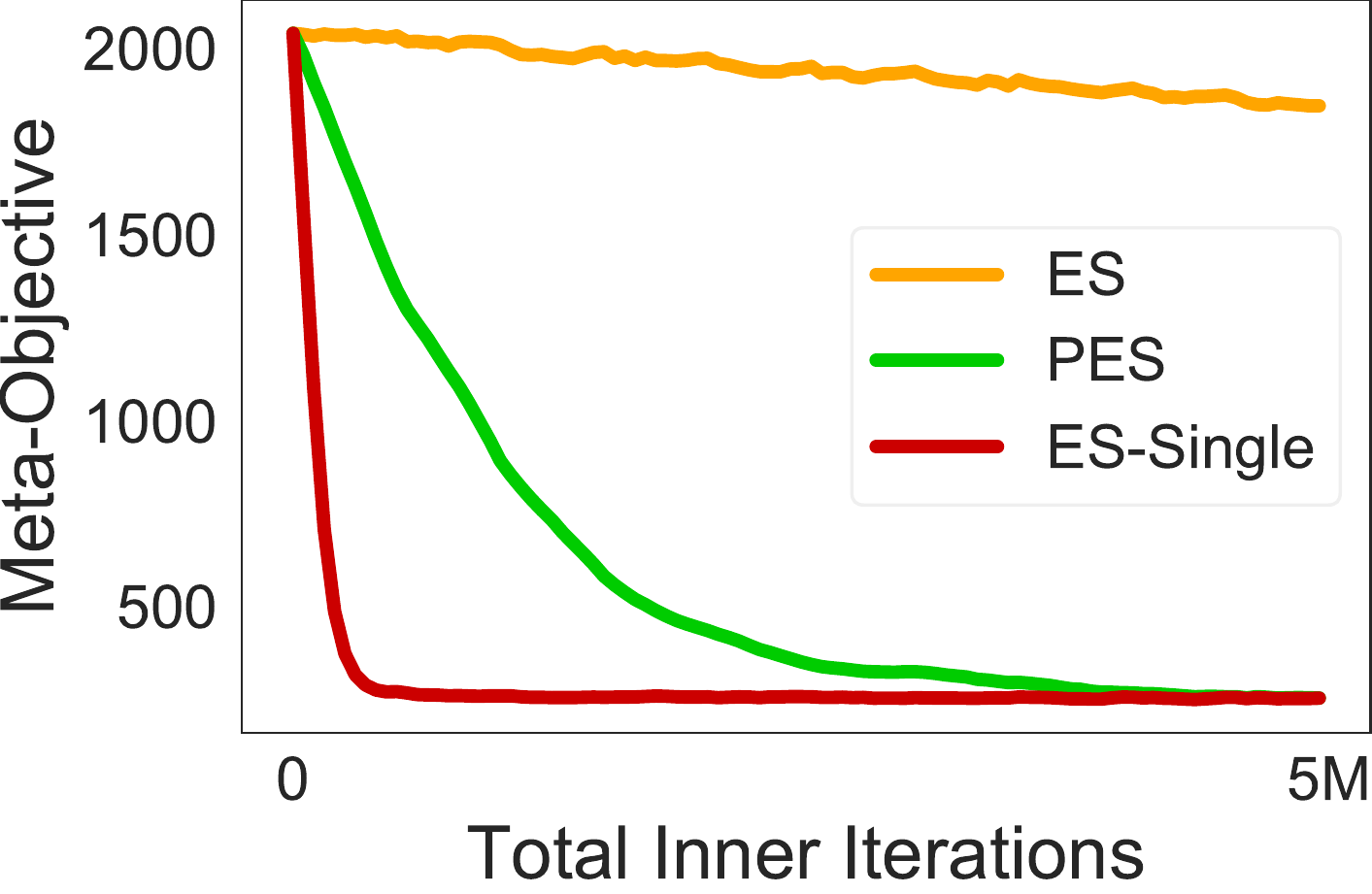}
    \caption{Meta-optimization trajectories of ES, PES, and ES-Single using truncations of length $K=1$ on an inner problem of length $T=5000$. The ES curve makes very little progress, so is obscured by the others. When using a smaller outer learning rate of 1e-3 for PES (to prevent it from exploding), it converges stably, but slowly.}
    \label{fig:mnist-small-lr}
\end{figure}

\paragraph{Tuning Many Hyperparameters.}
Following~\citet{vicol2021unbiased}, we trained an MLP with 5 hidden layers and ReLU activations on FashionMNIST, using the sum of validation losses along the inner optimization trajectory as the meta-objective. The total inner problem length was $T=1000$, and we used truncations of length $K=10$, yielding 100 partial unrolls per inner problem. For ES, PES, and ES-Single, we used perturbation scale $\sigma=0.3$ and $N=10$ particles. We used Adam as the outer optimizer, with learning rate 1e-2. The inner problem used SGD with momentum as the inner optimizer, and trained on minibatches of size 100. We tuned 29 hyperparameters in total, consisting of: a separate learning rate and momentum coefficient for each weight matrix and bias vector in the MLP (yielding 2 hyperparameters for each of the 6 weight matrices and 6 bias vectors, for 24 hyperparameters); and the number of hidden units per layer, which yields 5 additional hyperparameters.
To tune the number of hidden units per layer, we used a nested dropout scheme, where the hyperparameter specifies the fraction of the maximum number of units that should be used. We set the maximum number of hidden units to 100 in each layer. All hyperparameters are optimized in the unconstrained space; each one is mapped to a constrained space by a hyperparameter-specific transformation. For learning rates, we used an exponential mapping; for momentum coefficients, we used the logistic sigmoid (such that the momentum is constrained within $(0, 1)$); and for the number of hidden units per layer, we used the sigmoid to map the unconstrained parameterization to $(0, 1)$, which represents the structured dropout rate.

For ES, PES, and ES-Single, we initialized the hyperparameters randomly: the learning rates were initialized uniformly at random in log-space, in the range $(1e-4, 1e-2)$; the momentum coefficients were initialized uniformly at random in logit space corresponding to the sigmoid-transformed range $(0.01, 0.9)$; and the number of hidden units is initialized randomly in logit-space corresponding to the sigmoid-transformed range $(0.2, 0.8)$.
In Figure~\ref{fig:tuning-many-hparams}, we track the best meta-objective value obtained so far over the course of meta-optimization. We measure the performance as a function of total compute, which takes into account the total number of inner iterations performed (considering the number of particles used). We ran each method four times using different random seeds, and plot the mean performance as well as the min and max shown via shaded regions.

\paragraph{Telescoping Sums.}
We trained a 2-layer MLP with 100 hidden units per layer.
As a computationally tractable proxy for the loss on the full training set, we sample a minibatch of size 1000 at the start of each inner problem, which is kept fixed for the duration of the problem---for telescoping sums, we evaluate the loss on the same minibatch after each partial unroll, as opposed to sampling a different random minibatch in each step, as we do when targeting the sum of losses.

\paragraph{Meta-Gradient Comparison.}
Here, we illustrate how the difference in variance between ES-Single and PES manifests in a simple hyperparameter optimization task.
We tune a global learning rate used to train an MLP on MNIST.
Because the outer parameter is 1-dimensional, we can find the global optimum via a fine-grained grid search, and visualize the meta-optimization iterates and meta-gradients of each algorithm.
Figure~\ref{fig:global-lr-variance} compares the learning rates adapted by PES and ES-Single over the course of meta-optimization; the dashed vertical lines indicate the start of a new inner problem (training is performed in lock-step, where all particles progress through the inner problem at identical iterates).
We found that ES-Single converged stably towards the optimal solution, while PES was less stable due to variance. In Figure~\ref{fig:global-lr-gradients}, we show the gradient estimates produced by PES and ES-Single: we observe that the PES gradient exhibits increasingly large fluctuations over the course of each inner problem, while the ES-Single gradient is more stable.
For this task, we used total inner problem length $T=5000$, partial unrolls of length $K=10$, and $N=10$ particles. For each estimator, we performed a grid search over the outer learning rate and perturbation scale, choosing the best values based on convergence speed and final performance.

\begin{figure}
    \centering
    \begin{subfigure}[t]{0.35\linewidth}
        \includegraphics[width=\linewidth]{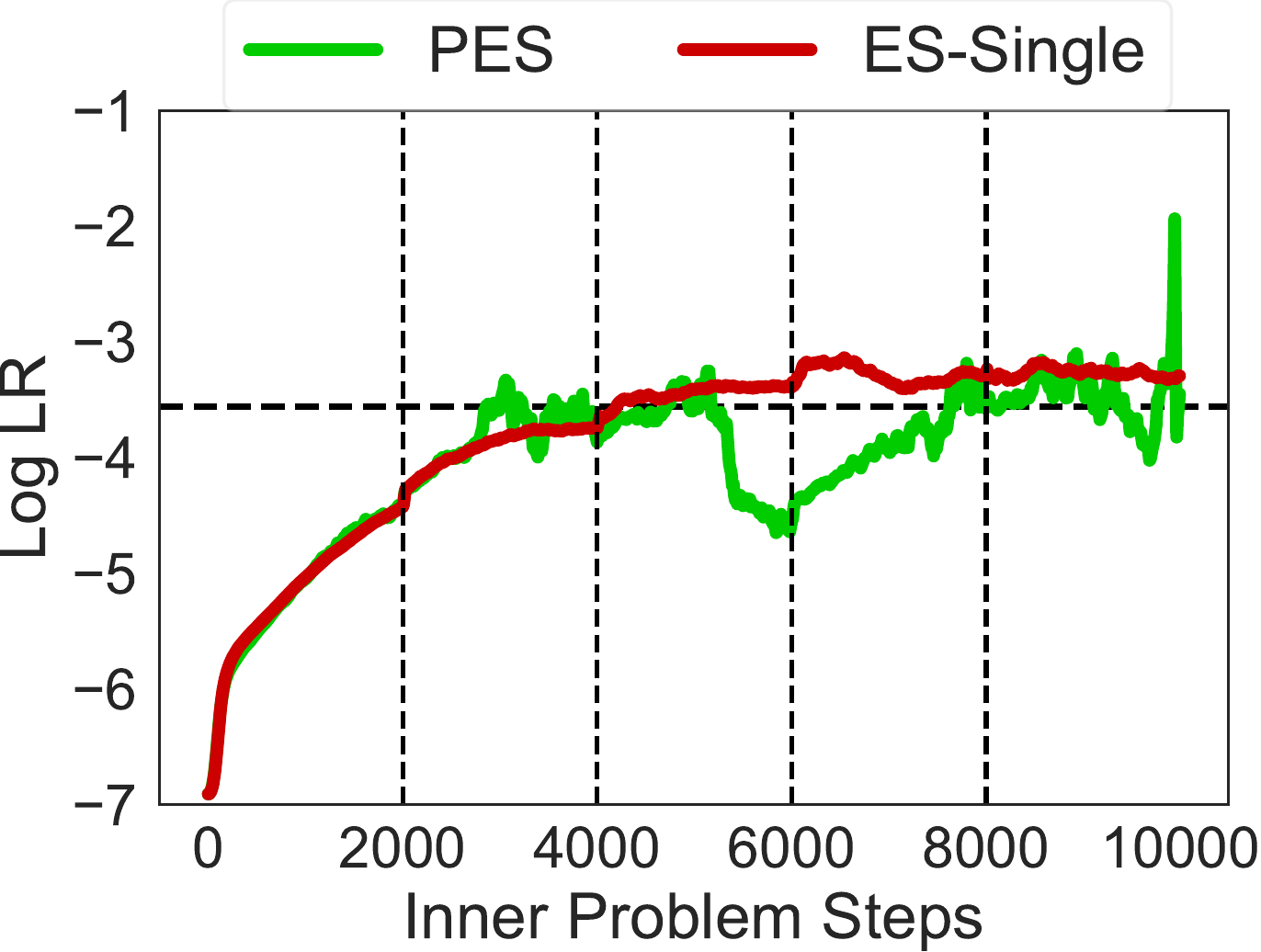}
        \caption{Adaptation of the log-learning rate.}
        \label{fig:global-lr-variance}
    \end{subfigure}
    \qquad
    \begin{subfigure}[t]{0.35\linewidth}
        \includegraphics[width=\linewidth]{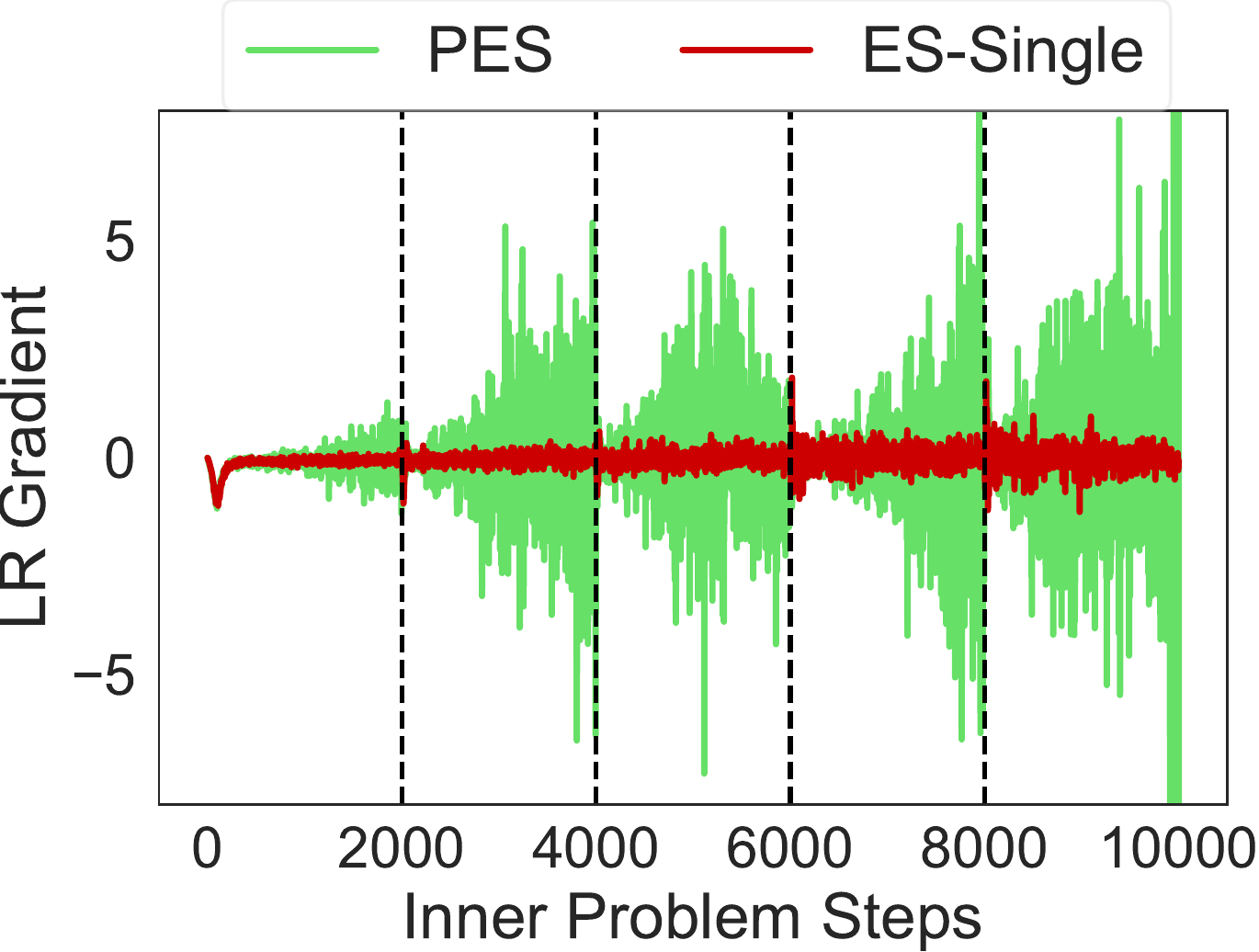}
        \caption{PES and ES-Single meta-gradients over the course of multiple inner problems.}
        \label{fig:global-lr-gradients}
    \end{subfigure}
    \vspace{-0.2cm}
    \caption{Comparing meta-optimization performance and meta-gradients from PES and ES-Single on a simple hyperparameter optimization task, where we aim to tune a global learning rate for training an MLP on MNIST.}
    \label{fig:pes-meta-gradient-comparison}
    % \vspace{-1.6cm}
\end{figure}

\clearpage

\subsection{Lockstep vs Breakstep Training}
\label{app:lockstep-breakstep}
\begin{wrapfigure}[14]{r}{0.4\linewidth}
    \vspace{-0.8cm}
    \centering
    \includegraphics[width=\linewidth]{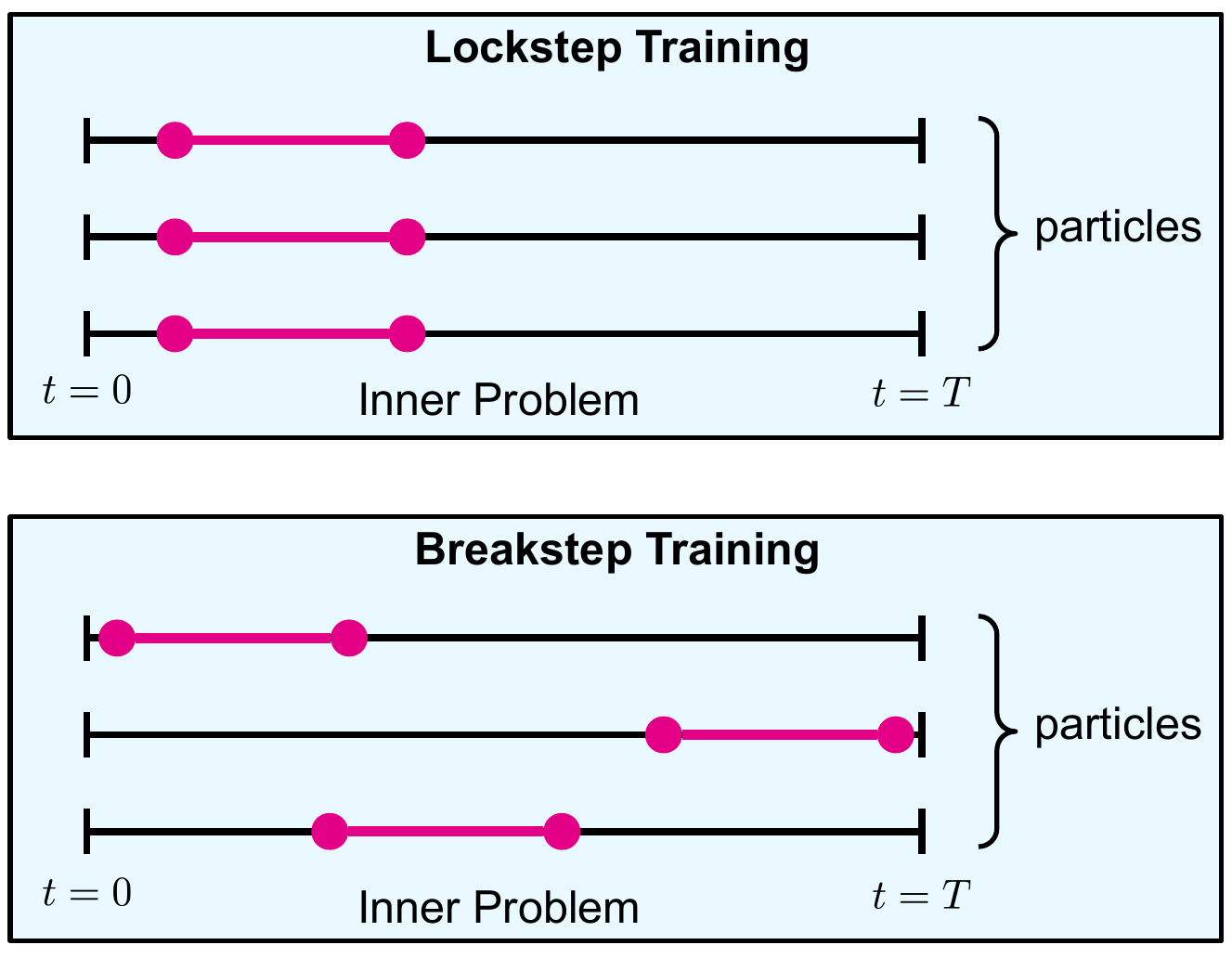}
    \vspace{-0.8cm}
    \caption{\small Conceptual illustration of lockstep and breakstep training, for methods that aggregate information across a collection of particles.}
    \label{fig:lockstep-breakstep}
\end{wrapfigure}
Vanilla truncated ES, PES, and ES-Single all aggregate information across a collection of particles.
In general, for such methods, there are two approaches for initializing and unrolling the particles: 1) \textit{lockstep training}, where all particles are initialized identically, at step $t=0$ of the inner problem, and progress through the inner optimization synchronously; or 2) \textit{breakstep} training, in which each particle pair (considering antithetic sampling) is initialized separately, potentially at an arbitrary starting step $t$ of the inner problem, and where particles progress through the inner problem asynchronously (e.g., one particle pair may be unrolled from $t=K$ to $t=2K$ while another pair is unrolled from $t=4K$ to $t=5K$). These two approaches are illustrated in Figure~\ref{fig:lockstep-breakstep}.
Often, both approaches work well; most of the experiments in this paper use lockstep training, except the learned optimizer experiment in Section~\ref{sec:lopt}, which uses breakstep training.

\subsection{Effect of Smoothing}
\begin{wrapfigure}[12]{r}{0.4\linewidth}
    \vspace{-0.6cm}
    \centering
    \includegraphics[width=\linewidth]{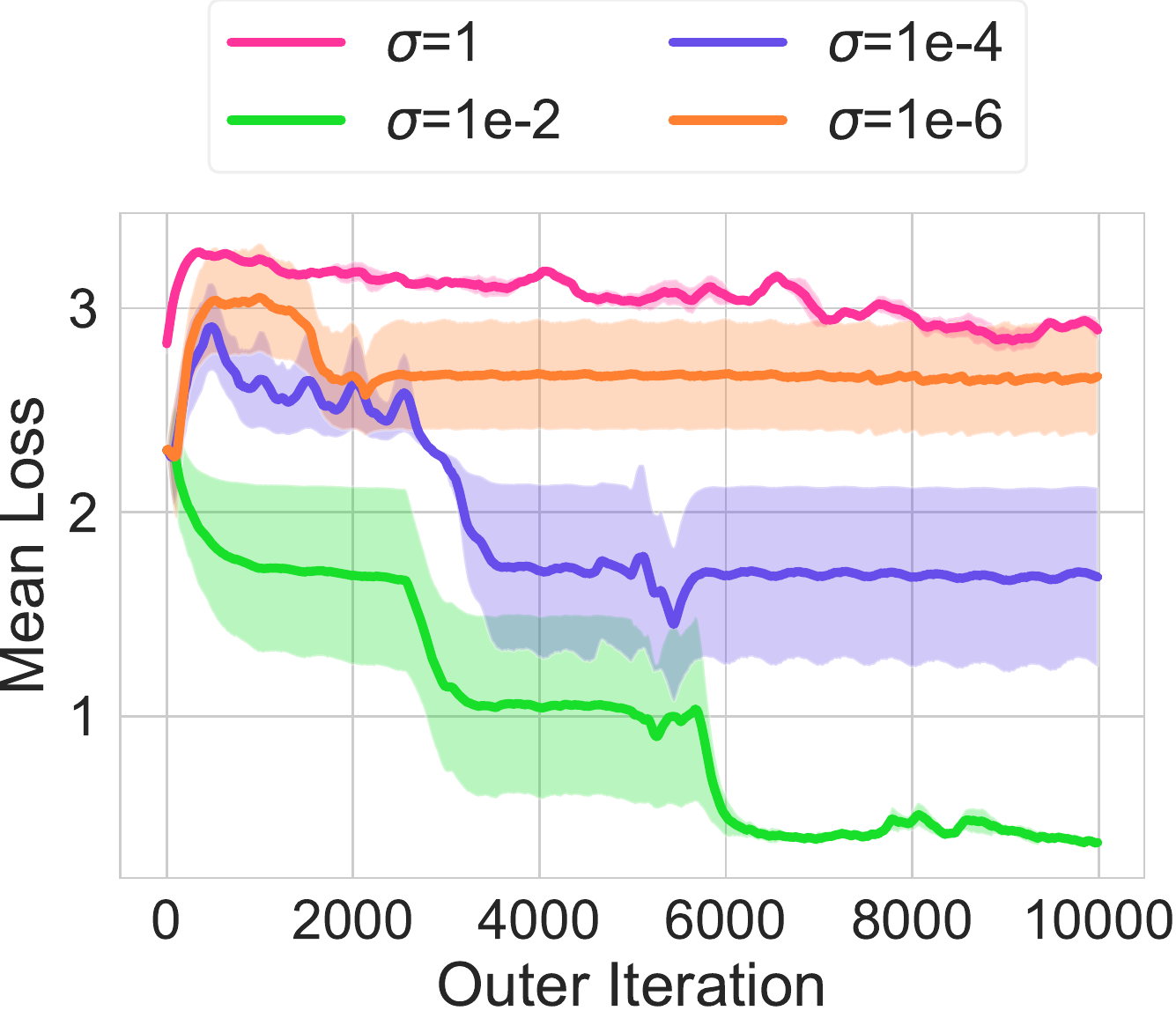}
    \vspace{-0.6cm}
    \caption{Ablation over the perturbation scale $\sigma$ used to train a learned optimizer, targeting an MLP on FashionMNIST.}
    \label{fig:perturbation-scale-lopt}
    % \vspace{-0.4cm}
\end{wrapfigure}
Meta-learning tasks such as hyperparameter optimization and learned optimizer training often lead to chaotic meta-loss landscapes, that are not amenable to optimization via gradient-based methods.
Here, we performed an ablation over the perturbation scale $\sigma$ (that controls the degree of smoothing) for ES-Single, used to optimize an MLP learned optimizer, similarly to Section 4.4 in the paper.
Small perturbation scales lead to behavior similar to gradient-based methods, which may get stuck in sub-optimal local minima in chaotic loss landscapes.
As shown in Figure~\ref{fig:perturbation-scale-lopt}, when the perturbation scale is too small, $\sigma =$ 1e-6, meta-optimization fails to make progress; in contrast, using an appropriate scale $\sigma =$ 1e-2 leads to stable convergence.

\vspace{2.5cm}

\subsection{Reinforcement Learning}
\begin{wrapfigure}[11]{r}{0.4\linewidth}
    \centering
    \vspace{-0.5cm}
    \includegraphics[width=\linewidth]{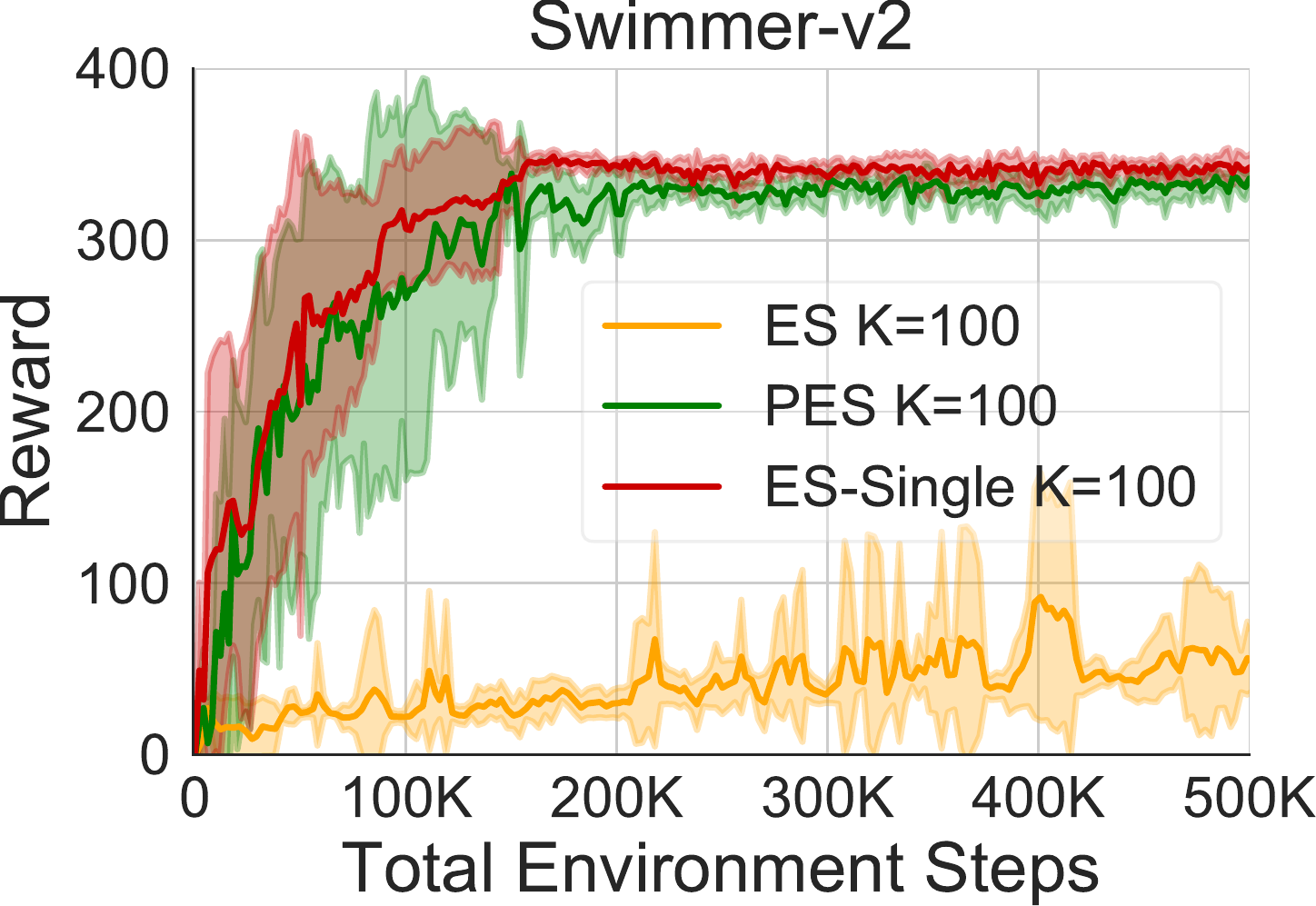}
    \vspace{-0.6cm}
    \caption{Learning a linear policy for the Swimmer MuJoCo environment using vanilla truncated ES, PES, and ES-Single. Here, $T=1000$ and $K=100$. Shaded regions denote standard deviations over 6 random seeds.}
    \label{fig:swimmer-es-single}
\end{wrapfigure}
While investigating truncated ES-based methods for RL is an area for future work, we provide a proof-of-concept experiment here.
We ran ES-Single on the continuous control task used in PES, which trains linear policy on the Swimmer MuJoCo environment.
We compared vanilla truncated ES, PES, and ES-Single, all using partial unrolls of length $K=100$.
The results are shown in Figure~\ref{fig:swimmer-es-single}, where the shaded regions denote the standard deviations over 6 random seeds.
We found that ES-Single slightly outperformed PES, with smaller standard deviation, and more stable convergence to the optimal episode return.

\vspace{2cm}

\subsection{CIFAR-10 Experiments}
\begin{figure}[t]
    \centering
    \begin{subfigure}[t]{0.44\linewidth}
        \includegraphics[width=\linewidth]{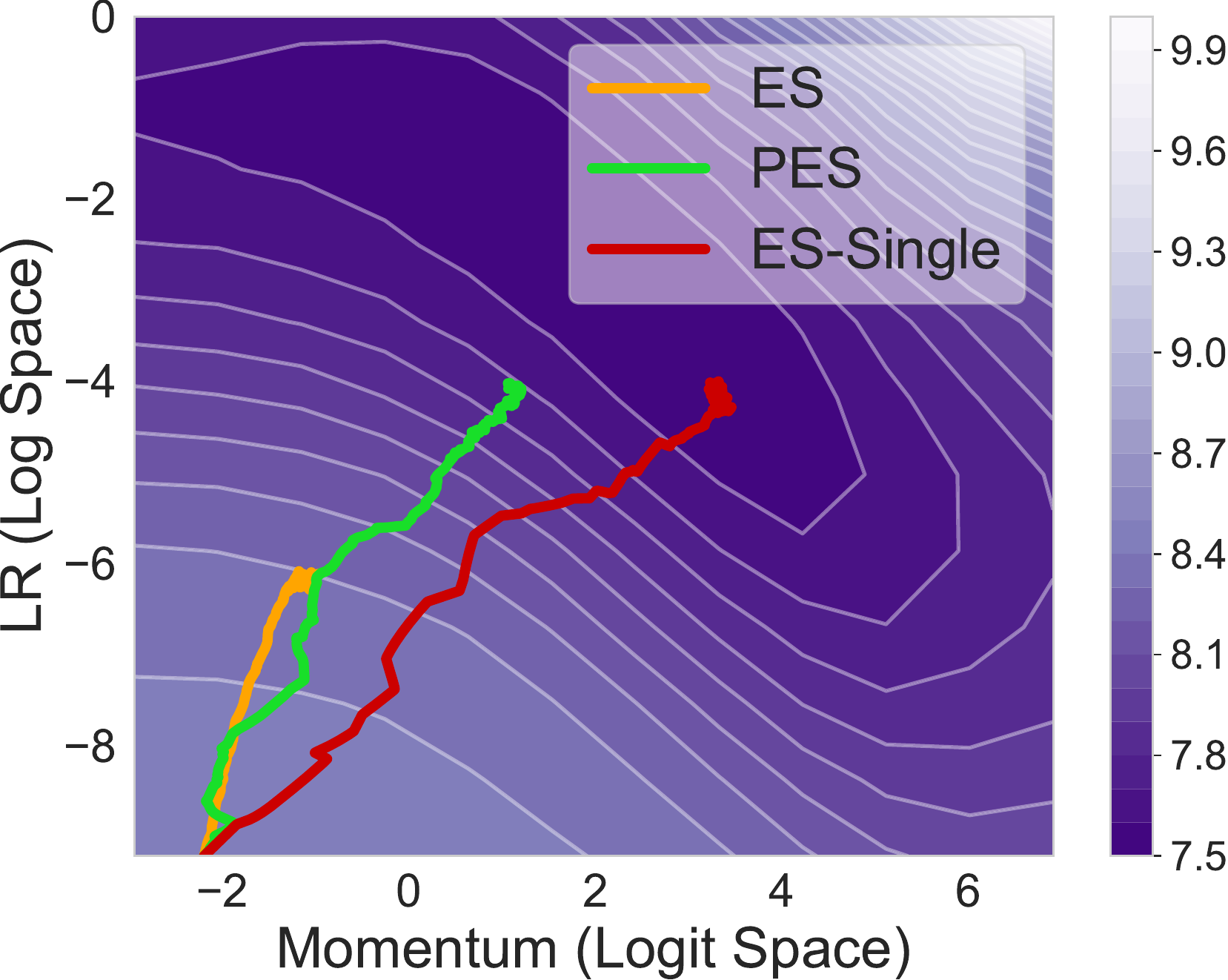}
        \caption{Meta-optimization trajectories.}
        \label{fig:}
    \end{subfigure}
    \qquad
    \begin{subfigure}[t]{0.46\linewidth}
        \includegraphics[width=\linewidth]{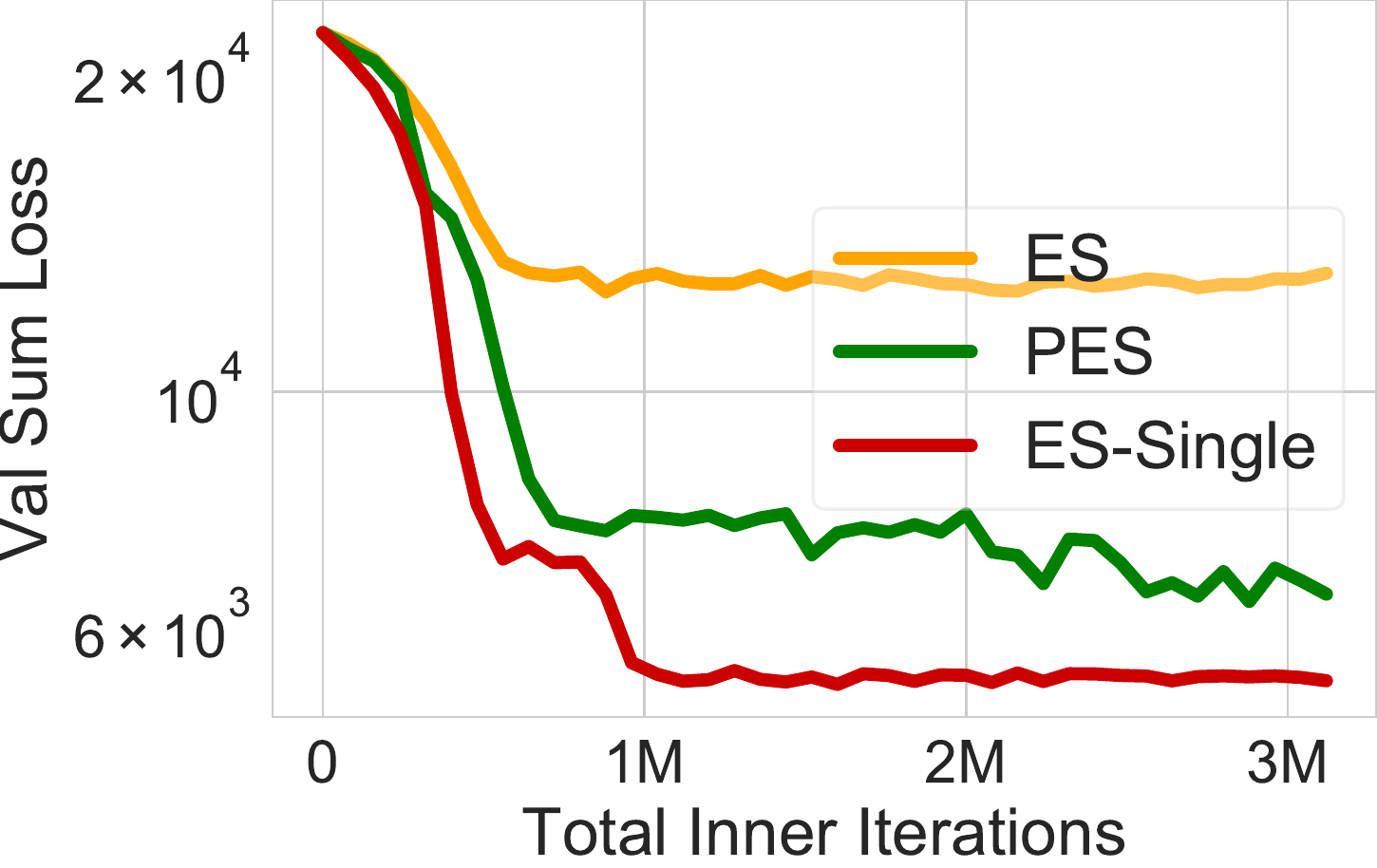}
        \caption{Meta-objective values.}
        \label{fig:}
    \end{subfigure}
    % \vspace{-0.2cm}
    \caption{Tuning the learning rate and momentum coefficient for SGDm, used to optimize a ResNet on CIFAR-10. Here, $T=5000$ and $K=20$.}
    \label{fig:es-momentum}
    \vspace{-0.4cm}
\end{figure}
\begin{wrapfigure}[15]{r}{0.4\linewidth}
    \centering
    \vspace{-0.6cm}
    \includegraphics[width=\linewidth]{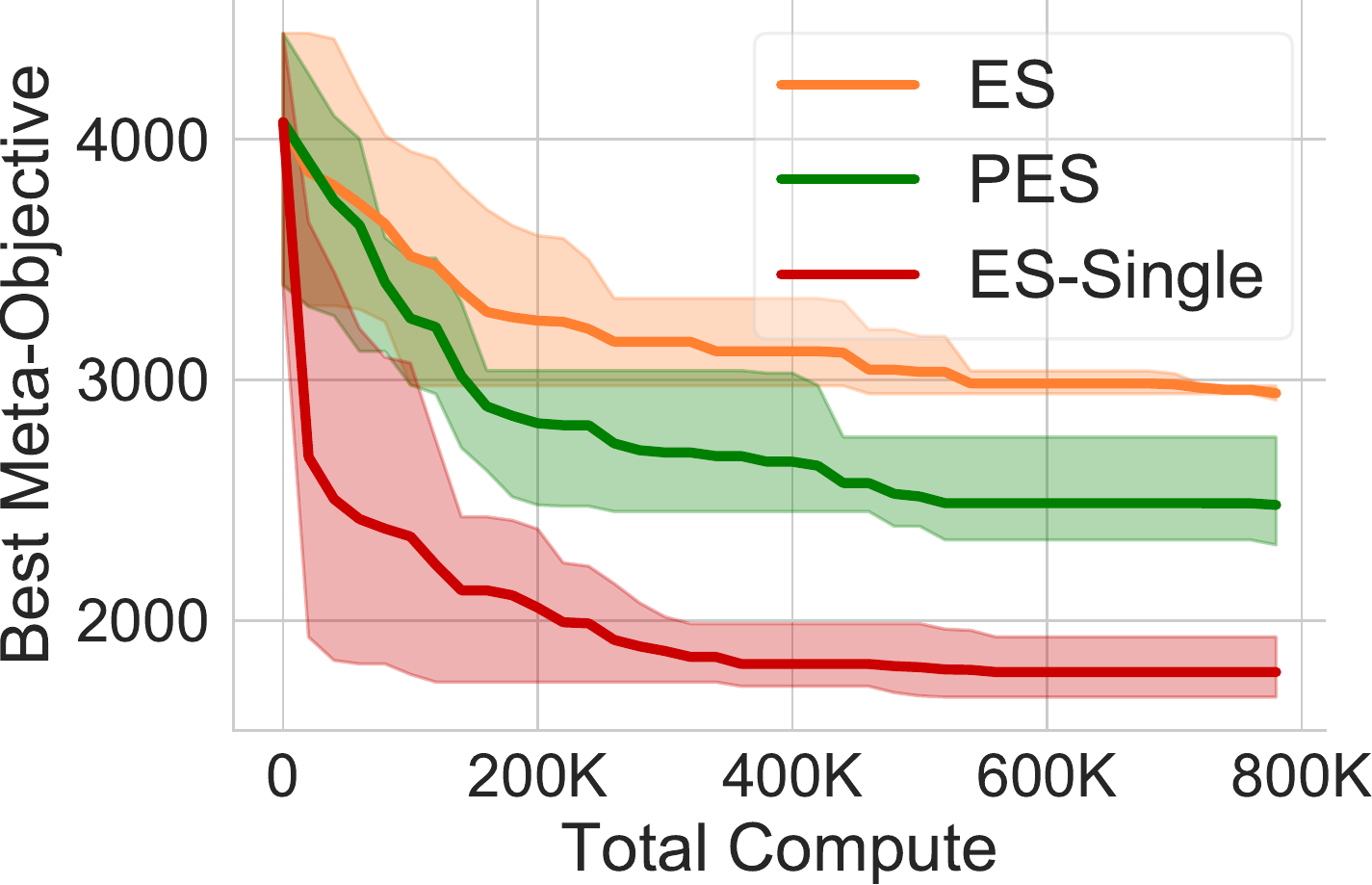}
    \vspace{-0.6cm}
    \caption{Tuning (continuous) per-parameter block learning rates and momentum coefficients, as well as the (discrete) number of channels per convolutional layer in a ResNet trained on CIFAR-10. The inner problem has length $T=2000$ and we use truncations of length $K=10$. Shaded regions denote the min/max performance over 3 random seeds.}
    \label{fig:cifar10-tuning-many-params}
    % \vspace{-0.3cm}
\end{wrapfigure}
Here, we applied ES-Single to two hyperparameter optimization tasks in which we train a ResNet on CIFAR-10.
We used a 1.6M parameter Myrtle.ai ResNet architecture.
In both cases, the meta-objective is the sum of validation losses over the inner problem.
First, we tuned the global learning rate and momentum coefficient for SGDm.
Here, the inner problem had length $T=5000$, and we used truncations of length $K=20$ for all approaches.
As shown in Figure~\ref{fig:es-momentum}, ES-Single converged to the optimal region substantially faster than PES.
Second, we tuned 24 continuous and discrete hyperparameters simultaneously.
In particular, we tuned per-parameter-block learning rates and momentum coefficients, as well as the number of channels per convolutional layer.
Here, the inner problem had length $T=2000$, and we used truncations of length $K=10$ for all approaches.
The results are shown in Figure~\ref{fig:cifar10-tuning-many-params}; we found that ES-Single substantially outperformed ES and PES, achieving lower meta-objective values using less total compute.

\subsection{Comparison to DODGE}
\label{sec:dodge-comparison}

Here, we provide an extended discussion on the similarities and differences between ES-Single and DODGE~\cite{silver2021learning}. Like ES-Single, DODGE is a method for computing gradients in unrolled computation graphs.
DODGE is an approximation of RTRL: rather than maintaining the expensive recurrent Jacobian matrix $\frac{d \bolds_t}{d \boldtheta}$ over time, it takes the directional derivative $(\nabla_{\boldtheta} L(\boldtheta) \cdot \boldu) \boldu$ along a specific vector $\boldu$. This reduces the space complexity of the algorithm, because it only needs to store and propagate a vector $\frac{d \bolds_t}{d \boldtheta} \boldu$:

\begin{align}
    \frac{d L(\boldtheta)}{d \boldtheta} \boldu
    &=
    \sum_{t=1}^T \frac{d L_t(\bolds_t, \boldtheta)}{d \boldtheta} \boldu \\
    \frac{d L_t(\bolds_t, \boldtheta)}{d \boldtheta} \boldu
    &=
    \frac{\partial L_t(\bolds_t, \boldtheta)}{\partial \boldtheta} \boldu + \frac{\partial L_t(\bolds_t, \boldtheta)}{\partial \bolds_t} \underbrace{\frac{d \bolds_t}{d \boldtheta} \boldu}_{\boldc_t} \\
    \boldc_t = \frac{d \bolds_t}{d \boldtheta} \boldu
    &=
    \frac{d f(\bolds_{t-1}, \boldtheta)}{d \boldtheta} \boldu
    =
    \frac{\partial f(\bolds_{t-1}, \boldtheta)}{\partial \boldtheta} \boldu + \frac{\partial f(\bolds_{t-1}, \boldtheta)}{\partial \bolds_{t-1}} \underbrace{\frac{d \bolds_{t-1}}{d \boldtheta} \boldu}_{\boldc_{t-1}}
\end{align}
Here, $\boldc_t$ is a \textit{carry term} that propagates information over the course of a full inner problem.
While DODGE and ES-Single quite distinct—ES-Single is gradient-free while DODGE is gradient-based—they share some conceptual similarities. In particular, both perform meta-optimization within a subspace spanned by a set of direction vectors.
In DODGE, a few possibilities were given for determining these directions, including drawing samples from an isotropic Gaussian similarly to ES-Single.
A critical difference is that ES-Single smooths the outer loss landscape, allowing for optimization over chaotic surfaces that arise in meta-optimization.

As shown in Figure~\ref{fig:dodge}, when applying DODGE to a 2D meta-optimization task, it gets stuck when it reaches a chaotic part of the landscape, similarly to the other gradient-based methods (TBPTT, RTRL, and UORO~\cite{tallec2017unbiased}).

\begin{figure}[t]
    \centering
    \begin{subfigure}[t]{0.4\linewidth}
        \includegraphics[width=\linewidth]{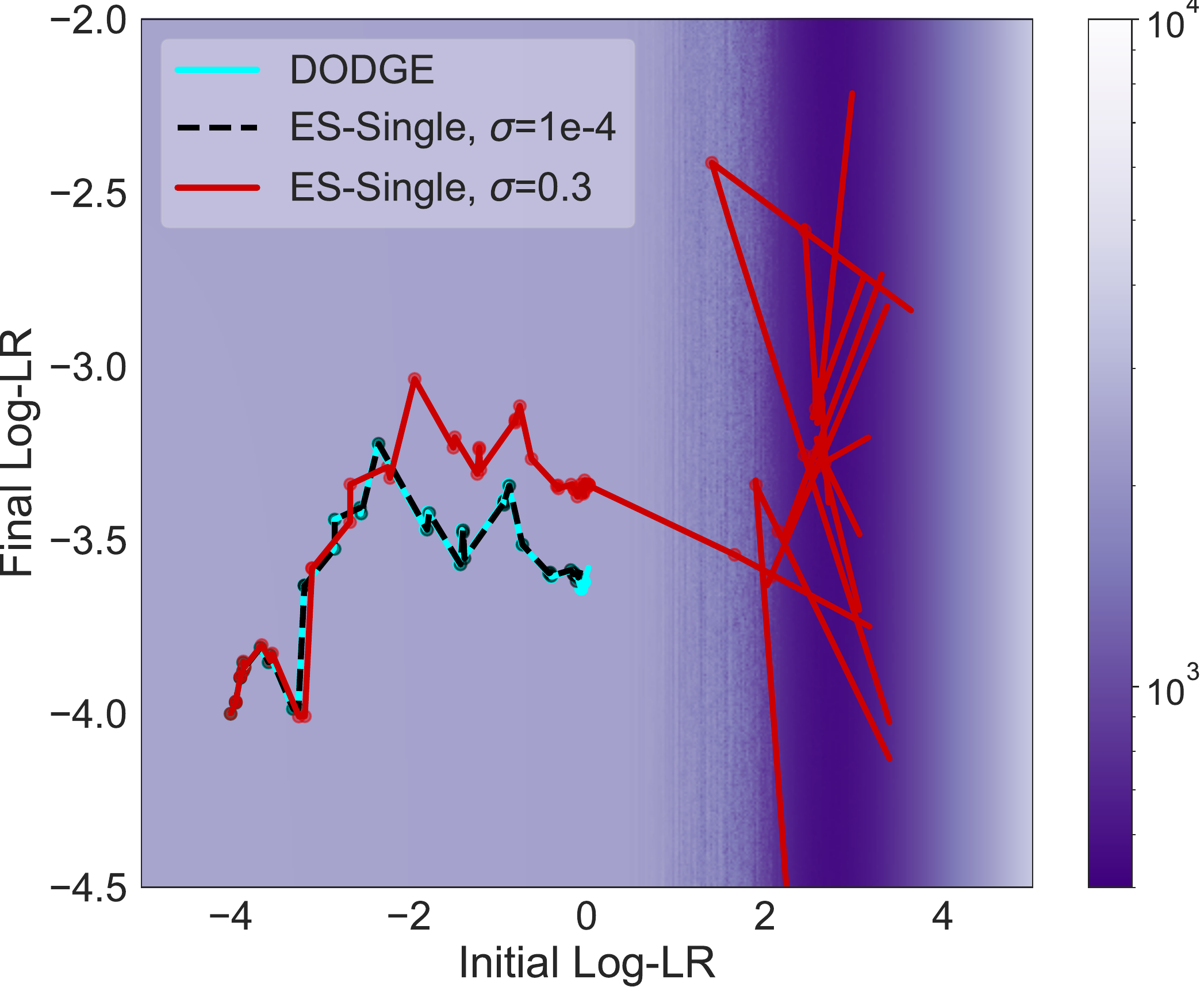}
        \caption{Using the same sequence of direction vectors for ES-Single and DODGE. With a small perturbation scale, ES-Single becomes nearly identical to DODGE, while with a large perturbation scale, it traverses a smoothed landscape.}
        \label{fig:}
    \end{subfigure}
    \qquad
    \begin{subfigure}[t]{0.4\linewidth}
        \includegraphics[width=\linewidth]{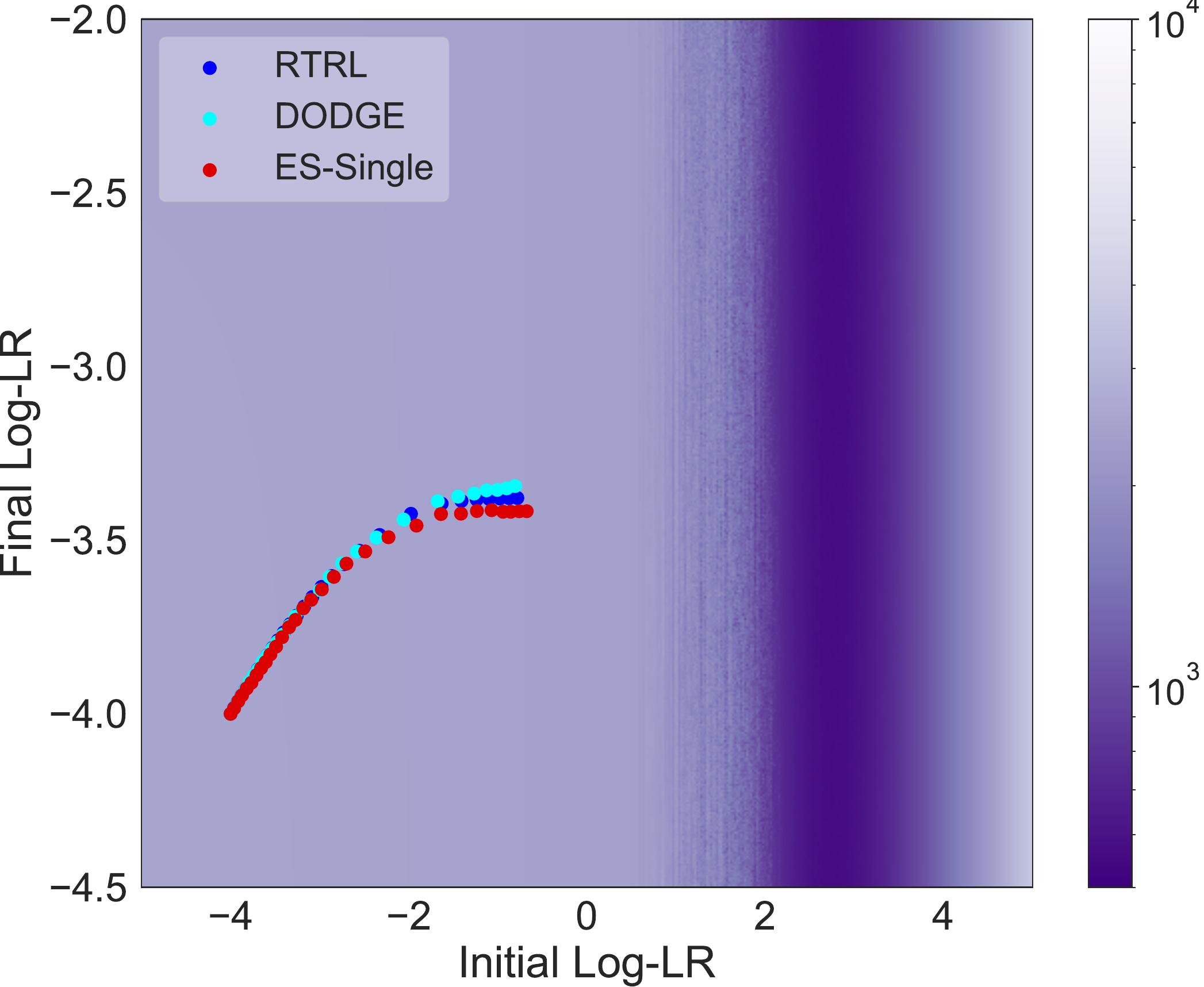}
        \caption{When using 1000 direction vectors, both DODGE and ES-Single become approximately equivalent to RTRL. Note that here we use a small perturbation scale $\sigma$=1e-4 for ES-Single.}
        \label{fig:}
    \end{subfigure}
    \caption{Comparing ES-Single and DODGE on the toy 2D regression task from Figure~\ref{fig:toy-regression}. (a) We use the same sequence of random directions for ES-Single and DODGE, where a single direction is sampled at the start of each inner problem and kept fixed over all partial unrolls. Both methods use truncations of length $K=1$, with total inner problem length $T=100$. Over the course of an inner problem, DODGE and ES-Single update the outer parameters in a subspace spanned by the direction vector; this leads to the line segments in Fig. (a), each of which shows the progress made during one inner problem. At the end of an inner problem, a new direction is sampled, leading to the piecewise linear structure. As the perturbation scale $\sigma$ goes to 0, ES-Single becomes nearly identical to DODGE, and is not able to traverse the chaotic regions of the meta-loss landscape. Increasing the perturbation scale allows ES-Single to cross the chaos and reach the optimal meta-objective value. In Fig. (b), we show that increasing the number of direction vectors used DODGE and ES-Single brings both methods very close to exact RTRL. Darker regions represent lower meta-objective values.}
    \label{fig:dodge}
\end{figure}

\subsection{Comparison to Hypergradient Descent.}
Some algorithms have been proposed to tune optimization hyperparameters (such as learning rates) online during a single training run (e.g., one inner problem as opposed to several), in particular hypergradient descent (HD)~\cite{baydin2017online} and ``Gradient Descent: The Ultimate Optimizer'' (GDTUO)~\cite{chandra2022gradient}. Both of these adapt optimization hyperparameters based on a 1-step lookahead meta-objective. However, as shown in~\cite{wu2018understanding}, backpropagation through a 1-step unroll may suffer from truncation bias. HD and GDTUO are conceptually similar to our vanilla truncated ES baseline, as they aim to minimize the loss after taking $K$ gradient steps (with $K=1$); thus, they can be interpreted as gradient-based analogues of truncated ES. In contrast, ES-Single and PES yield unbiased gradient estimates that do not suffer from truncation bias.
Here, we used the Github repository of~\citet{chandra2022gradient} (\href{https://github.com/kach/gradient-descent-the-ultimate-optimizer}{https://github.com/kach/gradient-descent-the-ultimate-optimizer}), and applied their method to our task from Section~\ref{sec:hyperparameter-optimization}: tuning the learning rate and decay factor used to train an MLP.

\begin{figure}[t]
    \centering
    \begin{subfigure}[t]{0.46\linewidth}
        \includegraphics[width=0.8\linewidth]{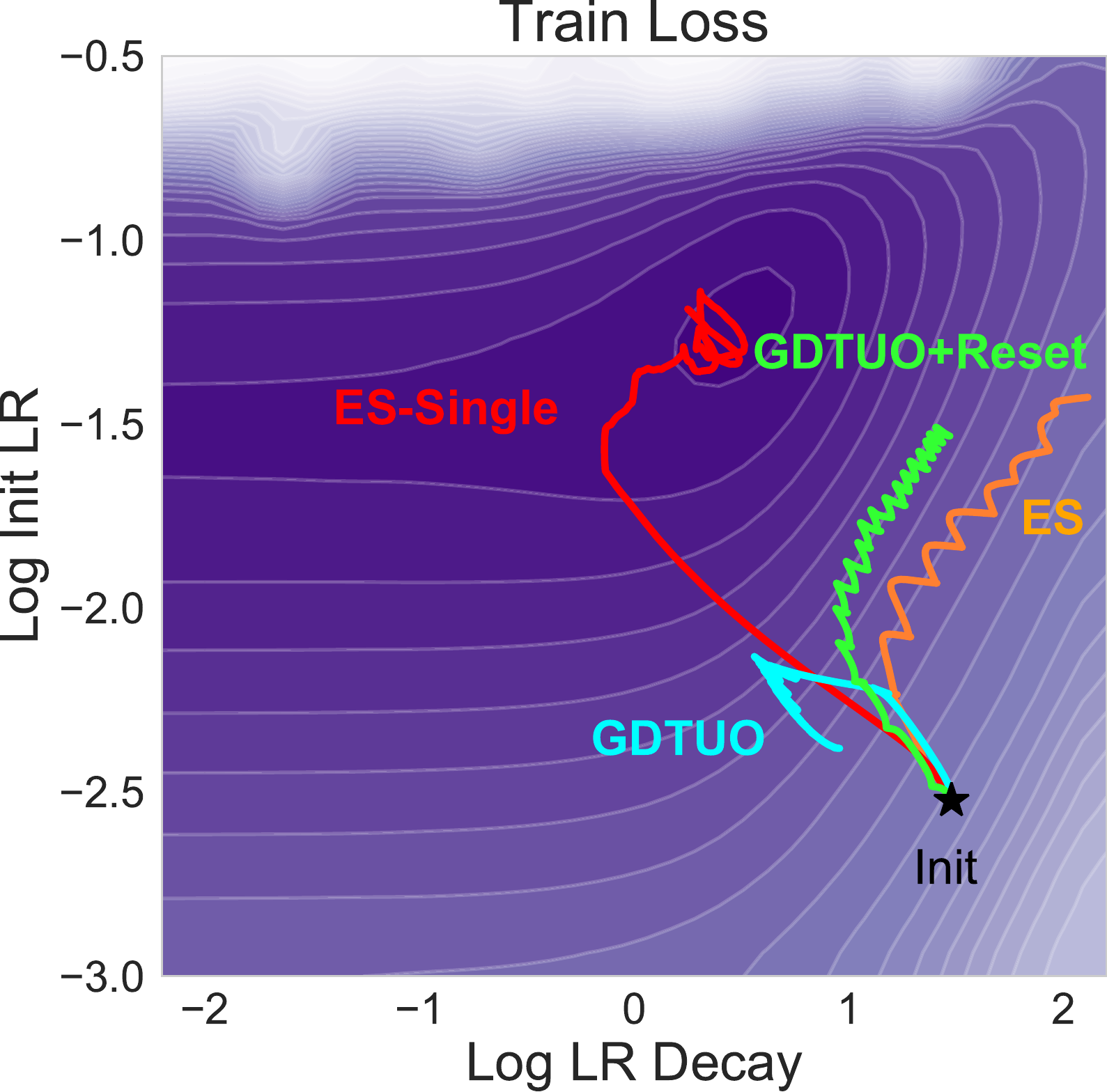}
        \caption{Here, we show the trajectories of vanilla truncated ES and ES-Single, alongside the trajectories of two variants of GDTUO. Darker colors denote lower (better) loss values.}
        \label{fig:hd-comparison-meta-opt}
    \end{subfigure}
    \hfill
    \begin{subfigure}[t]{0.49\linewidth}
        \includegraphics[width=0.85\linewidth]{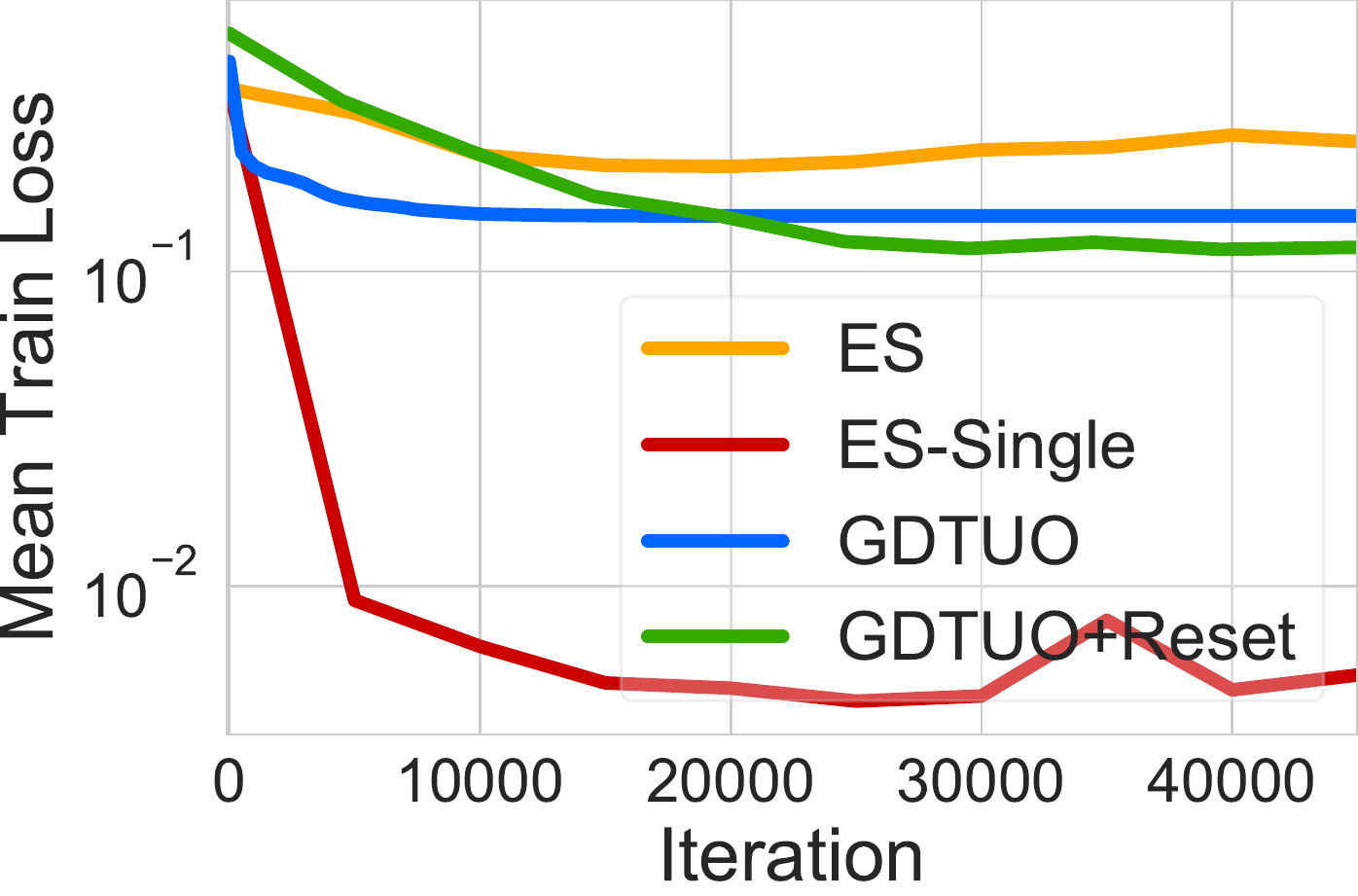}
    \caption{Mean training loss values using the same setup as Section~\ref{sec:hyperparameter-optimization}, comparing ES, ES-Single, GDTUO, and GDTUO with inner problem resetting.}
    \label{fig:hd-comparison-obj}
    \end{subfigure}
    \caption{Tuning the learning rate and decay factor for SGDm, used to optimize an MLP on MNIST, with the same setup as Section~\ref{sec:hyperparameter-optimization}. We compare the meta-optimization trajectories and training losses that result from using vanilla truncated ES, ES-Single, and two variants of GDTUO~\cite{chandra2022gradient} that differ with respect to whether the inner problem is reset after $T$ steps.}
\end{figure}

We considered two scenarios for GDTUO: 1) never resetting the inner problem (e.g., the online learning setting they use); and 2) resetting the inner problem every $T$ iterations, while continuing to optimize the outer parameters, which mimics our truncated ES setting. Figures~\ref{fig:hd-comparison-meta-opt} and~\ref{fig:hd-comparison-obj} show that GDTUO behaves similarly to truncated ES, both in terms of the meta-optimization trajectory and the training loss achieved. ES-Single outperforms GDTUO on this task.

\section{Proofs}
\label{app:proofs}

\subsection{Proof of Unbiasedness}
\label{app:unbiased-proof}

\begin{tcolorbox}[colback=tabblue!15,boxrule=0pt,colframe=white,coltext=black,arc=2pt,outer arc=0pt,valign=center]%,valign=center]
\begin{proposition}[ES-Single is unbiased]
Assume that $L(\boldtheta)$ is quadratic and $\nabla_{\boldtheta} L(\boldtheta)$ exists. Then, the ES-Single gradient estimator with antithetic sampling is unbiased, that is, $\text{bias}(\hat{\boldg}^{\text{ES-Single}}) = \mathbb{E}_{\boldepsilon} \left[ \hat{\boldg}^{\text{ES-Single}} \right] - \nabla_{\boldtheta} L(\boldtheta) = 0$.
\end{proposition}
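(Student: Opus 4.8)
The plan is to first show that $\gessingle$ is, for the purposes of computing the bias, nothing more than the full-unroll antithetic ES estimator applied to $L(\boldtheta)$, and then to exploit the fact that for a quadratic $L$ the third-order ES Taylor remainder vanishes. For the first step I would note that because ES-Single draws a single perturbation $\boldepsilon$ per particle and re-applies it in every partial unroll, the per-truncation estimates accumulated over a full inner problem sum to $\tfrac{1}{\sigma^2}\,\boldepsilon\big(\sum_t L_t(\boldtheta+\boldepsilon) - \sum_t L_t(\boldtheta-\boldepsilon)\big) = \tfrac{1}{\sigma^2}\,\boldepsilon\big(L(\boldtheta+\boldepsilon) - L(\boldtheta-\boldepsilon)\big)$, i.e. exactly the full-unroll antithetic ES estimator of $\nabla_{\boldtheta}L(\boldtheta)$ (up to the normalization constant), so the truncated/unrolled structure is irrelevant to the bias. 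By linearity of expectation it then suffices to analyze a single antithetic pair $\boldepsilon \sim \mathcal{N}(\boldzero,\sigma^2\boldI)$.

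For the second step, since $L$ is quadratic I would write $L(\boldtheta+\boldv) = L(\boldtheta) + \nabla_{\boldtheta}L(\boldtheta)^\top\boldv + \tfrac12\,\boldv^\top\boldA\,\boldv$ with the constant matrix $\boldA = \nabla^2_{\boldtheta}L$. Subtracting the expansions at $\boldv = +\boldepsilon$ and $\boldv = -\boldepsilon$, the constant term and the even (Hessian) term cancel, leaving the exact identity $L(\boldtheta+\boldepsilon) - L(\boldtheta-\boldepsilon) = 2\,\nabla_{\boldtheta}L(\boldtheta)^\top\boldepsilon$. This is the sole place the quadratic assumption enters, and it also makes explicit why antithetic sampling matters here: it annihilates the even-order terms exactly, so no $O(\sigma^2)$ smoothing bias survives. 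Substituting back gives $\gessingle \;\propto\; \tfrac{1}{\sigma^2}\,\boldepsilon\big(2\,\nabla_{\boldtheta}L(\boldtheta)^\top\boldepsilon\big) = \tfrac{2}{\sigma^2}\,\boldepsilon\boldepsilon^\top\,\nabla_{\boldtheta}L(\boldtheta)$.

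The final step is to take the expectation over $\boldepsilon\sim\mathcal{N}(\boldzero,\sigma^2\boldI)$ using $\mathbb{E}[\boldepsilon\boldepsilon^\top] = \sigma^2\boldI$, which yields $\mathbb{E}_{\boldepsilon}[\gessingle] = \nabla_{\boldtheta}L(\boldtheta)$ once the antithetic normalization constant is chosen so that the $2/\sigma^2$ and $\sigma^2$ factors cancel, hence $\text{bias}(\gessingle) = 0$. I do not expect a genuine obstacle in the computation; the two points requiring care are (i) bookkeeping the antithetic normalization (the sum over $N/2$ pairs versus the $1/(N\sigma^2)$ or $1/\sigma^2$ prefactor), and (ii) the conceptual step of arguing that "unbiased for $L(\boldtheta)$" really does follow from the single-perturbation construction, so that the statement about the \emph{unrolled} estimator is not merely a restatement of unbiasedness of ordinary ES. I would optionally remark that the argument goes through verbatim for any $L$ whose Taylor expansion terminates at second order, and that for general smooth $L$ one would instead invoke Stein's identity $\mathbb{E}_{\boldepsilon}[\boldepsilon f(\boldepsilon)] = \sigma^2\,\mathbb{E}_{\boldepsilon}[\nabla f(\boldepsilon)]$, at the price of an $O(\sigma^2)$ bias from Gaussian smoothing.
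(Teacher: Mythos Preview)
Your proposal is correct and follows essentially the same approach as the paper: expand the quadratic $L$ to second order, use antithetic cancellation to reduce $L(\boldtheta+\boldepsilon)-L(\boldtheta-\boldepsilon)$ to $2\,\boldepsilon^\top\nabla_{\boldtheta}L(\boldtheta)$, and then take the expectation using $\mathbb{E}[\boldepsilon\boldepsilon^\top]=\sigma^2\boldI$. Your additional preliminary step (arguing that the accumulated truncated estimates collapse to full-unroll antithetic ES) and your explicit tracking of the factor of $2$ and the $N/2$-vs-$N$ normalization are, if anything, slightly more careful than the paper's version.
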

\end{tcolorbox}
\begin{proof}
By assumption, $L(\boldtheta)$ is quadratic, and thus is equivalent to its second-order Taylor series approximation:
\begin{align}
    L(\boldtheta + \boldepsilon)
    =
    L(\boldtheta) + \boldepsilon^\top \nabla_{\boldtheta} L(\boldtheta) + \frac{1}{2} \boldepsilon^\top \nabla_{\boldtheta}^2 L(\boldtheta) \boldepsilon
\end{align}
The antithetic gradient estimator is $\mathbb{E}_{\boldepsilon} \left[ \boldepsilon (L(\boldtheta + \boldepsilon) - L(\boldtheta - \boldepsilon)) \right]$.
We can simplify this expression by noting that:
\begin{align}
    \boldepsilon \left( L(\boldtheta + \boldepsilon) - L(\boldtheta - \boldepsilon) \right)
    &=
    \boldepsilon \left[ L(\boldtheta) + \boldepsilon^\top \nabla_{\boldtheta} L(\boldtheta) + \frac{1}{2} \boldepsilon^\top \nabla_{\boldtheta}^2 L(\boldtheta) \boldepsilon - L(\boldtheta) + \boldepsilon^\top \nabla_{\boldtheta} L(\boldtheta) - \frac{1}{2} \boldepsilon^\top \nabla_{\boldtheta}^2 L(\boldtheta) \boldepsilon \right] \\
    &=
    \boldepsilon \boldepsilon^\top \nabla_{\boldtheta} L(\boldtheta)
\end{align}
Thus, the ES-Single gradient is the following Monte Carlo estimate:
\begin{align}
    \gessingle = \frac{1}{\sigma^2 N} \sum_{i=1}^N \boldepsilon_i \boldepsilon_i^\top \nabla_{\boldtheta} L(\boldtheta)
\end{align}
Taking the expectation of this expression, we have:
\begin{align}
    \mathbb{E}_{\boldepsilon} \left[ \gessingle \right]
    =
    \frac{1}{\sigma^2 N} \sum_{i=1}^N \underbrace{\mathbb{E}_{\boldepsilon} \left[ \boldepsilon_i \boldepsilon_i^\top \right]}_{= \sigma^2 \boldI} \nabla_{\boldtheta} L(\boldtheta)
    =
    \frac{1}{N} \sum_{i=1}^N \nabla{\boldtheta} L(\boldtheta)
    =
    \nabla_{\boldtheta} L(\boldtheta)
\end{align}
Thus, $\text{bias}(\hat{\boldg}^{\text{ES-Single}}) = \mathbb{E}_{\boldepsilon} \left[ \hat{\boldg}^{\text{ES-Single}} \right] - \nabla_{\boldtheta} L(\boldtheta) = 0$.
\end{proof}

\subsection{Variance}
\label{app:variance-proof}

\begin{tcolorbox}[colback=tabblue!15,boxrule=0pt,colframe=white,coltext=black,arc=2pt,outer arc=0pt,valign=center]%,valign=center]
\begin{proposition}[ES-Single Variance]
The total variance of ES-Single using antithetic sampling is $\tr(\Var(\hat{\boldg}^{\text{ES-Single}})) = (P + 1) \| \nabla_{\boldtheta} L(\boldtheta) \|^2$, where $P$ is the dimensionality of $\boldtheta$.
\end{proposition}
\end{tcolorbox}
\begin{proof}
We measure the total variance of the ES-Single estimator, defined as:
\begin{align}
    \tr(\Var(\hatg))
    =
    \tr\left( \mathbb{E}\left[ \hatg \hatg^\top \right] - \mathbb{E}\left[ \hatg \right] \mathbb{E}\left[ \hatg \right]^\top \right)
    =
    \mathbb{E}\left[ \hatg^\top \hatg \right] - \mathbb{E}\left[ \hatg \right]^\top \mathbb{E} \left[ \hatg \right] \label{eq:total-variance}
\end{align}
We assume that the loss $L$ is quadratic, and that we use antithetic samples to estimate the gradient.
Here, we consider a single particle pair for simplicity, $N=1$, such that the estimator can be written as $\hatg = \frac{1}{\sigma^2} \boldepsilon \boldepsilon^\top \nabla_{\boldtheta} L(\boldtheta)$.
Because $\hatg$ is an unbiased estimator, its expectation is equal to the true gradient, and thus the second term in Eq.~\ref{eq:total-variance} is $\mathbb{E}\left[ \hatg \right]^\top \mathbb{E} \left[ \hatg \right] = \| \nabla_{\boldtheta} L(\boldtheta) \|^2$.
For the first term, we have:
\begin{align}
    \mathbb{E} \left[ \hatg^\top \hatg \right]
    &=
    \frac{1}{\sigma^4} \mathbb{E}_{\boldepsilon}\left[ \nabla_{\boldtheta} L(\boldtheta)^\top \boldepsilon \boldepsilon^\top \boldepsilon \boldepsilon \nabla_{\boldtheta} L(\boldtheta) \right] \\
    &=
    \frac{1}{\sigma^4} \nabla_{\boldtheta} L(\boldtheta)^\top \mathbb{E}_{\boldepsilon} \left[ \boldepsilon \boldepsilon^\top \boldepsilon \boldepsilon^\top \right] \nabla_{\boldtheta} L(\boldtheta) \label{eq:second-term}
\end{align}
By Isserlis' theorem (see~\citet{maheswaranathan2019guided}), we have $\mathbb{E}_{\boldepsilon} \left[ \boldepsilon \boldepsilon^\top \boldepsilon \boldepsilon^\top \right] = \tr(\Sigma) \Sigma + 2 \Sigma^2$, where $\Sigma$ is the covariance of the perturbation distribution.
Because our perturbations are sampled from an isotropic Gaussian, $\boldepsilon \sim \mathcal{N}(\boldzero, \sigma^2 \boldI)$, we have $\Sigma = \sigma^2 \boldI$, and thus:
\begin{align}
    \mathbb{E}_{\boldepsilon} \left[ \boldepsilon \boldepsilon^\top \boldepsilon \boldepsilon^\top \right]
    &=
    \tr(\Sigma) \Sigma + 2 \Sigma^2 \\
    &=
    \tr(\sigma^2 \boldI) \sigma^2 \boldI + 2 (\sigma^2 \boldI)^2 \\
    &=
    P \sigma^4 \boldI + 2 \sigma^4 \boldI \\
    &=
    (P + 2) \sigma^4 \boldI
\end{align}
where $P$ is the dimensionality of $\boldtheta$.
Plugging this into Eq.~\ref{eq:second-term}, we have:
\begin{align}
    \mathbb{E} \left[ \hatg^\top \hatg \right]
    &=
    \frac{1}{\sigma^4} \nabla_{\boldtheta} L(\boldtheta)^\top \mathbb{E}_{\boldepsilon} \left[ \boldepsilon \boldepsilon^\top \boldepsilon \boldepsilon^\top \right] \nabla_{\boldtheta} L(\boldtheta) \\
    &=
    \frac{1}{\sigma^4} \nabla_{\boldtheta} L(\boldtheta)^\top \left[ (P+2) \sigma^4 \boldI \right] \nabla_{\boldtheta} L(\boldtheta) \\
    &=
    (P + 2) \| \nabla_{\boldtheta} L(\boldtheta) \|^2
\end{align}
Thus, the total variance of the ES-Single estimator is:
\begin{align}
    \tr(\Var(\hatg))
    =
    (P + 2) \| \nabla{\boldtheta} L(\boldtheta) \|^2 - \| \nabla_{\boldtheta} L(\boldtheta) \|^2
    =
    (P + 1) \| \nabla_{\boldtheta} L(\boldtheta) \|^2
\end{align}
From this, we see that the variance increases linearly in the dimensionality of $\boldtheta$, but does not depend on the number of partial unrolls per inner problem.
\end{proof}

\clearpage

\section{Stochastic Computation Graphs}
\label{app:stochasticcomp}

We have an input node $\theta$ that gives rise to a sampled variable $\tilde{\theta}$ which is re-applied over all time steps of the inner problem.
Each state $\bolds_t$ depends deterministically on the sampled parameter $\tilde{\theta}$ and the previous state $\bolds_{t-1}$.
The losses at each timestep, $L_t$, are the cost nodes, and the objective is to minimize $L = \sum_{t=1}^T L_t$.
We leverage Theorem 1 from~\citet{schulman2015gradient}, which gives a general expression to compute the gradient of the sum of cost nodes in such a stochastic computation graph:
\begin{align}
    \frac{\partial}{\partial \theta} \mathbb{E} \left[ \sum_{c \in \mathcal{C}} c \right]
    =
    \mathbb{E} \left[
    \sum_{w \in \mathcal{S}, \theta \prec^D w} \left( \frac{\partial}{\partial \theta} \log p(w \mid \text{DEPS}_w) \right) \hat{Q}_w
    +
    \underbrace{\sum_{c \in \mathcal{C}, \theta \prec^D c} \frac{\partial}{\partial \theta} c(\text{DEPS}_c)}_{= 0}
    \right]
\end{align}
Here, $\mathcal{C}$ is the set of cost nodes (the $L_t$); $\mathcal{S}$ is the set of stochastic nodes (in our case, $\mathcal{S} = \{ \tilde{\theta} \}$); $\text{DEPS}_w$ denotes the set of nodes that $w$ depends on; $a \prec^D b$ denotes a deterministic dependence of node $a$ on node $b$ (this holds if there are no stochastic nodes along the path from $a$ to $b$); and $\hat{Q}_w$ is the sum of cost nodes downstream of node $w$.

In our computation graph, $\theta$ does not deterministically influence the cost nodes $L_t$, so the second term inside the expectation is 0.
For Figure~\ref{fig:stochastic-computation-graph}, we have:
\begin{align}
    \frac{\partial}{\partial \theta} \mathbb{E} \left[ \sum_{t=1}^T L_t \right] = \mathbb{E} \left[ \left( \frac{\partial}{\partial \theta} \log p(\tilde{\theta} \mid \theta) \right) \hat{Q}_{\tilde{\theta}} \right]
\end{align}
Because we sample isotropic Gaussian perturbations to $\theta$, we have $p(\tilde{\theta} \mid \theta) = \mathcal{N}(\tilde{\theta} ; \theta, \sigma^2 I) = \frac{1}{\sqrt{2 \pi} \sigma} \exp\left( \frac{- (\tilde{\theta} - \theta)^2}{2 \sigma^2} \right)$.
The gradient of the log probability is:
\begin{align}
    \frac{\partial}{\partial \theta} \log p(\tilde{\theta} \mid \theta)
    =
    \frac{\partial}{\partial \theta} \left( - \frac{1}{2} \log(2\pi) - \log \sigma - \frac{(\tilde{\theta} - \theta)^2}{2 \sigma^2} \right)
    =
    \frac{1}{\sigma^2} ( \tilde{\theta} - \theta )
\end{align}
Using the reparameterization trick, we substitute $\tilde{\theta} = \theta + \epsilon$ where $\epsilon \sim \mathcal{N}(0, \sigma^2 I)$, which yields $\frac{\partial}{\partial \theta} \log p(\tilde{\theta} \mid \theta) = \frac{1}{\sigma^2}(\theta + \epsilon - \theta) = \frac{1}{\sigma^2} \epsilon$.
The sum of cost nodes downstream of $\tilde{\theta}$ is $\sum_{t=1}^T L_t$ because all cost nodes are downstream of $\tilde{\theta}$.
Thus, Theorem 1 from~\citet{schulman2015gradient} gives the following unbiased gradient estimator for the ES-Single computation graph, which is equivalent to the full-unroll ES gradient estimate:
\begin{align}
    \frac{\partial}{\partial \theta} \mathbb{E} \left[ \sum_{t=1}^T L_t \right] = \frac{1}{\sigma^2} \mathbb{E}_{\boldepsilon} \left[ \boldepsilon \sum_{t=1}^T L_t \right]
\end{align}

\paragraph{Vanilla Truncated ES.}
\begin{wrapfigure}[11]{r}{0.25\linewidth}
    \vspace{-0.2cm}
    \centering
    \includegraphics[width=\linewidth]{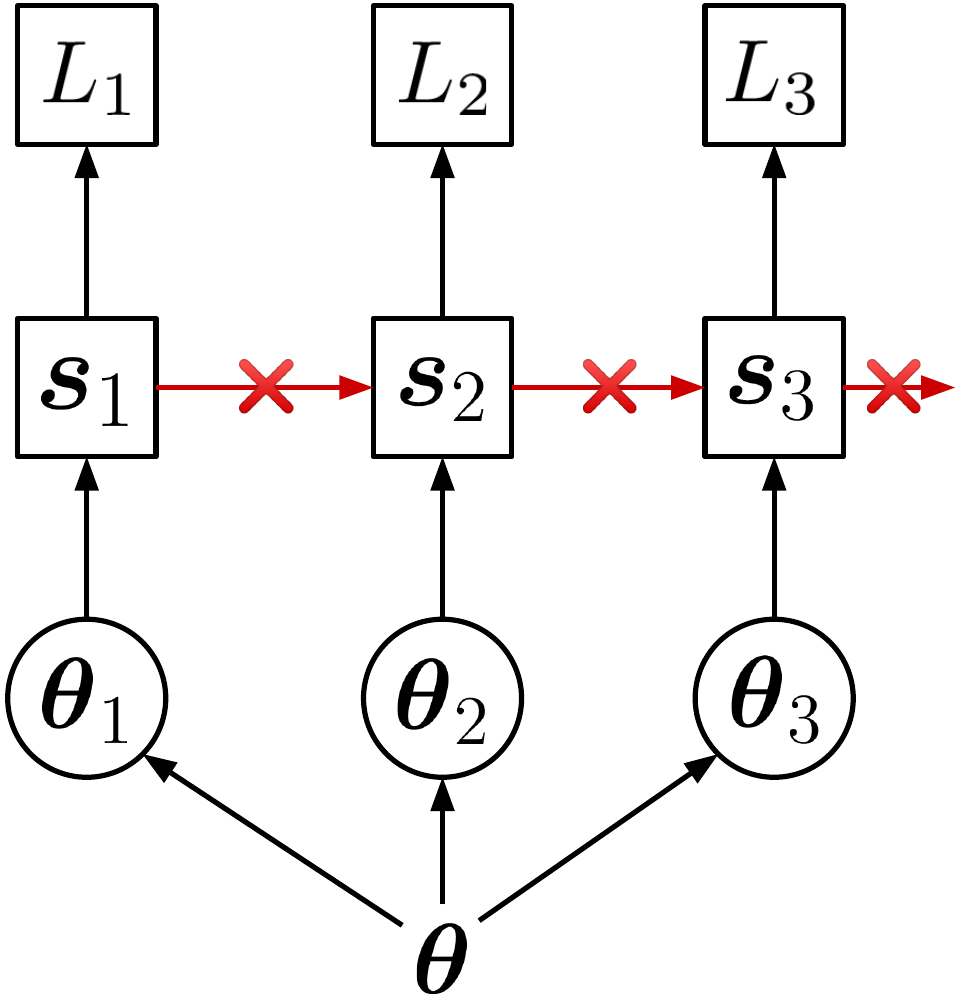}
    \caption{\small Stochastic computation graph for vanilla truncated ES, which does not include the recurrent connections between states.}
    \label{fig:trunc-es-scg}
\end{wrapfigure}
Figure~\ref{fig:trunc-es-scg} shows the stochastic computation graph corresponding to vanilla truncated ES, which shares similar structure to the PES computation graph (Figure~\ref{fig:stochastic-computation-graph}), but does not include recurrent connections between successive states $\bolds_{t-1}, \bolds_t$.
This ignores crucial structure about the problem, e.g., that it consists of a sequence of unrolls rather than a set of independent minimization problems over separate objectives $L_t$.
Once again leveraging Theorem 1 from~\citet{schulman2015gradient}, we derive the following gradient estimator for truncated ES:
\begin{align}
    \frac{\partial}{\partial \theta} \mathbb{E} \left[ \sum_{t=1}^T L_t \right]
    &=
    \mathbb{E}_{\epsilon} \left[ \sum_{t=1}^T \underbrace{\left( \frac{\partial}{\partial \theta} \log p(\theta_t \mid \theta) \right)}_{= \frac{1}{\sigma^2} \epsilon_t} \underbrace{\hat{Q}_{\theta_t}}_{L_t} \right] \\
    &= 
    \mathbb{E}_{\epsilon} \left[ \sum_{t=1}^T \frac{1}{\sigma^2} \epsilon_t L_t \right]
    =
    \frac{1}{\sigma^2} \sum_{t=1}^T \mathbb{E}_{\epsilon_t} \left[ \epsilon_t L_t(\theta + \epsilon_t) \right]
\end{align}

\section{Generalization of PES and ES-Single}
\label{app:generalization-es-pes}

\begin{figure}[H]
    \centering
    \includegraphics[width=\linewidth]{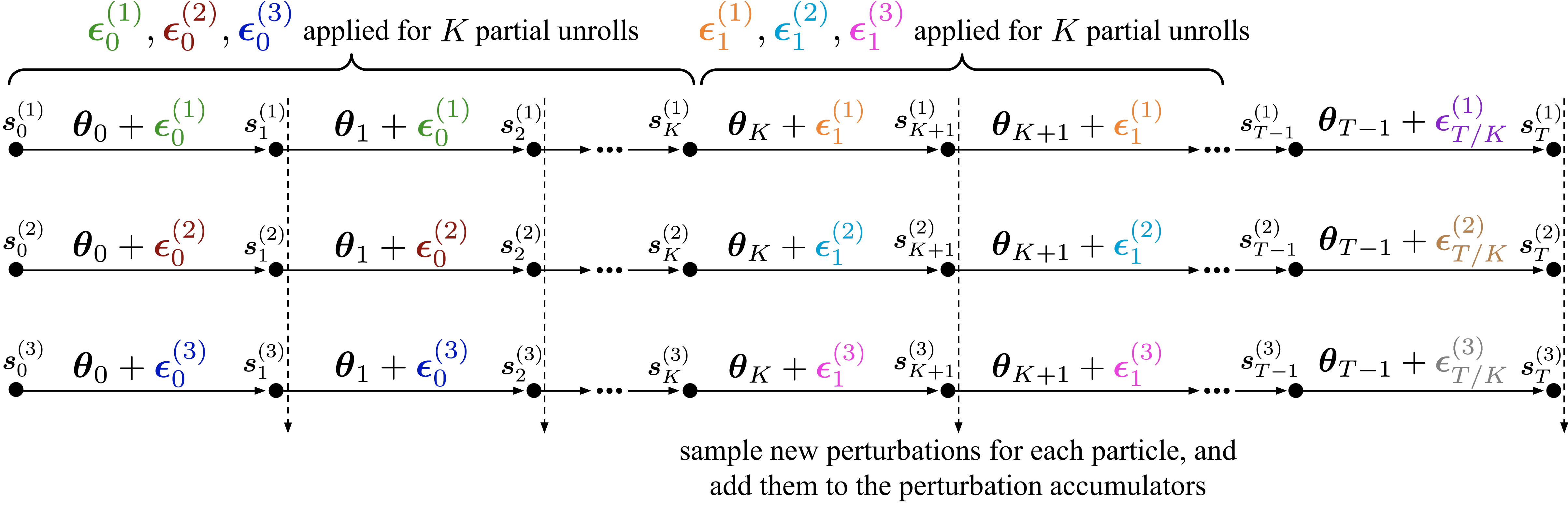}
    \vspace{-0.4cm}
    \caption{Computation graph for a generalization of ES-Single and PES, in which the interval at which we update the outer parameters is decoupled from the interval at which we sample new perturbations and update the perturbation accumulator for each particle. In particular, vanilla PES in each partial unroll samples a new perturbation, updates the accumulator, and updates the outer parameters; in contrast, here we update the outer parameters multiple times using the same perturbation, and only update the perturbations (and the accumulators) every $K$ outer parameter updates.}
    \label{fig:es-single-pes-gen}
\end{figure}

Here, we propose a generalization of both ES-Single and PES, that decouples the interval at which we update the perturbation accumulator from the interval at which we update the outer parameters.
Recall that ES-Single has constant variance regardless of the number of partial unrolls per inner problem (Figure~\ref{fig:variance}).
For long-horizon inner problems optimized using short truncations---yielding a large number of partial unrolls per problem---ES-Single can have substantially lower variance than PES.
However, for certain scenarios (including the the real sequence from the PTB dataset) and unroll lengths (e.g. such that we have $\sim 10$ unrolls per problem), PES has lower variance than ES-Single (Figure~\ref{fig:variance}).
This motivated us to consider an algorithm that generalizes both PES and ES-Single.
In particular, we introduce another hyperparameter, $\Omega$, that specifies the meta-update interval, while $K$ denotes the interval at which new perturbations are sampled and at which the perturbation accumulator is updated.
Many algorithms of interest can be obtained as special cases, by setting $K$ and $\Omega$ appropriately.
Let $T$ be the length of a full inner problem.
Then, 1) if $K = \Omega = T$, we recover full-unroll ES; 2) if $K = \Omega$ and $K < T$, we recover PES; 3) if $K=T$ and $\Omega < T$, we recover ES-Single; and 4) if $K, \Omega < T$ and $\Omega < K$, we obtain a new estimator with a combination of the properties of ES-Single and PES.
Algorithm~\ref{fig:general-algorithm} formally describes the latter case.
Similarly to PES and ES-Single, separate states $\bolds^{(i)}$ are maintained for each particle over the course of an inner problem.
Like PES, a perturbation accumulator $\boldxi^{(i)}$ is maintained for each particle, and 
However, rather than sampling perturbations $\boldepsilon^{(i)}$ per particle for each partial unroll, new perturbations are only sampled every $M$ partial unrolls.
That is, the same perturbation is re-applied for $M$ consecutive unrolls.
Correspondingly, the perturbation accumulator is only updated once every $M$ unrolls.

\begin{figure*}[t]
\vspace{-0.3cm}
\begin{minipage}[t]{0.48\textwidth}
\begin{algorithm}[H]
  \caption{Truncated Evolution Strategies (ES) applied to partial unrolls of a computation graph.}
  \label{alg:es}
\begin{algorithmic}
    \State \textbf{Input:} $\bolds_0$, initial state
    \State \hspace{2.7em} $K$, truncation length for partial unrolls
    \State \hspace{2.7em} $N$, number of particles
    \State \hspace{2.9em} $\sigma$, standard deviation of perturbations
    \State \hspace{2.9em} $\alpha$, learning rate for outer optimization
    \State Initialize $\bolds = \bolds_0$ ${\color{white}\bolds^{(i)} = \bolds_0}$
    \While {inner problem not finished}
        \State $\ges \gets \boldzero$
        \For{$i=1,\dots, N$}
            \State $\boldepsilon^{(i)} = 
                \left\{\begin{array}{lcl}
                	\text{draw from } \mathcal{N}(0, \sigma^2 I) &  & i \text{ odd} \\
                	-\boldepsilon^{(i-1)} &  & i \text{ even}
                \end{array}\right.$
            \State $\hat{L}_K^{(i)} \gets \text{unroll}(\bolds, \boldtheta + \boldepsilon^{(i)}, K)$
            \State $\ges \gets \ges + \boldepsilon^{(i)} \hat{L}_K^{(i)}$
        \EndFor
        \State $\ges \gets \frac{1}{N \sigma^2} \ges$
        \State $\bolds \gets \text{unroll}(\bolds, \boldtheta, K)$
        \State $\boldtheta \gets \boldtheta - \alpha \ges$
    \EndWhile
\end{algorithmic}
\end{algorithm}
\end{minipage}
\hfill
\begin{minipage}[t]{0.48\textwidth}
\begin{algorithm}[H]
  \caption{Generalization of ES-Single and PES, with an arbitrary re-sampling interval $M$.}
  \label{alg:}
\begin{algorithmic}
    \State \textbf{Input:} $\bolds_0$, initial state
    \State \hspace{2.7em} $K$, truncation length for partial unrolls
    \State \hspace{2.7em} $M$, re-sampling interval
    \State \hspace{2.7em} $N$, number of particles
    \State \hspace{2.9em} $\sigma$, standard deviation of perturbations
    \State \hspace{2.9em} $\alpha$, learning rate for outer optimization
    \State Initialize ${\color{dkred}\bolds^{(i)}} = \bolds_0$ for {\color{dkred} $i \in \{1, \dots, N\}$}
    \State {\color{dkred}Initialize $\boldxi^{(i)} \gets \boldzero$ for $i \in \{1, \dots, N \}$}
    \hspace{-0.5cm}\While {inner problem not finished, iteration $j$}
        \If{${\color{dkred}j \mod M = 0}$}
            \For{${\color{dkred}i=1,\dots,N}$}
            \State ${\color{dkred}\boldepsilon^{(i)} = 
                        \left\{\begin{array}{lcl}
                        	\text{draw from } \mathcal{N}(0, \sigma^2 I) &  & i \text{ odd} \\
                        	-\boldepsilon^{(i-1)} &  & i \text{ even}
                        \end{array}\right.}$
            \State ${\color{dkred}\boldxi^{(i)} \gets \boldxi^{(i)} + \boldepsilon^{(i)}}$
            \EndFor
        \EndIf
        \State $\gesgen \gets \boldzero$
        \For{$i=1,\dots, N$}
            \State {\color{dkred} $\bolds^{(i)}$}, $\hat{L}_K^{(i)} \gets \text{unroll}({\color{dkred} \bolds^{(i)}}, \boldtheta + \boldepsilon^{(i)}, K)$
            \State $\gesgen \gets \gesgen + {\color{dkred} \boldxi^{(i)}} \hat{L}_K^{(i)}$
        \EndFor
        \State $\gesgen \gets \frac{1}{N \sigma^2} \gesgen$
        \State {\color{white}$s \gets \text{unroll}(\bolds, \theta, K)$}
        \State $\boldtheta \gets \boldtheta - \alpha \gesgen$
    \EndWhile
\end{algorithmic}
\end{algorithm}
\end{minipage}
\caption{\textbf{A comparison of vanilla ES and the generalized form of PES and ES-Single}, applied to partial unrolls of a computation graph.
The conditional statement for $\boldepsilon^{(i)}$ is used to implement antithetic sampling.
Differences between the two algorithms are {\color{dkred} highlighted in red.}
While ES samples different perturbations for each particle in each partial unroll, ES-Single re-applies the same perturbation over a sequence of partial unrolls, and every $M$ unrolls, it updates the perturbation accumulator and re-samples the perturbations.
}
\label{fig:general-algorithm}
\vspace{-0.2cm}
\end{figure*}

\paragraph{General Stochastic Computation Graph.}
In Figure~\ref{fig:general-computation-graph}, we provide the stochastic computation graph for the generalized estimator, that re-uses the same outer parameter perturbations for a sequence of $K$ partial unrolls before re-sampling the outer parameters.
The resulting unbiased gradient estimator is:
\begin{align}
    \frac{\partial}{\partial \boldtheta} \mathbb{E} \left[ \sum_{t=1}^T L_t \right]
    &=
    \mathbb{E} \left[ \sum_{t=1}^{T/K} \left( \frac{\partial}{\partial \boldtheta} \log p(\boldtheta_t \mid \boldtheta) \right) \hat{Q}_{\boldtheta_t} \right] \\
    &=
    \frac{1}{\sigma^2} \mathbb{E} \left[ \sum_{t=1}^{T/K} \boldepsilon_t \left( \sum_{\tau=kt+1}^T L_{\tau} \right) \right] \\
    &=
    \frac{1}{\sigma^2} \mathbb{E} \left[ \boldepsilon_1 (L_1 + L_2 + \cdots + L_K) + (\boldepsilon_1 + \boldepsilon_2) (L_{K+1} + \cdots + L_{2K}) + \cdots + (\boldepsilon_1 + \cdots + \boldepsilon_{T/K}) L_T \right] \\
    &=
    \frac{1}{\sigma^2} \mathbb{E} \left[ \sum_{t=1}^{T/K} \left( \sum_{\tau=1}^t \boldepsilon_\tau \right) \left( L_{tK+1} + \cdots + L_{2tK} \right) \right]
\end{align}

This estimator has variance equivalent to PES where the number of unrolls per inner problem is $K$.
The main benefit is that it allows for more frequent updates to the outer parameters, while maintaining this fixed variance, which is determined by a hyperparameter that can be tuned independently of the update frequency.

\begin{figure}[H]
    \centering
    \includegraphics[width=0.7\linewidth]{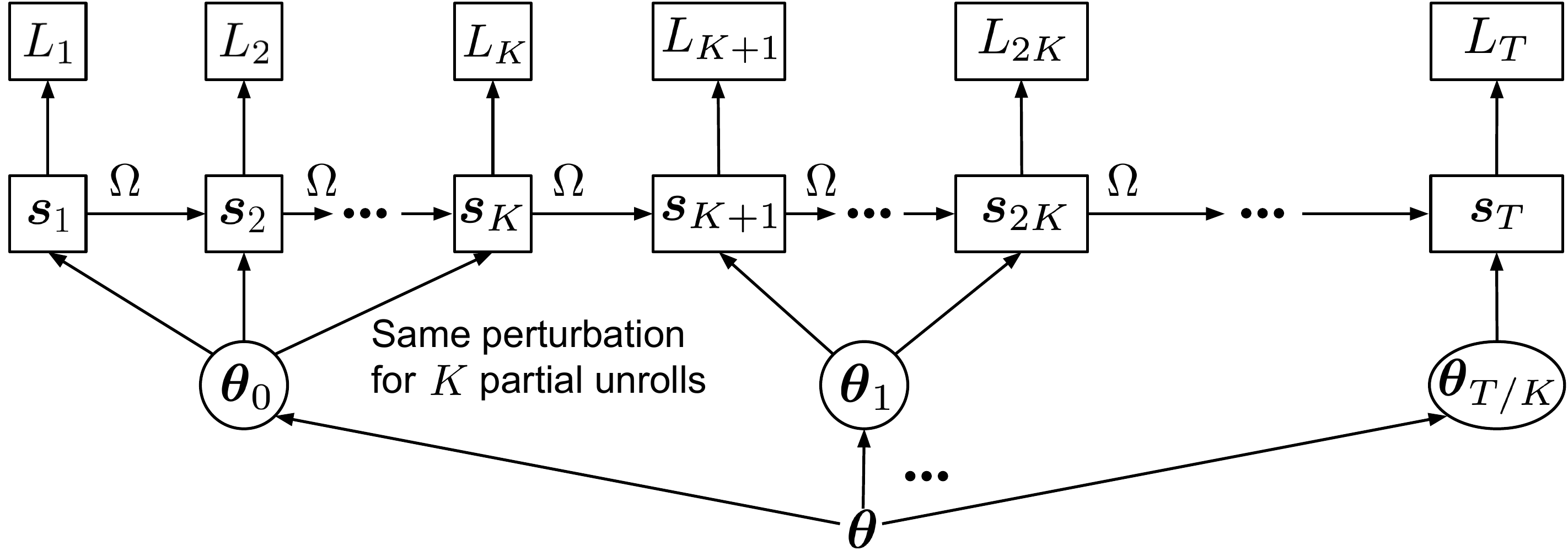}
    \caption{Computation graph corresponding to a generalization of ES-Single and PES, where we re-apply the same perturbed outer parameters (and use the same accumulated perturbations) over a sequence of $K$ partial unrolls of the inner problem. After each sequence of unrolls, the perturbations are re-sampled and the perturbation accumulator is updated. Note that the computation graphs for ES-Single and PES shown in Figure~\ref{fig:stochastic-computation-graph} are special cases corresponding to two extremes: 1) in ES-Single, the same perturbation is applied across all partial unrolls, yielding a single stochastic node $\tilde{\boldtheta}$ for the inner problem---in this case, the perturbation accumulators are equivalent to the perturbations themselves (e.g., they are the sum of only one term); and 2) in PES, a different perturbation is applied in each partial unroll---yielding $T$ stochastic nodes $\{ \boldtheta_t \}_{t=1}^T$ per inner problem---and the accumulators are updated after each unroll.
    }
    \label{fig:general-computation-graph}
\end{figure}

\section{Variance Analysis of a Generalized Estimator}
\label{app:bias-variance-generalized}

\begin{wrapfigure}[11]{r}{0.35\linewidth}
    \vspace{-0.5cm}
    \centering
    \includegraphics[width=\linewidth]{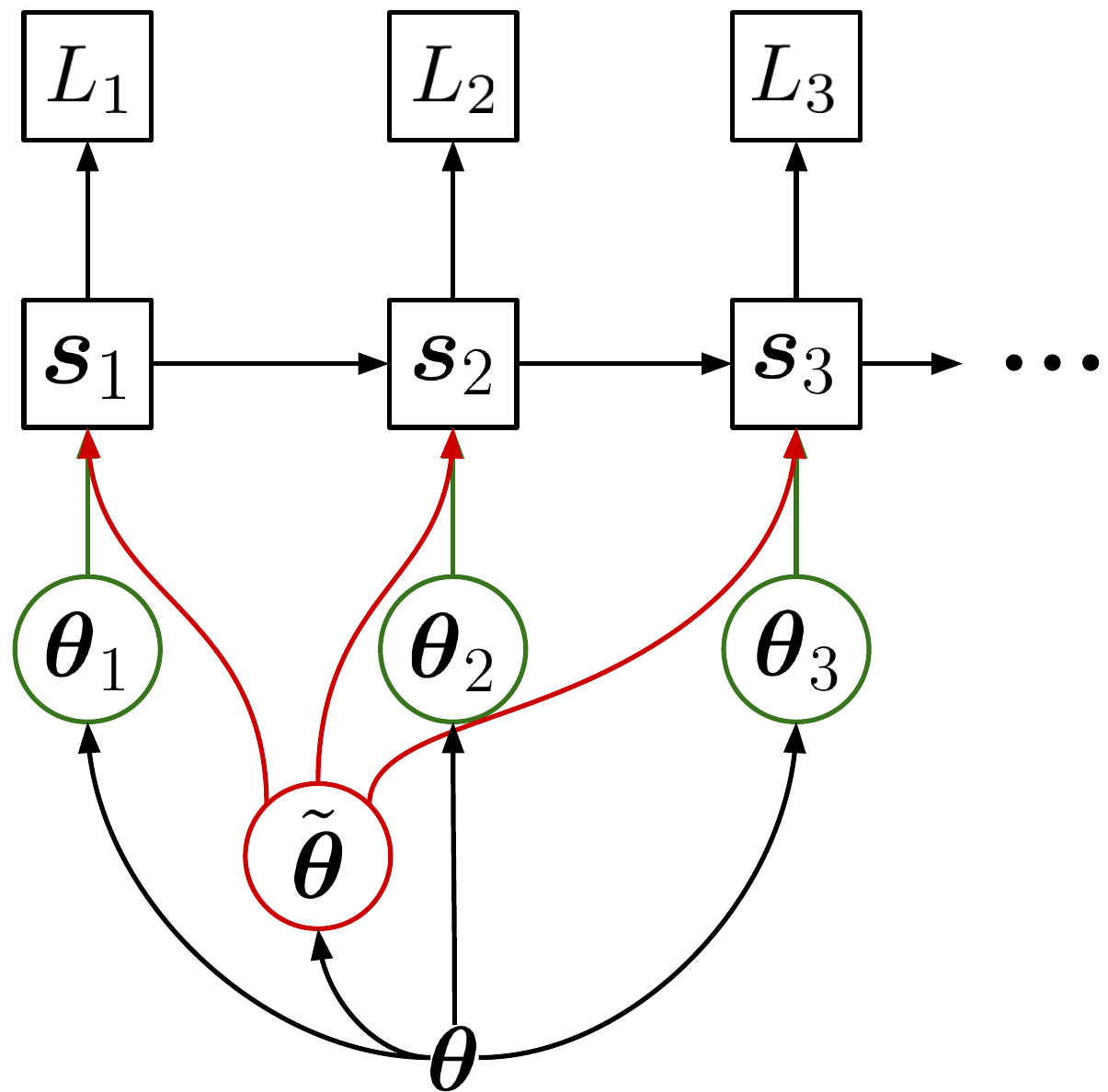}
    \caption{\small Stochastic computation graph for a generalization of PES and ES-Single, where we consider two types of perturbations: 1) a single perturbation that is applied in each unroll over the course of a full inner problem; and 2) a separate perturbation sampled for each partial unroll.}
    \label{fig:es-pes-generalization}
\end{wrapfigure}

In this section, we introduce an estimator that generalizes ES-Single and PES by combining a single perturbation that is kept fixed across all partial unrolls (as in ES-Single) with perturbations that are sampled independently for each partial unroll (as in PES).
One way to obtain this estimator is to consider the stochastic computation graph in Figure~\ref{fig:es-pes-generalization}.
Here, each stochastic node $\boldtheta_t$ and $\tilde{\boldtheta}$ depends only on $\boldtheta$, e.g., $\text{DEPS}_{\boldtheta_t} = \{ \boldtheta \} \forall t$ and $\text{DEPS}_{\tilde{\boldtheta}} = \{ \boldtheta \}$.
Then, the gradient estimator is:
\begin{align}
    \hat{\boldg}
    &=
    \mathbb{E} \left[ \sum_{w \in \mathcal{S}, \boldtheta \prec^D w} \left( \frac{\partial}{\partial \boldtheta} \log p(w \mid \text{DEPS}_w \right) \hat{Q}_w \right] \\
    &=
    \mathbb{E} \left[ \frac{\partial}{\partial \boldtheta} \log p(\tilde{\boldtheta} \mid \boldtheta) \hat{Q}_{\tilde{\boldtheta}} + \sum_{t=1}^T \left( \frac{\partial}{\partial \boldtheta} \log p(\boldtheta_t \mid \boldtheta) \right) \hat{Q}_{\boldtheta_t} \right] \\
    &=
    \mathbb{E} \left[ \frac{1}{\sigma^2} \boldepsilon_S \left( \sum_{t=1}^T L_t \right) + \sum_{t=1}^T \frac{1}{\sigma^2} \boldepsilon_t \left( \sum_{\tau=t}^T L_\tau \right) \right] \\
    &=
    \frac{1}{\sigma^2} \mathbb{E} \left[ \sum_{t=1}^T \left( \boldepsilon_S + \sum_{\tau=1}^t \boldepsilon_\tau \right) L_t \right]
\end{align}

In the following, we adopt notation from~\citet{vicol2021unbiased}: rather than writing the loss at step $t$ as a function of the parameters $\boldtheta$ and state $\bolds_t$, $L_t(\bolds_t, \boldtheta)$, we drop the dependence on $\bolds_t$ and explicitly denote the dependence of $L_t$ on the sequence of applications of $\boldtheta$ over time, $L_t(\boldtheta_1, \boldtheta_2, \dots, \boldtheta_t)$.
This allows us to keep track of how the applications of $\boldtheta$ contribute to the total loss gradient.
In addition, we denote by $\Theta$ a matrix whose rows are the per-timestep parameters $\boldtheta_t$, where $\boldtheta_t = \boldtheta, \forall t$.
Then, we can write $L_t(\Theta)$ as shorthand for $L_t(\boldtheta_1, \dots, \boldtheta_t)$.
Finally, we use $\boldxi_t$ to denote the PES perturbation accumulator, that sums the perturbations up to step $t$, $\boldxi_t = \sum_{\tau=1}^t \boldepsilon_\tau$.
Assuming that the objective is quadratic, and that we use antithetic sampling, we can write the estimator as:
\begin{align} \label{eq:es-gen}
    \gesgen
    =
    \frac{1}{\alpha^2 \sigma^2 + \beta^2 \sigma^2}
    \mathbb{E}_{\boldepsilon} \left[ \sum_{t=1}^T \left( \alpha \boldepsilon_s + \beta \sum_{\tau=1}^t \boldepsilon_t \right) \text{vec}(\alpha \boldepsilon_s + \beta \boldepsilon_{1..t})^\top \nabla_{\text{vec}(\Theta_{1..t})} L_t(\Theta) \right]
\end{align}
The expression $\text{vec}(\alpha \boldepsilon_s + \beta \boldepsilon_{1..t})$ uses broadcasting to add the single perturbation $\boldepsilon_s$ to each of the $t$ per-unroll perturbations in $\boldepsilon_{1..t}$.
This estimator generalizes ES-Single and PES: setting $\alpha=1, \beta=0$ recovers ES-Single, while setting $\alpha=0, \beta=1$ recovers PES.
Note that:
\begin{align}
    \text{vec}(\alpha \boldepsilon_s + \beta \boldepsilon_{1..t})^\top \nabla_{\text{vec}(\Theta_{1..t}} L_t(\Theta)
    &=
    \sum_{\tau=1}^t \left( \alpha \boldepsilon_s + \beta \boldepsilon_\tau \right)^\top \nabla_{\boldtheta_\tau} L_t(\Theta)
\end{align}
% ==================================================

\subsection{Variance}

We assume that the inner problem is quadratic, and that we use antithetic sampling.
Given a single particle pair for antithetic sampling, we have the following estimator:
\begin{align}
    \gesgen
    =
    \frac{1}{\alpha^2 \sigma^2 + \beta^2 \sigma^2} \sum_{t=1}^T (\alpha \boldepsilon_s + \beta \boldxi_t) \text{vec}(\alpha \boldepsilon + \beta \boldepsilon_{1..t})^\top \nabla_{\text{vec}(\Theta_{1..t})} L_t(\Theta)
\end{align}

Similarly to~\citet{maheswaranathan2019guided} and~\citet{vicol2021unbiased}, we quantify the variance of $\gesgen$ using the total variance $\tr(\Var(\gesgen))$:
\begin{align}
    \tr(\Var(\gesgen))
    &=
    \tr \left( \mathbb{E}_{\boldepsilon} \left[ \gesgen \gesgen^\top \right] - \mathbb{E}\left[ \gesgen \right] \mathbb{E}\left[ \gesgen \right]^\top \right) \\
    &=
    \underbrace{\mathbb{E}_{\boldepsilon} \left[ \gesgen^\top \gesgen \right]}_{\num{1}} - \underbrace{\mathbb{E}_{\boldepsilon} \left[ \gesgen \right]^\top \mathbb{E}_{\boldepsilon} \left[ \gesgen \right]}_{\num{2}}
\end{align}
Term $\num{2}$ is simple, because $\gesgen$is unbiased, so $\mathbb{E}_{\boldepsilon} \left[ \gesgen \right] = \nabla_{\boldtheta} L(\Theta)$.
Thus,
\begin{align}
    \num{2} = \mathbb{E}_{\boldepsilon} \left[ \gesgen \right]^\top \mathbb{E}_{\boldepsilon} \left[ \gesgen \right]
    =
    \nabla_{\boldtheta} L(\Theta)^\top \nabla_{\boldtheta} L(\Theta)
    =
    \| \nabla_{\boldtheta} L(\Theta) \|^2
\end{align}

To deal with term $\num{1}$, we will decompose $\gesgen^\top \gesgen$ into simpler expressions and use the linearity of expectation to compute each component.
We will use the shorthand $\boldv_t \equiv \text{vec}(\alpha \boldepsilon_s + \beta \boldepsilon_{1..t})$ and $\boldg_t \equiv \nabla_{\text{vec}(\Theta_{1..t})} L_t(\Theta)$.
Note that $\boldv_t^\top \boldg_t = \sum_{\tau=1}^t (\alpha \boldepsilon_s + \beta \boldepsilon_\tau)^\top \nabla_{\boldepsilon_\tau} L_t(\Theta)$.
Next, we expand out $\gesgen^\top \gesgen$ using this shorthand:
\begin{align}
    \gesgen^\top \gesgen
    &=
    \frac{1}{(\alpha^2 \sigma^2 + \beta^2 \sigma^2)} \left( \sum_{t=1}^T (\alpha \boldepsilon_s + \beta \boldxi_t) \boldv_t^\top \boldg_t \right)^\top \left( \sum_{t=1}^T (\alpha \boldepsilon_s + \beta \boldxi_t) \boldv_t^\top \boldg_t \right) \\
    % &=
    % \frac{1}{(\alpha^2 \sigma^2 + \beta^2 \sigma^2}
    % \left( (\alpha \boldepsilon_s + \beta \boldxi_1) \boldv_1^\top \boldg_1 + \cdots + (\alpha \boldepsilon_s + \beta \boldxi_T) \boldv_T^\top \boldg_T \right)^\top \left( (\alpha \boldepsilon_s + \beta \boldxi_1) \boldv_1^\top \boldg_1 + \cdots + (\alpha \boldepsilon_s + \beta \boldxi_T) \boldv_T^\top \boldg_T \right) \\
    &=
    \frac{1}{(\alpha^2 \sigma^2 + \beta^2 \sigma^2)} \left[ \underbrace{\boldg_1^\top \boldv_1 (\alpha \boldepsilon_s + \beta \boldxi_1)^\top (\alpha \boldepsilon_s + \beta \boldxi_1) \boldv^\top \boldg_1}_{\num{a}}
    +
    \underbrace{\boldg_1^\top \boldv_1 (\alpha \boldepsilon_s + \beta \boldxi_1)^\top (\alpha \boldepsilon_s + \beta \boldxi_2) \boldv_2^\top \boldg_2}_{\num{b}} + \cdots
    \right]
\end{align}

There are two types of terms in this expression: terms of type $\num{a}$, that have the form $\boldg_i^\top \boldv_i (\alpha \boldepsilon_s + \beta \boldxi_i)^\top (\alpha \boldepsilon_s + \beta \boldxi_i) \boldv_i^\top \boldg_i$, and terms of type $\num{b}$ that have the form $\boldg_i^\top \boldv_i (\alpha \boldepsilon_s + \beta \boldxi_i)^\top (\alpha \boldepsilon_s + \beta \boldxi_j) \boldv_j^\top \boldg_j$ where $i \neq j$.

\subsubsection{Terms of type $\num{a}$.}
First, note that $(\alpha \boldepsilon_s + \beta \boldxi_i)^\top (\alpha \boldepsilon_s + \beta \boldxi_i)$ can be expanded as follows:
\begin{align}
    (\alpha \boldepsilon_s + \beta \boldxi_i)^\top (\alpha \boldepsilon_s + \beta \boldxi_i)
    &=
    \alpha^2 \boldepsilon_s^\top \boldepsilon_s + 2 \alpha \beta \boldepsilon_s^\top (\boldepsilon_1 + \cdots + \boldepsilon_i) + \beta^2 \left( \sum_{m=1}^i \boldepsilon_m^\top \boldepsilon_m + \sum_{m \leq i, n \leq i, m \neq n} \boldepsilon_m^\top \boldepsilon_n \right)
\end{align}
To simplify notation, we will use the shorthand $W = \sum_{m=1}^i (\alpha \boldepsilon_s + \beta \boldepsilon_m)^\top \nabla_{\boldtheta_m} L_i(\Theta)$.
Then, for terms of type $\num{a}$, we need to compute:
\begin{align}
    \underbrace{W^\top (\alpha^2 \boldepsilon_s^\top \boldepsilon_s) W}_{\num{i}}
    +
    \underbrace{W^\top (2 \alpha \beta \boldepsilon_s^\top (\boldepsilon_1 + \cdots + \boldepsilon_i)) W}_{\num{ii}}
    +
    \underbrace{\beta^2 W^\top \left( \sum_{m=1}^i \boldepsilon_m^\top \boldepsilon_m \right) W}_{\num{iii}}
    +
    \underbrace{\beta^2 W^\top \left( \sum_{m \leq i, n \leq i, m \neq n} \boldepsilon_m^\top \boldepsilon_n \right) W}_{\num{iv}}
\end{align}

\paragraph{Term $\num{i}$.}
There are two types of terms that have non-zero expectation: ones with the structure $\boldepsilon_s \boldepsilon_s^\top \boldepsilon_s \boldepsilon_s^\top$, and ones with the structure $\boldepsilon_m \boldepsilon_s^\top \boldepsilon_s \boldepsilon_m^\top$.
The contribution from the first type of term is:
\begin{align}
    \alpha^4 \sigma^4 (P + 2) \left( \sum_{m=1}^i \nabla_{\boldtheta_m} L_i(\Theta) \right)^\top \left( \sum_{m=1}^i \nabla_{\boldtheta_m} L_i(\Theta) \right)
\end{align}
And the contribution from the second type of term is:
\begin{align}
    \alpha^2 \beta^2 \sigma^4 P \sum_{m=1}^i \nabla_{\boldtheta_m} L_i(\Theta)^\top \nabla_{\boldtheta_m} L_i(\Theta)
\end{align}

\paragraph{Term $\num{ii}$.}
Here, we have two types of terms that have non-zero expectations: ones with the structure $\boldepsilon_s \boldepsilon_s^\top \boldepsilon_m \boldepsilon_m^\top$ and ones with the structure $\boldepsilon_m \boldepsilon_s^\top \boldepsilon_m \boldepsilon_s^\top$.
We will have $i$ terms of each of these sub-types.
The total contribution of these terms is:
\begin{align}
    4 \alpha^2 \beta^2 \sigma^4 \sum_{m=1}^i \left( \sum_{m'=1}^i \nabla_{\boldtheta_{m'}} L_i(\Theta) \right)^\top \nabla_{\boldtheta_m} L_i(\Theta)
    =
    4 \alpha^2 \beta^2 \sigma^4 \left( \sum_{m=1}^i \nabla_{\boldtheta_m} L_i(\Theta) \right)^\top  \left( \sum_{m=1}^i \nabla_{\boldtheta_m} L_i(\Theta) \right)
\end{align}

\paragraph{Term $\num{iii}$.}
Here, we will have non-zero terms of three types: $\boldepsilon_s \boldepsilon_m^\top \boldepsilon_m \boldepsilon_s^\top$, $\boldepsilon_m \boldepsilon_m^\top \boldepsilon_m \boldepsilon_m^\top$, and $\boldepsilon_m \boldepsilon_n^\top \boldepsilon_n \boldepsilon_m^\top$.
The contribution from the first type of term is:
\begin{align}
    \alpha^2 \beta^2 \sigma^4 P i \left( \sum_{m=1}^i \nabla_{\boldtheta_m} L_i(\Theta) \right)^\top \left( \sum_{m=1}^i \nabla_{\boldtheta_m} L_i(\Theta) \right)
\end{align}
The contribution from the second type of term is:
\begin{align}
    \beta^4 \sigma^4 (P + 2) \sum_{m=1}^i \nabla_{\boldtheta_m} L_i(\Theta)^\top \nabla_{\boldtheta_m} L_i(\Theta)
\end{align}
And the contribution from the third type of term ($\boldepsilon_m \boldepsilon_n^\top \boldepsilon_n \boldepsilon_m$) is:
\begin{align}
    \beta^4 \sigma^4 P \sum_{n=1}^i \left( \sum_{m \leq i, m \neq n} \nabla_{\boldtheta_m} L_i(\Theta)^\top \nabla_{\boldtheta_n} L_i(\Theta) \right)
\end{align}

\paragraph{Term $\num{iv}$.}
Here, the nonzero terms arise from two structures, $\boldepsilon_m \boldepsilon_m^\top \boldepsilon_n \boldepsilon_n^\top$ and $\boldepsilon_m \boldepsilon_n^\top \boldepsilon_m \boldepsilon_n^\top$.
The combined contribution from both types of terms is:
\begin{align}
    2 \beta^4 \sigma^4 \sum_{m \leq i, n \leq i, m \neq n} \nabla_{\boldtheta_m} L_i(\Theta)^\top \nabla_{\boldtheta_n} L_i(\Theta)
\end{align}

\subsubsection{Terms of Type $\num{b}$.}
Next, we are interested in terms of type $\num{b}$, which have the form $\boldg_i^\top \boldv_i (\alpha \boldepsilon_s + \beta \boldxi_i)^\top (\alpha \boldepsilon_s + \beta \boldxi_j) \boldv_j^\top \boldg_j$ where $i \neq j$.
In this subsection, we will denote the minimum of $i$ and $j$ by $r \equiv \min(i, j)$.
We can expand $(\alpha \boldepsilon_s + \beta \boldxi_i)^\top (\alpha \boldepsilon_s + \boldxi_j)$ as follows:
\begin{align}
    (\alpha \boldepsilon_s + \beta \boldxi_i)^\top (\alpha \boldepsilon_s + \beta \boldxi_j)
    &=
    (\alpha \boldepsilon_s + \beta (\boldepsilon_1 + \cdots + \boldepsilon_i))^\top (\alpha \boldepsilon_s + \beta (\boldepsilon_1 + \cdots + \boldepsilon_j)) \\
    &=
    \alpha^2 \boldepsilon_s^\top \boldepsilon_s + \alpha \beta \boldepsilon_s^\top (\boldepsilon_1 + \cdots + \boldepsilon_i) + \alpha \beta \boldepsilon_s^\top (\boldepsilon_1 + \cdots + \boldepsilon_j) + \\
    &\quad + \beta^2 (\boldepsilon_1 + \cdots + \boldepsilon_i)^\top (\boldepsilon_1 + \cdots + \boldepsilon_j) \\
    &=
    \underbrace{\alpha \boldepsilon_s^\top \boldepsilon_s}_{\num{i}} + \underbrace{\alpha \beta \boldepsilon_s^\top (\boldepsilon_1 + \cdots + \boldepsilon_i)}_{\num{ii}} + \underbrace{\alpha \beta \boldepsilon_s^\top (\boldepsilon_1 + \cdots + \boldepsilon_j)}_{\num{iii}} \\
    &\quad +
    \beta^2 \Bigg(
    \underbrace{\sum_{m=1}^r \boldepsilon_m^\top \boldepsilon_m}_{\num{iv}}
    +
    \underbrace{\sum_{m \leq i, n \leq j m \neq n} \boldepsilon_m^\top \boldepsilon_n}_{\num{v}}
    \Bigg)
\end{align}

\paragraph{Term $\num{i}$.}
We have two types of terms with non-zero expectations: $\boldepsilon_s \boldepsilon_s^\top \boldepsilon_s \boldepsilon_s^\top$ and $\boldepsilon_m \boldepsilon_s^\top \boldepsilon_s \boldepsilon_m^\top$.
In particular, we have a single instance of the former type of term, and $r$ instances of the latter type.
The total contribution of the first term is:
\begin{align}
    \alpha^4 \sigma^4 (P + 2) \left( \sum_{m=1}^i \nabla_{\boldtheta_m} L_i(\Theta) \right)^\top \left( \sum_{n=1}^j \nabla_{\boldtheta_n} L_j(\Theta) \right)
\end{align}
The contribution from terms of the second type is:
\begin{align}
    \alpha^2 \beta^2 \sigma^4 P \sum_{m=1}^{r} \nabla_{\boldtheta_m} L_i(\Theta)^\top \nabla_{\boldtheta_m} L_j(\Theta)
\end{align}

\paragraph{Term $\num{ii}$.}
Here, two types of terms have non-zero expectations: we have $r$ terms of type $\boldepsilon_m \boldepsilon_s^\top \boldepsilon_m \boldepsilon_s^\top$ and $r$ terms of type $\boldepsilon_s \boldepsilon_s^\top \boldepsilon_m \boldepsilon_m^\top$.
The total contribution of both terms is:
\begin{align}
    \alpha^2 \beta^2 \sigma^4 \sum_{m=1}^r \left( \sum_{m'=1}^i \nabla_{\boldtheta_{m'}} L_i(\Theta) \right) \nabla_{\boldtheta_m} L_j(\Theta)
    +
    \alpha^2 \beta^2 \sum_{m=1}^r \nabla_{\boldtheta_m} L_i(\Theta)^\top \left( \sum_{m'=1}^j \nabla_{\boldtheta_{m'}} L_j(\Theta) \right)
\end{align}

\paragraph{Term $\num{iii}$.}
Term $\num{iii}$ is symmetrical to term $\num{ii}$, swapping $i$ for $j$.
Thus, its contribution is identical to that of term $\num{ii}$:
\begin{align}
    \alpha^2 \beta^2 \sigma^4 \sum_{m=1}^r \left( \sum_{m'=1}^i \nabla_{\boldtheta_{m'}} L_i(\Theta) \right) \nabla_{\boldtheta_m} L_j(\Theta)
    +
    \alpha^2 \beta^2 \sum_{m=1}^r \nabla_{\boldtheta_m} L_i(\Theta)^\top \left( \sum_{m'=1}^j \nabla_{\boldtheta_{m'}} L_j(\Theta) \right)
\end{align}

\paragraph{Term $\num{iv}$.}
Here, we have three types of terms with non-zero expectation: $r$ terms of the form $\boldepsilon_s \boldepsilon_m^\top \boldepsilon_m \boldepsilon_s^\top$, $r$ terms of the form $\boldepsilon_m \boldepsilon_m^\top \boldepsilon_m \boldepsilon_m^\top$, and $r(r-1)$ terms of the form $\boldepsilon_n \boldepsilon_m^\top \boldepsilon_m \boldepsilon_n^\top$ where $n \neq m$.

The contribution from the first type, $\boldepsilon_s \boldepsilon_m^\top \boldepsilon_m \boldepsilon_s^\top$, is:
\begin{align}
    \alpha^2 \beta^2 \sigma^4 P \sum_{m=1}^r \left( \sum_{m'=1}^i \nabla_{\boldtheta_{m'}} L_i(\Theta) \right)^\top \left( \sum_{n'=1}^j \nabla_{\boldtheta_{n'}} L_j(\Theta) \right)
\end{align}

The contribution from the second type, $\boldepsilon_m \boldepsilon_m^\top \boldepsilon_m \boldepsilon_m^\top$, is:
\begin{align}
    \beta^4 \sigma^4 (P + 2) \sum_{m=1}^r \nabla_{\boldtheta_m} L_i(\Theta)^\top \nabla_{\boldtheta_m} L_j(\Theta)
\end{align}

The contribution from the third type, $\boldepsilon_n \boldepsilon_m^\top \boldepsilon_m \boldepsilon_n^\top$ where $n \neq m$, is:
\begin{align}
    \beta^4 \sigma^4 P \sum_{n=1}^r \left( \sum_{m \leq r, m \neq n} \nabla_{\boldtheta_m} L_i(\Theta)^\top \nabla_{\boldtheta_m} L_j(\Theta) \right)
\end{align}

\paragraph{Term $\num{v}$.}
Here, we have two types of terms with non-zero expectation: $\boldepsilon_n \boldepsilon_m^\top \boldepsilon_n \boldepsilon_m^\top$ and $\boldepsilon_m \boldepsilon_m^\top \boldepsilon_n \boldepsilon_n^\top$.
The contribution of these terms is:
\begin{align}
    \beta^4 \sigma^4 \sum_{m \leq i, n \leq j, m \neq n} \nabla_{\boldtheta_m} L_i(\Theta)^\top \nabla_{\boldtheta_n} L_j(\Theta)
    +
    \beta^4 \sigma^4 \sum_{m \leq i, n \leq j, m \neq n} \nabla_{\boldtheta_n} L_i(\Theta)^\top \nabla_{\boldtheta_m} L_j(\Theta)
\end{align}

\paragraph{Combining Terms.}
Overall, we have $T$ terms of type $\num{a}$, e.g., with structure $\boldg_i^\top \boldv_i (\alpha \boldepsilon_s + \beta \boldxi_i)^\top (\alpha \boldepsilon_s + \beta \boldxi_i) \boldv_i^\top \boldg_i$, with total contribution:
\begin{align}
    & \sum_{i=1}^T
    \Bigg(
    \alpha^4 \sigma^4 (P + 2) \left( \sum_{m=1}^i \nabla_{\boldtheta_m} L_i(\Theta) \right)^\top \left( \sum_{m=1}^i \nabla_{\boldtheta_m} L_i(\Theta) \right) \\
    &\qquad + 
    \alpha^2 \beta^2 \sigma^4 P \sum_{m=1}^i \nabla_{\boldtheta_m} L_i(\Theta)^\top \nabla_{\boldtheta_m} L_i(\Theta)
    +
    4 \alpha^2 \beta^2 \sigma^4 \left( \sum_{m=1}^i \nabla_{\boldtheta_m} L_i(\Theta) \right)^\top  \left( \sum_{m=1}^i \nabla_{\boldtheta_m} L_i(\Theta)
    \right) \\
    &\qquad + 
    \alpha^2 \beta^2 \sigma^4 P i \left( \sum_{m=1}^i \nabla_{\boldtheta_m} L_i(\Theta) \right)^\top \left( \sum_{m=1}^i \nabla_{\boldtheta_m} L_i(\Theta) \right)
    +
    \beta^4 \sigma^4 (P + 2) \sum_{m=1}^i \nabla_{\boldtheta_m} L_i(\Theta)^\top \nabla_{\boldtheta_m} L_i(\Theta)
    \\
    &\qquad +
    \beta^4 \sigma^4 P \sum_{n=1}^i \left( \sum_{m \leq i, m \neq n} \nabla_{\boldtheta_m} L_i(\Theta)^\top \nabla_{\boldtheta_n} L_i(\Theta) \right)
    +
    2 \beta^4 \sigma^4 \sum_{m \leq i, n \leq i, m \neq n} \nabla_{\boldtheta_m} L_i(\Theta)^\top \nabla_{\boldtheta_n} L_i(\Theta)
    % &\qquad +
    % \beta^4 \sigma^4 (P + 2) \sum_{m=1}^i \nabla_{\boldtheta_m} L_i(\Theta)^\top \nabla_{\boldtheta_m} L_i(\Theta)
    \Bigg)
\end{align}

In addition, we have several terms of type $\num{b}$, e.g., with structure $\boldg_i^\top \boldv_i (\alpha \boldepsilon_s + \beta \boldxi_i)^\top (\alpha \boldepsilon_s + \beta \boldxi_j)^\top \boldv_j^\top \boldg_j$, with total contribution:
\begin{align}
    \sum_{i \neq j}
    \Bigg(
    &
    \alpha^4 \sigma^4 (P + 2) \left( \sum_{m=1}^i \nabla_{\boldtheta_m} L_i(\Theta) \right)^\top \left( \sum_{n=1}^j \nabla_{\boldtheta_n} L_j(\Theta) \right)
    +
    \alpha^2 \beta^2 \sigma^4 P \sum_{m=1}^{r} \nabla_{\boldtheta_m} L_i(\Theta)^\top \nabla_{\boldtheta_m} L_j(\Theta) \\
    &\quad +
    2 \alpha^2 \beta^2 \sigma^4 \sum_{m=1}^r \left( \sum_{m'=1}^i \nabla_{\boldtheta_{m'}} L_i(\Theta) \right) \nabla_{\boldtheta_m} L_j(\Theta)
    +
    2 \alpha^2 \beta^2 \sum_{m=1}^r \nabla_{\boldtheta_m} L_i(\Theta)^\top \left( \sum_{m'=1}^j \nabla_{\boldtheta_{m'}} L_j(\Theta) \right) \\
    &\quad +
    \alpha^2 \beta^2 \sigma^4 P \sum_{m=1}^r \left( \sum_{m'=1}^i \nabla_{\boldtheta_{m'}} L_i(\Theta) \right)^\top \left( \sum_{n'=1}^j \nabla_{\boldtheta_{n'}} L_j(\Theta) \right) \\
    &\quad +
    \beta^4 \sigma^4 (P + 2) \sum_{m=1}^r \nabla_{\boldtheta_m} L_i(\Theta)^\top \nabla_{\boldtheta_m} L_j(\Theta)
    +
    \beta^4 \sigma^4 P \sum_{n=1}^r \left( \sum_{m \leq r, m \neq n} \nabla_{\boldtheta_m} L_i(\Theta)^\top \nabla_{\boldtheta_m} L_j(\Theta) \right) \\
    &\quad +
    \beta^4 \sigma^4 \sum_{m \leq i, n \leq j, m \neq n} \nabla_{\boldtheta_m} L_i(\Theta)^\top \nabla_{\boldtheta_n} L_j(\Theta)
    +
    \beta^4 \sigma^4 \sum_{m \leq i, n \leq j, m \neq n} \nabla_{\boldtheta_n} L_i(\Theta)^\top \nabla_{\boldtheta_m} L_j(\Theta)
    \Bigg)
\end{align}

Setting $\alpha=0, \beta=1$ recovers the PES variance, while setting $\alpha=1, \beta=0$ recovers the variance of ES-Single.
Note that most of the terms in the variance of this generalized estimator include coefficient $\beta$; thus, intuitively one would expect that setting $\beta=0$ (for ES-Single) has reduced variance relative to PES.

\section{Code}
\label{app:code}
Code Listing~\ref{lst:code} provides a self-contained JAX implementation of the ES-Single gradient estimator, that reproduces the result in Figure~\ref{fig:toy-regression}.

\begin{lstlisting}[language=Python, caption={Self-contained implementation of ES-Single in JAX, for the 2D regression problem in Figure~\ref{fig:toy-regression}.}, label={lst:code}]
from functools import partial
import jax
import jax.numpy as jnp

import optax

def loss(x):
    """Inner loss."""
    return jnp.sqrt(x[0]**2 + 5) - jnp.sqrt(5) + jnp.sin(x[1])**2 * \
              jnp.exp(-5*x[0]**2) + 0.25*jnp.abs(x[1] - 100)

# Gradient of inner loss
loss_grad = jax.grad(loss)

def update(state, i):
    """Performs a single inner problem update, e.g., a single unroll step.
    """
    (L, x, theta, t_curr, T, K) = state
    lr = jnp.exp(theta[0]) * (T - t_curr) / T + jnp.exp(theta[1]) * t_curr / T
    x = x - lr * loss_grad(x)
    L += loss(x) * (t_curr < T)
    t_curr += 1
    return (L, x, theta, t_curr, T, K), x

@partial(jax.jit, static_argnames=('T', 'K'))
def unroll(x_init, theta, t0, T, K):
    """Unroll the inner problem for K steps.

    Args:
      x_init: the initial state for the unroll
      theta: a 2-dimensional array of outer parameters (log_init_lr, log_final_lr)
      t0: initial time step to unroll from
      T: maximum number of steps for the inner problem
      K: number of steps to unroll
    
    Returns:
      L: the loss resulting from the unroll
      x_curr: the updated state at the end of the unroll
    """
    L = 0.0
    initial_state = (L, x_init, theta, t0, T, K)
    state, outputs = jax.lax.scan(update, initial_state, None, length=K)
    (L, x_curr, theta, t_curr, T, K) = state
    return L, x_curr

@partial(jax.jit, static_argnames=('T', 'K', 'sigma', 'N'))
def es_single_grad(key, xs, theta, t0, T, K, sigma, N):
    """Compute ES-Single gradient estimate.

    Args:
      key: JAX PRNG key
      xs: Nx2 array of particles/states to be updated
      theta: a 2-dimensional array of outer parameters (log_init_lr, log_final_lr)
      t0: initial time step for the current unroll
      T: maximum number of steps for the inner problem
      K: truncation length for the unroll
      sigma: standard deviation of the Gaussian perturbations
      N: number of perturbations (as N//2 antithetic pairs)
    
    Returns:
      theta_grad: ES-Single gradient estimate
      xs: Nx2 array of updates particles/states
    """
    # Generate antithetic perturbations
    pos_perts = jax.random.normal(key, (N//2, theta.shape[0])) * sigma  # Antithetic pos
    neg_perts = -pos_perts  # Antithetic neg
    perts = jnp.concatenate([pos_perts, neg_perts], axis=0)

    # Unroll the inner problem for K steps using the antithetic perturbations of theta
    L, xs = jax.vmap(unroll, in_axes=(0,0,None,None,None))(xs, theta + perts, t0, T, K)
    # Compute the ES-Single gradient estimate
    theta_grad = jnp.mean(perts * L.reshape(-1, 1) / (sigma**2), axis=0)
    return theta_grad, xs

T = 100      # Total inner problem length
K = 10       # Truncation length for partial unrolls
N = 100      # Number of particles in total (N//2 antithetic pairs)
sigma = 0.1  # Standard deviation of perturbations

t = 0
theta = jnp.log(jnp.array([0.01, 0.01]))
x = jnp.array([1.0, 1.0])
xs = jnp.ones((N, 2)) * jnp.array([1.0, 1.0])

optimizer = optax.adam(1e-2)
opt_state = optimizer.init(theta)

key = jax.random.PRNGKey(3)
for i in range(10000):
    if t >= T:
        # Reset the inner problem: the inner iteration, inner parameters, and random key
        key, skey = jax.random.split(key)
        t = 0
        xs = jnp.ones((N, 2)) * jnp.array([1.0, 1.0])
        x = jnp.array([1.0, 1.0])

    theta_grad, xs = es_single_grad(key, xs, theta, t, T, K, sigma, N)
    
    updates, opt_state = optimizer.update(theta_grad, opt_state)
    theta = optax.apply_updates(theta, updates)
    
    t += K

    if i % 100 == 0:
        # Run a full unroll for evaluation
        L, _ = unroll(jnp.array([1.0, 1.0]), theta, 0, T, T)
        print(i, jnp.exp(theta), theta_grad, L)
\end{lstlisting}

\end{document}